\title{Learning Neural Contextual Bandits through Perturbed Rewards}
\author{Yiling Jia\textsuperscript{1}, Weitong Zhang\textsuperscript{2}, Dongruo Zhou\textsuperscript{2}, Quanquan Gu\textsuperscript{2}, Hongning Wang\textsuperscript{1} \\
\textsuperscript{1}Department of Computer Science, University of Virginia\\
\textsuperscript{2}Department of Computer Science, University of California, Los Angeles. \\
\texttt{yj9xs@virginia.edu, wt.zhang@ucla.edu, drzhou@cs.ucla.edu,}\\
\texttt{qgu@cs.ucla.edu, hw5x@virginia.edu}}
\def \la {\langle}
\def \ra {\rangle}
\newcommand{\xtat}{\xb_{t, a_t}}
\newcommand{\xtao}{\xb_{t, a_t^*}}
\newcommand{\xsas}{\xb_{s, a_s}}
\newcommand{\rtat}{r_{t, a_t}}
\newcommand{\rsas}{r_{s, a_s}}
\newcommand{\model}{{NPR}}
\begin{document}

\maketitle

\begin{abstract}
Thanks to the power of representation learning, neural contextual bandit algorithms demonstrate remarkable performance improvement against their classical counterparts. But because their exploration has to be performed in the entire neural network parameter space to obtain nearly optimal regret, the resulting computational cost is prohibitively high. 
We perturb the rewards when updating the neural network to eliminate the need of explicit exploration and the corresponding computational overhead. We prove that a $\Tilde{\bO}(\tilde{d}\sqrt{T})$ regret upper bound is still achievable under standard regularity conditions, where $T$ is the number of rounds of interactions and $\tilde{d}$ is the effective dimension of a neural tangent kernel matrix. 
Extensive comparisons with several benchmark contextual bandit algorithms, including two recent neural contextual bandit models, demonstrate the effectiveness and computational efficiency of our proposed neural bandit algorithm.
\end{abstract}

\section{Introduction}


Contextual bandit is a well-formulated abstraction of many important real-world problems, including content recommendation \citep{li2010contextual,wu2016contextual}, online advertising \citep{schwartz2017customer,nuara2018combinatorial}, and mobile health \citep{lei2017actor,tewari2017ads}. In such problems, an agent iteratively interacts with an environment to maximize its accumulated rewards over time. Its essence is sequential decision-making under uncertainty. Because the reward from the environment for a chosen action (also referred to as an arm in literature) under each context is stochastic, a no-regret learning algorithm needs to explore the problem space for improved reward estimation, i.e., learning the mapping from an arm and its context\footnote{When no ambiguity is invoked, we refer to the feature vector for an arm and its context as a context vector.} to the expected reward.

Linear contextual bandit algorithms \citep{abbasi2011improved,li2010contextual}, which assume the reward mapping is a linear function of the context vector, dominate the community's attention in the study of contextual bandits. Though theoretically sound and practically effective, their linear reward mapping assumption is incompetent to capture possible complex non-linear relations between the context vector and reward. This motivated the extended studies in parametric bandits, such as generalized linear bandits \citep{filippi2010parametric,faury2020improved} and kernelized bandits \citep{chowdhury2017kernelized,krause2011contextual}. Recently, to unleash the power of representation learning, deep neural networks (DNN) have also been introduced to learn the underlying reward mapping directly. In~\citep{zahavy2019deep, riquelme2018deep, xu2020neural}, a deep neural network is applied to provide a feature mapping, and exploration is performed at the last layer. NeuralUCB~\citep{zhou2020neural} and NeuralTS~\citep{zhang2020neural} explore the entire neural network parameter space to obtain nearly optimal regret using the neural tangent kernel technique~\citep{jacot2018neural}.
These neural contextual bandit algorithms significantly boosted empirical performance compared to their classical counterparts.

Nevertheless, a major practical concern of existing neural contextual bandit algorithms is their added computational cost when performing exploration. Take the recently developed NeuralUCB and NeuralTS for example. Their construction of the high-probability confidence set for model exploration depends on the dimensionality of the network parameters and the learned context vectors' representations, which is often very large for DNNs. For instance, a performing neural bandit solution often has the number of parameters in the order of 100 thousands (if not less). It is prohibitively expensive to compute inverse of the induced covariance matrix on such a huge number of parameters, as required by their construction of confidence set. As a result, approximations, e.g., only using the diagonal of the covariance matrix \citep{zhou2020neural,zhang2020neural}, are employed to make such algorithms operational in practice. But there is no theoretical guarantee for such diagonal approximations, which directly leads to the gap between the theoretical and empirical performance of the neural bandit algorithms. 

In this work, to alleviate the computational overhead caused by the expensive exploration, we propose to eliminate explicit model exploration by learning a neural bandit model with perturbed rewards. At each round of model update, we inject pseudo noise generated from a zero-mean Gaussian distribution to the observed reward history. With the induced randomization, sufficient exploration is achieved when simply pulling the arm with the highest estimated reward. 
This brings in considerable advantage over existing neural bandit algorithms: no additional computational cost is needed to obtain no regret. 
We rigorously prove that with a high probability the algorithm obtains a $\Tilde{\bO}(\tilde{d}\sqrt{T})$ regret, where $\tilde{d}$ is the effective dimension of a neural tangent kernel matrix and $T$ is the number of rounds of interactions. This result recovers existing regret bounds for the linear setting where the effective dimension equals to the input feature dimension. Besides, our extensive empirical evaluations demonstrate the strong advantage in efficiency and effectiveness of our solution against a rich set of state-of-the-art contextual bandit solutions over both synthetic and real-world datasets.
\section{Related Work}

Most recently, attempts have been made to incorporate DNNs with contextual bandit algorithms. Several existing work study neural-linear bandits~\citep{riquelme2018deep, zahavy2019deep}, where exploration is performed on the last layer of the DNN. Under the neural tangent kernel (NTK) framework \citep{jacot2018neural},
NeuralUCB~\citep{zhou2020neural} constructs confidence sets with DNN-based random feature mappings to perform upper confidence bound based exploration. NeuralTS~\citep{zhang2020neural} samples from the posterior distribution constructed with a similar technique. However, as the exploration is performed in the induced random feature space, the added computational overhead is prohibitively high, which makes such solutions impractical. The authors suggested diagonal approximations of the resulting covariance matrix, which however leave the promised theoretical guarantees of those algorithms up in the air.

Reward perturbation based exploration has been studied in a number of classical bandit models \citep{kveton19perturbed,kveton20randomized,kveton2019garbage}. In a context-free $k$-armed bandit setting,  
\citet{kveton2019garbage} proposed to estimate each arm's reward over a perturbed history and select the arm with the highest estimated reward at each round. Such an arm selection strategy is proved to be optimistic with a sufficiently high probability.
Later this strategy has been extended to linear and generalized linear bandits \citep{kveton19perturbed,kveton20randomized}.
In \citep{kveton20randomized}, the authors suggested its application to neural network models; but only some simple empirical evaluations were provided, without any theoretical justifications.
Our work for the first time provides a rigorous regret analysis of neural contextual bandits with the perturbation-based exploration: a sublinear regret bound is still achievable in terms of the number of interactions between the agent and environment.

\section{Neural Bandit Learning with Perturbed Rewards}
\label{sec:method}

We study the problem of contextual bandit with finite $K$ arms, where each arm is associated with a $d$-dimensional context vector: $\xb_i \in \RR^d$ for $i \in [K]$. At each round $t \in [T]$, the agent needs to select one of the arms, denoted as $a_t$, and receives its reward $r_{a_t, t}$, which is 
generated as $r_{a_t, t} = h(\xb_{a_t}) + \eta_{t}$. In particular, $h(\xb)$ represents the unknown underlying reward mapping function satisfying $0 \leq h(\xb) \leq 1$ for any $\xb$, and $\eta_{t}$ is an $R$-sub-Gaussian random variable that satisfies $\EE[\exp(\mu\eta_{t})] \leq \exp[\mu^2R^2]$ for all $\mu \geq 0$. The goal is to minimize pseudo regret over $T$ rounds:
\begin{equation}
    R_T = \EE\left[\sum\nolimits_{t=1}^T(r_{a^*}- \rtat)\right],
\label{eqn:regret}
\end{equation}
where $a^*$ is the optimal arm with the maximum expected reward. 

To deal with the potential non-linearity of $h(\xb)$ and unleash the representation learning power of DNNs, we adopt a fully connected neural network $f(\xb; \btheta)$ to approximate $h(\xb)$: $f(\xb; \btheta) = \sqrt{m}\Wb_L \phi\Big(\Wb_{L-1}\phi\big(\cdots \phi (\Wb_1\xb)\big)\Big)$, where $\phi(x) = \text{ReLU}(x)$, $\btheta = [\text{vec}(\Wb_1),\dots,\text{vec}(\Wb_L)] \in \RR^{p}$ with $p = m+md+m^2 (L-1)$, and depth $L \geq 2$. Each hidden layer is assumed to have the same width (i.e., $m$) for convenience in later analysis; but this does not affect the conclusion of our theoretical analysis. 

Existing neural bandit solutions perform explicit exploration in the entire model space \citep{zhou2020neural,zhang2020neural,zahavy2019deep, riquelme2018deep, xu2020neural}, which introduces prohibitive computational cost. And oftentimes the overhead is so high that approximation has to be employed \citep{zhou2020neural,zhang2020neural}, which unfortunately breaks the theoretical promise of these algorithms. 
In our proposed model, to eliminate such explicit model exploration in neural bandit, a randomization strategy is introduced in the neural network update. We name the resulting solution as Neural bandit with Perturbed Rewards, or \model{} in short. In \model{}, at round $t$, the neural model is learned with the $t$ rewards perturbed with designed perturbations:
\begin{align}
    \min_{\btheta}\cL(\btheta) = \sum\nolimits_{s=1}^{t}\big(f(\xb_{a_s}; \btheta) - (r_{s,a_s} + \gamma_s^t)\big)^2/2+ {m\lambda}\|\btheta - \btheta_0\|_2^2/2
    \label{eqn:obj}
\end{align}
where $\{\gamma_s^t\}_{s=1}^t \sim \cN(0, \sigma^2)$ are Gaussian random variables that are independently sampled in each round $t$, and $\sigma$ is a hyper-parameter that controls the strength of perturbation (and thus the exploration) in \model{}. We use an $l_2$-regularized square loss for model estimation, where the regularization centers at the randomly initialization $\btheta_0$ with the trade-off parameter $\lambda$. 

\begin{algorithm*}[t]
	\caption{Neural bandit with perturbed reward (\model{})}\label{alg:phe}
	\begin{algorithmic}[1]
	\STATE \textbf{Input:} Number of rounds $T$, regularization coefficient $\lambda$, perturbation parameter $\nu$, network width $m$, network depth $L$.
	\STATE \textbf{Initialization:} $\btheta_0 = (\text{vec}(\Wb_1),\dots,\text{vec}(\Wb_L)] \in \RR^{p}$ with Gaussian distribution: for $1 \leq l \leq L-1$, $\Wb_l = (\Wb, 0; 0, \Wb)$ with each entry of $\Wb$ sampled independently from $\cN(0, 4/m)$; $\Wb_L = (\wb^\top, -\wb^\top)$ with each entry of $\wb$ sampled independently from $\cN(0, 2/m)$.
	\FOR{$t=1, \dots, T$}
        \IF{$t > K$}
            \STATE Pull arm $a_t$ and receive reward $\rtat$, where $a_t = \argmax_{i \in [K]} f(\xb_i, \btheta_{t-1})$.
            \STATE Generate $\{\gamma_s^t\}_{s \in [t]} \sim \cN(0, \nu^2)$.
            \STATE Set $\btheta_t$ by the output of gradient descent for solving Eq~\eqref{eqn:obj}.
        \ELSE
            \STATE Pull arm $a_k$.
        \ENDIF
	\ENDFOR
	\end{algorithmic}
\end{algorithm*}

The detailed procedure of \model{} is given in Algorithm~\ref{alg:phe}. The algorithm starts by pulling all candidate arms once. This guarantees that for any arm, \model{} is sufficiently optimistic compared to the true reward with respect to its approximation error (Lemma~\ref{lemma:anticoncentration}). When all the $K$ arms have been pulled once, the algorithm pulls the arm with the highest estimated reward, $a_t = \argmax_i f(\xb_i; \btheta_{t-1})$. Once received the feedback $r_{a_t, t}$, the model perturbs the entire reward history so far via a freshly sampled noise sequence $\{\gamma_s^t\}_{s=1}^{t}$, and updates the neural network by $\{(\xb_{a_s}, r_{a_s, s} + \gamma_s^t)\}_{s=1}^t$ using gradient descent. In the regret analysis in Section~\ref{sec:regret}, we prove that the variance from the added $\{\gamma_s^t\}_{s\in[t]}$ will lead to the necessary optimism for exploration \citep{abbasi2011improved}. We adopt gradient descent for analysis convenience, while stochastic gradient descent can also be used to solve the optimization problem with a similar theoretical guarantee based on recent works~\citep{allen2018convergence, zou2018stochastic}.


Compared with existing neural contextual bandit algorithms \citep{zhou2020neural,zhang2020neural,zahavy2019deep, riquelme2018deep, xu2020neural}, \model{} does not need any added computation for model exploration, besides the regular neural network update. This greatly alleviates the overhead for the computation resources (both space and time) and makes \model{} sufficiently general to be applied for practical problems. More importantly, our theoretical analysis directly corresponds to its actual behavior when applied, as no approximation is needed. 
\section{Regret Analysis}
\label{sec:regret}

In this section, we provide finite time regret analysis of \model{}, where the time horizon $T$ is set beforehand. The theoretical analysis is built on the recent studies about the generalization of DNN models~\citep{cao2019generalization1, cao2019generalization2, chen2019much, daniely2017sgd, arora2019exact}, which illustrate that with (stochastic) gradient descent, the learned parameters of a DNN locate in a particular regime with the generalization error being characterized by the best function in the corresponding neural tangent kernel (NTK) space~\citep{jacot2018neural}. We leave the background of NTK in the appendix and focus on the key steps of our proof in this section. 

The analysis starts with the following lemma for the set of context vectors $\{\xb_i\}_{i=1}^K$.
\begin{lemma}\label{lemma:linear}
There exists a positive constant $\bar C$ such that for any $\delta \in (0,1)$, with probability at least $1-\delta$, when $m \geq \bar CK^4L^6\log(K^2L/\delta)/\lambda_0^4$, there exists a $\btheta^* \in \RR^p$, for all $i \in [K]$,
\begin{align}
    h(\xb_i) = \la \gb(\xb_i; \btheta_0), \btheta^* - \btheta_0\ra, \quad \sqrt{m}\|\btheta^* - \btheta_0\|_2 \leq \sqrt{2\hb^\top\Hb^{-1}\hb}, \label{lemma:ntkequal}
\end{align}
where $\Hb$ is the NTK matrix defined on the context set and $\hb = (h(\xb_1), \dots, h(\xb_K))$. Details of $\Hb$ can be found in the appendix. 
\end{lemma}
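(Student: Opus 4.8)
The plan is to produce $\btheta^*$ as the minimum-norm interpolant of the constraints $h(\xb_i) = \la \gb(\xb_i;\btheta_0), \btheta^*-\btheta_0\ra$ and then bound its norm by comparing the finite-width gradient Gram matrix against the population NTK matrix $\Hb$. First I would normalize the gradient features, writing $\Zb := [\gb(\xb_1;\btheta_0),\dots,\gb(\xb_K;\btheta_0)]/\sqrt{m} \in \RR^{p\times K}$ and setting $\ub := \sqrt{m}(\btheta^*-\btheta_0)$. With $\Gb := \Zb^\top\Zb$ denoting the $K\times K$ empirical Gram matrix, the interpolation requirement becomes the linear system $\Zb^\top\ub = \hb$, whose minimum-norm solution is $\ub = \Zb\Gb^{-1}\hb$ whenever $\Gb$ is invertible. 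Defining $\btheta^* := \btheta_0 + \ub/\sqrt{m}$ then satisfies the first identity exactly and gives $\|\ub\|_2^2 = \hb^\top\Gb^{-1}\hb$, so the claimed bound $\sqrt{m}\|\btheta^*-\btheta_0\|_2 = \|\ub\|_2 \leq \sqrt{2\hb^\top\Hb^{-1}\hb}$ reduces to establishing $\hb^\top\Gb^{-1}\hb \leq 2\,\hb^\top\Hb^{-1}\hb$.

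The core of the argument is a concentration bound showing that the empirical Gram matrix $\Gb$, whose entries are the finite-width inner products $\la\gb(\xb_i;\btheta_0),\gb(\xb_j;\btheta_0)\ra/m$, is close to the population NTK matrix $\Hb$. I would invoke the standard overparameterization result for these gradient features (following \citet{arora2019exact} and its bandit adaptations in \citet{zhou2020neural,zhang2020neural}): with probability at least $1-\delta$ over the random initialization $\btheta_0$, once the width satisfies $m \geq \bar CK^4L^6\log(K^2L/\delta)/\lambda_0^4$, the deviation is controlled in spectral norm by $\|\Gb-\Hb\|_2 \leq \lambda_0/2$, where $\lambda_0>0$ is the smallest eigenvalue of $\Hb$. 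The width requirement is precisely calibrated so that this deviation sits below half the spectral gap of $\Hb$, and this is where the explicit dependence on $K$, $L$, and $\lambda_0$ originates.

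Conditioned on this event the remainder is elementary linear algebra: since $\Hb\succeq\lambda_0\Ib$ and $\|\Gb-\Hb\|_2\leq\lambda_0/2$, we obtain $\Gb\succeq(\lambda_0/2)\Ib\succ0$, so $\Gb$ is invertible and $\ub$ is well defined, and moreover $\Gb^{-1}\preceq 2\Hb^{-1}$ in the positive-semidefinite order; sandwiching with $\hb$ yields $\hb^\top\Gb^{-1}\hb\leq 2\,\hb^\top\Hb^{-1}\hb$, completing the proof. I expect the main obstacle to be the concentration step $\|\Gb-\Hb\|_2\leq\lambda_0/2$, since it requires tracking how the finite-width gradient inner products deviate from their infinite-width NTK limits across all $L$ layers, together with a union bound over the $O(K^2)$ context pairs that produces the $\log(K^2L/\delta)$ factor; the min-norm construction and the $\Gb^{-1}\preceq2\Hb^{-1}$ inversion bound are then routine.
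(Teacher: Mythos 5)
Your proposal is correct and follows essentially the same route as the proof this paper relies on (this lemma is Lemma~5.1 of \citet{zhou2020neural}, imported without reproof): build the minimum-norm interpolant from the normalized gradient features, apply the Gram-to-NTK concentration bound (the paper's Lemma~\ref{lemma:ntkconverge}) with $\epsilon = \lambda_0/(2K)$ --- which is exactly where the width requirement $m \geq \bar C K^4L^6\log(K^2L/\delta)/\lambda_0^4$ comes from --- and conclude $\Gb^{-1} \preceq 2\Hb^{-1}$. The only cosmetic point is that the psd inversion step really uses $\Gb \succeq \Hb - (\lambda_0/2)\Ib \succeq \Hb/2$ rather than just $\Gb \succeq (\lambda_0/2)\Ib$, but this follows immediately from the two premises you already state.
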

This lemma suggests that with a satisfied neural network width $m$, with high probability, the underlying reward mapping function can be approximated by a linear function over $\gb(\xb^i;\btheta_0)$, parameterized by $\btheta^* - \btheta_0$, where $\gb(\xb^i;\btheta_0) = \nabla_{\btheta}f(\xb; \btheta_0) \in \RR^d$ is the gradient of the initial neural network. We also define the covariance matrix $\Ab_t$ at round $t$ as,
\begin{equation}
\label{eq_covar}
    \Ab_t = \sum\nolimits_{s=1}^{t-1} \gb(\xsas; \btheta_0) \gb(\xsas; \btheta_0)^\top/m + \lambda\Ib
\end{equation}
where $\lambda > 0$ is the $l_2$ regularization parameter in Eq \eqref{eqn:obj}. Discussed before, there exist two kinds of noises in the training samples: the observation noise and the perturbation noise. To analyze the effect of each noise in model estimation, two auxiliary least square solutions are introduced, $\Ab_t^{-1}\bar{\bbb}_t$ and $\Ab_t^{-1}\bbb_t$ with
\begin{align*}
    \bar{\bbb}_t = \sum\nolimits_{s=1}^{t-1}(\rsas + \EE[\gamma_s^{t-1}])\gb(\xsas; \btheta_0)/\sqrt{m}, 
    \quad  \bbb_t = \sum\nolimits_{s=1}^{t-1}(\rsas + \gamma_s^{t-1})\gb(\xsas; \btheta_0)/\sqrt{m}.
\end{align*}
These two auxiliary solutions isolate the induced deviations: the first least square solution only contains the deviation caused by observation noise $\{\eta_{s}\}_{s=1}^{t-1}$; and the second least square solution has an extra deviation caused by $\{\gamma_s^{t-1}\}_{s=1}^{t-1}$. 


To analysis the estimation error of the neural network, we first define the following events. At round $t$, we have event $E_{t, 1}$ defined as:
\begin{align*}
    E_{t, 1} = \Big\{\forall i\in [K], | \la\gb(\xb_{t, i};\btheta_0), \Ab_t^{-1}\bar{\bbb}_t/\sqrt{m}\ra - h(\xb_{t, i})| \leq \alpha_{t}\|\gb(\xb_{t, i};\btheta_0)\|_{\Ab_t^{-1}}\Big\}.
\end{align*}
According to Lemma~\ref{lemma:linear} and the definition of $\bar\bbb_t$, under event $E_{t, 1}$, the deviation caused by the observation noise $\{\eta_s\}_{s=1}^{t-1}$ can be controlled by $\alpha_t$. 
We also need event $E_{t, 2}$ at round $t$ defined as

\begin{align*}
    &E_{t, 2} = \{\forall i \in [K]: |f(\xb_{t, i}; \btheta_{t-1}) - \la\gb(\xb_{t, i};\btheta_0), \Ab_t^{-1}\bar{\bbb}_t/\sqrt{m}\ra| \leq \epsilon(m) + \beta_t\|\gb(\xb_{t, i};\btheta_0)\|_{\Ab_t^{-1}}\},
\end{align*}
where $\epsilon(m)$ is defined as
\begin{align*}
    \epsilon(m) =& C_{\epsilon, 1}m^{-1/6}T^{2/3}\lambda^{-2/3}L^3\sqrt{\log m} + C_{\epsilon, 2}(1 - \eta m\lambda)^J\sqrt{TL/\lambda} \\
    &+ C_{\epsilon, 3}m^{-1/6}T^{5/3}\lambda^{-5/3}L^4\sqrt{\log m}(1 + \sqrt{T/\lambda}),
\end{align*}
with constants $\{C_{\epsilon, i}\}_{i=1}^3$. Event $E_{t, 2}$ considers the situation that the deviation caused by the added noise $\{\gamma_{s}^{t-1}\}_{s=1}^{t-1}$ is under control with parameter $\beta_t$ at round $t$, and the approximation error $\epsilon(m)$ 
is carefully tracked by properly setting the neural network width $m$. 

The following lemmas show that given the history $\cF_{t-1}=\sigma\{\xb_{1},r_{1},\xb_{2},r_{2},...,\xb_{t-1},r_{t-1},\xb_{t}\}$, which is $\sigma$-algebra generated by the previously pulled arms and the observed rewards, with proper setting, events $E_{t, 1}$ and $E_{t, 2}$ happen with a high probability.
\begin{lemma}
\label{lemma:linearconcentration}
For any $t \in [T]$, $\lambda > 0$ and $\delta > 0$, with $\alpha_{t}  = \sqrt{R^2\log\big(\det (\Ab_t)/(\delta^2\det (\lambda \Ib))\big)} + \sqrt{\lambda}S$, we have $\PP(E_{t, 1} | \cF_{t-1}) \geq 1 - \delta$.
\end{lemma}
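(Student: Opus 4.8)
The plan is to recognize $E_{t,1}$ as a standard ridge-regression concentration statement and to close it with the self-normalized martingale tail bound of \citet{abbasi2011improved}. First I would linearize via Lemma~\ref{lemma:linear}. Define rescaled features $\ub_s = \gb(\xsas;\btheta_0)/\sqrt{m}$ and the target $\btheta_* = \sqrt{m}(\btheta^*-\btheta_0)$, so that $h(\xsas) = \la\ub_s,\btheta_*\ra$ and $\|\btheta_*\|_2 = \sqrt{m}\|\btheta^*-\btheta_0\|_2 \le \sqrt{2\hb^\top\Hb^{-1}\hb} =: S$. In this notation $\Ab_t = \sum_{s=1}^{t-1}\ub_s\ub_s^\top + \lambda\Ib$, and because $\EE[\gamma_s^{t-1}]=0$ the vector $\bar\bbb_t$ retains only the observation noise, $\bar\bbb_t = \sum_{s=1}^{t-1}\rsas\ub_s$. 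Hence $\hat\btheta := \Ab_t^{-1}\bar\bbb_t$ is exactly the $\lambda$-regularized ridge estimator for the linear model $\rsas = \la\ub_s,\btheta_*\ra + \eta_s$, which is precisely the purpose of working with $\bar\bbb_t$ rather than $\bbb_t$.

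Next I would reduce the $K$ per-arm inequalities to a single norm bound. For each $i$, writing $\ub_{t,i}=\gb(\xb_{t,i};\btheta_0)/\sqrt{m}$,
\begin{align*}
\la\gb(\xb_{t,i};\btheta_0),\Ab_t^{-1}\bar\bbb_t/\sqrt{m}\ra - h(\xb_{t,i}) = \la\ub_{t,i}, \hat\btheta - \btheta_*\ra,
\end{align*}
and Cauchy--Schwarz in the $\Ab_t$ inner product gives $|\la\ub_{t,i},\hat\btheta-\btheta_*\ra| \le \|\ub_{t,i}\|_{\Ab_t^{-1}}\|\hat\btheta-\btheta_*\|_{\Ab_t}$. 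Since $\|\ub_{t,i}\|_{\Ab_t^{-1}} = m^{-1/2}\|\gb(\xb_{t,i};\btheta_0)\|_{\Ab_t^{-1}} \le \|\gb(\xb_{t,i};\btheta_0)\|_{\Ab_t^{-1}}$, the single event $\{\|\hat\btheta-\btheta_*\|_{\Ab_t}\le\alpha_t\}$ implies every inequality defining $E_{t,1}$ simultaneously; crucially, because it controls the error uniformly over all directions, no union bound over the $K$ arms is needed.

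It then remains to prove $\PP(\|\hat\btheta-\btheta_*\|_{\Ab_t}\le\alpha_t \mid \cF_{t-1})\ge 1-\delta$. Here I would invoke Theorem~2 of \citet{abbasi2011improved}: the feature $\ub_s$ is chosen before $\rsas$ is revealed, hence predictable with respect to the history, while each $\eta_s$ is conditionally $R$-sub-Gaussian and forms a martingale-difference sequence. The self-normalized bound then yields, with probability at least $1-\delta$,
\begin{align*}
\|\hat\btheta-\btheta_*\|_{\Ab_t} \le \sqrt{R^2\log\big(\det(\Ab_t)/(\delta^2\det(\lambda\Ib))\big)} + \sqrt{\lambda}\,\|\btheta_*\|_2 \le \alpha_t,
\end{align*}
where the final step uses $\|\btheta_*\|_2\le S$. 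Combined with the previous paragraph this gives $\PP(E_{t,1}\mid\cF_{t-1})\ge 1-\delta$.

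I expect the main obstacle to be bookkeeping rather than depth: keeping the $\sqrt{m}$ rescaling consistent across the features, the parameter $\btheta_*$, and the norm on the right-hand side, and---more substantively---verifying that the adaptive nature of the algorithm still meets the hypotheses of the self-normalized bound, i.e.\ that the pulled arm $a_s$ (and thus $\ub_s$) is measurable with respect to $\cF_{s-1}$ while $\{\eta_s\}$ remains a conditionally sub-Gaussian martingale-difference sequence given the past. The one place where the neural structure rather than a purely linear model enters is confirming, through Lemma~\ref{lemma:linear}, that $\btheta_*$ is well defined and that $S=\sqrt{2\hb^\top\Hb^{-1}\hb}$ genuinely dominates $\|\btheta_*\|_2$; once that is in hand the argument is identical to the linear-bandit case.
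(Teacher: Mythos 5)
Your proposal is correct and follows essentially the same route as the paper's own proof: linearize via Lemma~\ref{lemma:linear}, recognize $\Ab_t^{-1}\bar\bbb_t$ as the ridge estimator over the observation noise alone, apply the self-normalized confidence-ellipsoid bound of \citet{abbasi2011improved} to get $\|\sqrt{m}(\btheta^*-\btheta_0)-\Ab_t^{-1}\bar\bbb_t\|_{\Ab_t}\le\alpha_t$, and finish with Cauchy--Schwarz in the $\Ab_t$ inner product. Your added care about predictability of the features, the absence of a union bound over arms, and the $\sqrt{m}$ bookkeeping matches what the paper does implicitly (the paper cites Theorem~1 of that reference where the confidence-ellipsoid statement is really its Theorem~2, which is the form you invoke).
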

In particular, $S$ denotes the upper bound of $\sqrt{2\hb^\top\Hb^{-1}\hb}$, which is a constant if the reward function $h(\cdot)$ belongs to the RKHS space induced by the neural tangent kernel. 

\begin{lemma}\label{lemma:pseudonoise}
There exist positive constants $\{C_i\}_{i=1}^3$, such that with step size and the neural network satisfy that $\eta = C_1(m\lambda + mLT)^{-1}$, $m \geq C_2\sqrt{\lambda}L^{-3/2}[\log(TKL^2/\delta)]^{3/2}$, and $m[\log m]^{-3} \geq C_3\max\{TL^{12}\lambda^{-1}, T^7\lambda^{-8}L^{18}(\lambda + LT)^6, L^{21}T^7\lambda^{-7}(1 + \sqrt{T/\lambda})^6\}$, given history $\cF_{t-1}$, 
by choosing $\beta_t = \sigma\sqrt{4\log t + 2 \log K}$, we have $\PP(E_{t, 2} | \cF_{t-1}) \geq 1 - t^{-2}$.
\end{lemma}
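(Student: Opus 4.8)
The plan is to bound the quantity inside $E_{t,2}$ by a two-term decomposition that separates the error of the \emph{trained} network from the fluctuation caused by the injected noise. Abbreviating $\gb_i=\gb(\xb_{t,i};\btheta_0)$ and $\gb_s=\gb(\xsas;\btheta_0)$, I would insert the linear least-squares predictor built on the \emph{perturbed} labels, $\la\gb_i,\Ab_t^{-1}\bbb_t/\sqrt m\ra$, and write
\begin{align*}
f(\xb_{t,i};\btheta_{t-1})-\la\gb_i,\Ab_t^{-1}\bar\bbb_t/\sqrt m\ra
=\underbrace{\big(f(\xb_{t,i};\btheta_{t-1})-\la\gb_i,\Ab_t^{-1}\bbb_t/\sqrt m\ra\big)}_{(\mathrm I)}
+\underbrace{\la\gb_i,\Ab_t^{-1}(\bbb_t-\bar\bbb_t)/\sqrt m\ra}_{(\mathrm{II})}.
\end{align*}
Term $(\mathrm I)$ measures how far the gradient-descent output is from the NTK linear predictor trained on the \emph{same} perturbed rewards, while term $(\mathrm{II})$ is exactly the deviation created by $\{\gamma_s^{t-1}\}_{s=1}^{t-1}$ since $\bbb_t-\bar\bbb_t=\sum_{s=1}^{t-1}\gamma_s^{t-1}\gb_s/\sqrt m$. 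All estimates are made conditionally on $\cF_{t-1}$, which freezes the contexts, the matrix $\Ab_t$, and every gradient $\gb_s,\gb_i$, leaving the perturbations as the only remaining randomness. The target bound then follows by the triangle inequality with $|(\mathrm I)|\le\epsilon(m)$ and $|(\mathrm{II})|\le\beta_t\|\gb_i\|_{\Ab_t^{-1}}$.

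For term $(\mathrm{II})$ I would use that, given $\cF_{t-1}$, it is a fixed linear combination of the independent variables $\gamma_s^{t-1}\sim\cN(0,\sigma^2)$, hence itself zero-mean Gaussian, with conditional variance
\begin{align*}
\frac{\sigma^2}{m^2}\sum\nolimits_{s=1}^{t-1}\big(\gb_i^\top\Ab_t^{-1}\gb_s\big)^2
=\frac{\sigma^2}{m}\,\gb_i^\top\Ab_t^{-1}(\Ab_t-\lambda\Ib)\Ab_t^{-1}\gb_i
\le \frac{\sigma^2}{m}\,\gb_i^\top\Ab_t^{-1}\gb_i
=\sigma^2\|\gb_i\|_{\Ab_t^{-1}}^2,
\end{align*}
where I used $\sum_{s}\gb_s\gb_s^\top/m=\Ab_t-\lambda\Ib\preceq\Ab_t$ from \eqref{eq_covar} and the $1/\sqrt m$ normalization of the feature map implicit in the statement's norm. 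A standard Gaussian tail bound then gives $\PP\big(|(\mathrm{II})|>\beta_t\|\gb_i\|_{\Ab_t^{-1}}\mid\cF_{t-1}\big)\le 2\exp(-\beta_t^2/2\sigma^2)$; substituting $\beta_t=\sigma\sqrt{4\log t+2\log K}$ and taking a union bound over the $K$ arms drives this below the $t^{-2}$ budget.

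For term $(\mathrm I)$ I would import the neural-tangent-kernel approximation machinery used in the generalization analysis of over-parameterized networks \citep{cao2019generalization1,allen2018convergence,zhou2020neural}, proceeding in three stages. First, gradient descent on the (near strongly convex) perturbed objective \eqref{eqn:obj} contracts geometrically, so after $J$ steps with $\eta=C_1(m\lambda+mLT)^{-1}$ the iterate $\btheta_{t-1}$ is within $(1-\eta m\lambda)^J$ of its minimizer, producing the term $C_{\epsilon,2}(1-\eta m\lambda)^J\sqrt{TL/\lambda}$. Second, in the NTK regime the minimizer stays in a small ball around $\btheta_0$, where the network is almost linear, so $f(\xb;\btheta)$ is within $O(m^{-1/6}\sqrt{\log m}\cdots)$ of its first-order expansion $f(\xb;\btheta_0)+\la\gb_i,\btheta-\btheta_0\ra$, and $f(\xb;\btheta_0)=0$ by the antisymmetric initialization. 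Third, the minimizer of the linearized objective coincides with the ridge solution $\Ab_t^{-1}\bbb_t$ on the perturbed labels, so the two predictors agree up to these drift and linearization errors; collecting them — with the powers of $T$ coming from the up-to-$T$ samples and the magnitude of the perturbed labels — reproduces the $C_{\epsilon,1}$ and $C_{\epsilon,3}$ terms of $\epsilon(m)$. The stated width conditions on $m$ are precisely those that keep each of these errors below $\epsilon(m)$ and the iterate inside the NTK ball.

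The main obstacle is term $(\mathrm I)$. Two features distinguish it from the classical NTK generalization bound: the training labels $r_{s,a_s}+\gamma_s^{t-1}$ are themselves random, so the approximation bound must hold over the realized perturbations, which I would secure via a high-probability control $\max_{s\le t-1}|\gamma_s^{t-1}|=O(\sigma\sqrt{\log(tK)})$ on the Gaussian maximum and fold its failure probability into the same $t^{-2}$ budget as the tail of $(\mathrm{II})$; and the estimate must hold simultaneously for all $K$ arms, handled by a union bound that feeds $\log K$ into the width requirement. Term $(\mathrm{II})$, by contrast, is routine Gaussian concentration once the variance identity above is established. Combining the two on the intersection of these events yields $|f(\xb_{t,i};\btheta_{t-1})-\la\gb_i,\Ab_t^{-1}\bar\bbb_t/\sqrt m\ra|\le\epsilon(m)+\beta_t\|\gb_i\|_{\Ab_t^{-1}}$ for every $i\in[K]$ with conditional probability at least $1-t^{-2}$, which is exactly $E_{t,2}$.
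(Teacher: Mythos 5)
Your proposal follows essentially the same route as the paper's proof: the paper also isolates the perturbation term $\la\gb_i,\Ab_t^{-1}(\bbb_t-\bar\bbb_t)/\sqrt m\ra$ and bounds it by exactly your conditional-variance identity plus a Gaussian tail and union bound over the $K$ arms, and it handles your term $(\mathrm I)$ by the same two sub-steps you describe (NTK linearization around $\btheta_0$ via the function-value lemma, then gradient-descent convergence to the ridge solution on the perturbed labels via Lemma B.2 of \citet{zhou2020neural}), merely writing the triangle inequality in two stages rather than one. Your added concern about the randomness/unboundedness of the perturbed labels when invoking the NTK convergence lemma is a legitimate refinement that the paper itself glosses over by citing \citet{zhou2020neural} directly, but it does not change the structure of the argument.
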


We leave the detailed proof in the appendix. According to Lemma~\ref{lemma:linearconcentration} and Lemma~\ref{lemma:pseudonoise},
with more observations become available, the neural network based reward estimator $f(\xb; \btheta_t)$ will gradually converge to the true expected reward $h(\xb)$.
In the meantime, 
the variance of the added perturbation leads to reward overestimation, which will compensate the potential underestimation caused by the observation noise and the approximation error. 
The following lemma describes this anti-concentration behavior of the added noise $\{\gamma_{s}^{t-1}\}_{s=1}^{t-1}$. 

\begin{lemma}\label{lemma:anticoncentration}
For any $t \in [T]$, with $\sigma = \alpha_t(1 - \lambda\lambda_{K}^{-1}(\Ab_K))^{-1/2}$,  $\alpha_t$ defined in Lemma~\ref{lemma:linearconcentration}, $\epsilon(m)$ defined in Lemma~\ref{lemma:pseudonoise}, given history $\cF_{t-1}$,
we have $\PP\big(E_{t, 3}) = \PP(f(\xb_{t, i};\btheta_{t-1}) \geq h(\xb_{t, i}) - \epsilon(m) | \cF_{t-1}, E_{t, 1}\big) \geq (4e\sqrt{\pi})^{-1}$,
where $\Ab_K$ is the covariance matrix constructed after the initial $K$-round arm pulling (line 8 to 10 in Algorithm \ref{alg:phe}) by Eq \eqref{eq_covar}, and $\lambda_{K}(\Ab_K)$ represents the $K$-th largest eigenvalue of matrix $\Ab_K$.
\end{lemma}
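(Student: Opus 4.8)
The plan is to reduce the statement to a one-dimensional Gaussian anti-concentration bound on the perturbation-induced fluctuation of the linear surrogate estimate, following the randomized-exploration template of \citet{kveton2019garbage}. Write $\hat h_i = \la\gb(\xb_{t,i};\btheta_0),\Ab_t^{-1}\bar{\bbb}_t/\sqrt m\ra$ for the ridge estimate built only from the observed rewards, and $\tilde h_i = \la\gb(\xb_{t,i};\btheta_0),\Ab_t^{-1}\bbb_t/\sqrt m\ra$ for the one that also carries the fresh perturbations $\{\gamma_s^{t-1}\}$. First I would invoke the neural-network linearization bound underlying $\epsilon(m)$ (the same width and step-size conditions imposed in Lemma~\ref{lemma:pseudonoise}), now applied to $\tilde h_i$, to get $f(\xb_{t,i};\btheta_{t-1}) \geq \tilde h_i - \epsilon(m)$; this is the neural approximation that produces the $\epsilon(m)$ term in event $E_{t,2}$, and it turns the claim into the purely linear statement that it suffices to lower bound $\PP(\tilde h_i \geq h(\xb_{t,i}) \mid \cF_{t-1},E_{t,1})$ by $(4e\sqrt\pi)^{-1}$.

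Next I would split $\tilde h_i - h(\xb_{t,i}) = (\tilde h_i - \hat h_i) + (\hat h_i - h(\xb_{t,i}))$. On $E_{t,1}$ the second term is at least $-\alpha_t\|\gb(\xb_{t,i};\btheta_0)\|_{\Ab_t^{-1}}$, so it is enough to show that the perturbation term satisfies $\tilde h_i - \hat h_i \geq \alpha_t\|\gb(\xb_{t,i};\btheta_0)\|_{\Ab_t^{-1}}$ with probability at least $(4e\sqrt\pi)^{-1}$. The key observation is that, conditioned on $\cF_{t-1}$, $\tilde h_i - \hat h_i = \la\gb(\xb_{t,i};\btheta_0),\Ab_t^{-1}\sum_{s=1}^{t-1}\gamma_s^{t-1}\gb(\xsas;\btheta_0)/m\ra$ is a zero-mean Gaussian, and because the $\{\gamma_s^{t-1}\}$ are drawn fresh at round $t$ it is independent of $E_{t,1}$, so conditioning on $E_{t,1}$ does not change its law.

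The main work is the variance lower bound. Using the identity $\sum_{s=1}^{t-1}\gb(\xsas;\btheta_0)\gb(\xsas;\btheta_0)^\top/m = \Ab_t - \lambda\Ib$, the variance evaluates to $\rho^2 = \sigma^2\big(\|\gb(\xb_{t,i};\btheta_0)\|_{\Ab_t^{-1}}^2 - \lambda\,\gb(\xb_{t,i};\btheta_0)^\top\Ab_t^{-2}\gb(\xb_{t,i};\btheta_0)\big)$, with the $\sqrt m$ normalizations in $\bbb_t$, $\bar{\bbb}_t$ and $\Ab_t$ cancelling consistently. Since arm $i$ has already been pulled during the first $K$ rounds, $\gb(\xb_{t,i};\btheta_0)$ lies in the span of the pulled gradients, a subspace invariant under $\Ab_t$; combined with $\Ab_t \succeq \Ab_K$ this yields $\gb(\xb_{t,i};\btheta_0)^\top\Ab_t^{-2}\gb(\xb_{t,i};\btheta_0) \leq \lambda_K^{-1}(\Ab_K)\|\gb(\xb_{t,i};\btheta_0)\|_{\Ab_t^{-1}}^2$, hence $\rho^2 \geq \sigma^2(1-\lambda\lambda_K^{-1}(\Ab_K))\|\gb(\xb_{t,i};\btheta_0)\|_{\Ab_t^{-1}}^2$. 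Plugging in $\sigma = \alpha_t(1-\lambda\lambda_K^{-1}(\Ab_K))^{-1/2}$ gives the clean bound $\rho \geq \alpha_t\|\gb(\xb_{t,i};\btheta_0)\|_{\Ab_t^{-1}}$. I would then close with the standard Gaussian anti-concentration inequality $\PP(Z > z) \geq \frac{1}{4\sqrt\pi}e^{-z^2}$ for $Z\sim\cN(0,1)$: since the threshold normalized by the standard deviation is $\alpha_t\|\gb(\xb_{t,i};\btheta_0)\|_{\Ab_t^{-1}}/\rho \leq 1$, the probability is at least $\frac{1}{4\sqrt\pi}e^{-1} = (4e\sqrt\pi)^{-1}$.

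I expect the eigenvalue step to be the main obstacle: one must argue carefully that the relevant fluctuation lives in the span of the arm gradients (so that the operative quantity is $\lambda_K(\Ab_K)$ rather than the trivial regularizer $\lambda$, for which the factor $1-\lambda\lambda_K^{-1}(\Ab_K)$ would degenerate), and that this guarantees $1 - \lambda\lambda_K^{-1}(\Ab_K) > 0$ so that the prescribed $\sigma$ is well defined. A secondary but nontrivial point is importing the approximation $f(\xb_{t,i};\btheta_{t-1}) \geq \tilde h_i - \epsilon(m)$ with the claimed $\epsilon(m)$, which rests on the NTK linearization and gradient-descent convergence bounds rather than on the anti-concentration argument itself.
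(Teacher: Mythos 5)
Your proposal is correct and follows essentially the same route as the paper's proof: reduce $f(\xb_{t,i};\btheta_{t-1})$ to the perturbed linear surrogate up to $\epsilon(m)$ (the paper does this by reusing the bounds that define $\epsilon(m)$ in the proof of Lemma~\ref{lemma:pseudonoise}), use $E_{t,1}$ to absorb the observation-noise deviation, recognize the perturbation term as a conditionally zero-mean Gaussian, compute its variance via $\sum_{s} \gb(\xsas;\btheta_0)\gb(\xsas;\btheta_0)^\top/m = \Ab_t - \lambda\Ib$, and finish with Gaussian anti-concentration at normalized threshold $1$. The one place you genuinely depart from the paper is the eigenvalue step, and your version is the more careful one. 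The paper bounds $\gb^\top\Ab_t^{-2}\gb \leq \lambda_{\min}^{-1}(\Ab_t)\,\|\gb\|_{\Ab_t^{-1}}^2$ and then asserts $\lambda_{\min}(\Ab_t)\geq\lambda_K(\Ab_K)$; read over the whole space $\RR^p$ this chain is vacuous, since for $t-1<p$ the sum of rank-one gradient terms cannot span $\RR^p$, so $\lambda_{\min}(\Ab_t)=\lambda$ and the factor $1-\lambda\lambda_{\min}^{-1}(\Ab_t)$ degenerates to $0$. Your restriction to the span of the pulled gradients---an $\Ab_t$-invariant subspace that contains $\gb(\xb_{t,i};\btheta_0)$ precisely because every arm is pulled once in the first $K$ rounds---is exactly what makes $\lambda_K(\Ab_K)$, rather than the trivial regularizer $\lambda$, the operative quantity, and it is the only sound reading of the paper's step. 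The single caveat worth recording is the one you already flag: the argument needs the $K$ initial gradients to be linearly independent, so that $\lambda_K(\Ab_K)>\lambda$ and the prescribed $\sigma=\alpha_t(1-\lambda\lambda_K^{-1}(\Ab_K))^{-1/2}$ is finite and well defined; this is implicitly assumed by the lemma statement itself, in both your proof and the paper's.
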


With $\alpha_t$ defined in Lemma~\ref{lemma:linearconcentration}, and $\epsilon(m)$ and $\beta_t$ defined in Lemma~\ref{lemma:pseudonoise}, we divide the arms into two groups. More specifically, we define the set of sufficiently sampled arms in round $t$ as:
\begin{align*}
    \Omega_t = \{\forall i \in [K]: 2\epsilon(m) + (\alpha_t + \beta_t)\|\gb(\xb_{t, i}; \btheta_0)\|_{\Ab_t^{-1}} \leq h(\xb_{t, a^*}) - h(\xb_{t, i})\}.
\end{align*}
Accordingly, the set of undersampled arms is defined as $\bar{\Omega}_t = [K]\setminus \Omega_t$. Note that by this definition, the best arm $a^*$ belongs to $\bar{\Omega}_t$. 
In the following lemma, we show that at any time step $t > K$, the expected one-step regret is upper bounded in terms of the regret due to playing an undersampled arm $a_t \in \bar \Omega_t$.

\begin{lemma}\label{lemma:oneregret}
With $m$, $\eta$, $\sigma$ satisfying the conditions in Lemma~\ref{lemma:linearconcentration}, \ref{lemma:pseudonoise}, \ref{lemma:anticoncentration}, and $\epsilon(m)$ defined in Lemma~\ref{lemma:pseudonoise}, with probability at least $1 - \delta$ the one step regret of \model{} is upper bounded, 
\begin{align*}
     \EE[h(\xtao) - h(\xtat)]
    \leq  \PP(\bar{E}_{t, 2}) + 4\epsilon(m) + (\beta_t + \alpha_t)\big(1  + 2 /\PP(a_t \in \bar \Omega_t)\big)\|\gb(\xtat; \btheta_0)/\sqrt{m}\|_{\Ab_t^{-1}}.
\end{align*}
Furthermore, $\PP(a_t \in \bar \Omega_t) \geq \PP_t(E_{t, 3}) - \PP_t(\bar{E}_{t, 2})$.
\end{lemma}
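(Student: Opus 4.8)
The plan is to run the randomized-exploration (Thompson-sampling-style) argument of Agrawal--Goyal and Kveton et al., adapted to the neural setting, working conditionally on the history $\cF_{t-1}$ and on the good observation-noise event $E_{t,1}$, which holds with probability at least $1-\delta$ by Lemma~\ref{lemma:linearconcentration}; this is the source of the ``$1-\delta$'' in the statement. Write $w_t(i) = \|\gb(\xb_{t,i};\btheta_0)/\sqrt{m}\|_{\Ab_t^{-1}}$ for the confidence width of arm $i$. First I would dispose of the failure of $E_{t,2}$: since $0\le h\le 1$ the one-step regret never exceeds $1$, so $\EE[(h(\xtao)-h(\xtat))\mathbf{1}\{\bar E_{t,2}\}]\le \PP(\bar E_{t,2})$, which is exactly the first term. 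It then remains to control the regret on $E_{t,1}\cap E_{t,2}$, where combining the two events yields the two-sided estimate $|f(\xb_{t,i};\btheta_{t-1})-h(\xb_{t,i})|\le \epsilon(m)+(\alpha_t+\beta_t)w_t(i)$ for every arm $i$.

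The core of the argument introduces the undersampled arm of smallest width, $\bar a_t=\arg\min_{i\in\bar\Omega_t}w_t(i)$, which is crucially $\cF_{t-1}$-measurable since it depends only on the history and not on the fresh perturbation $\{\gamma_s^{t-1}\}$. I would split $h(\xtao)-h(\xtat)=[h(\xtao)-h(\xb_{t,\bar a_t})]+[h(\xb_{t,\bar a_t})-h(\xtat)]$. The first bracket is bounded by $2\epsilon(m)+(\alpha_t+\beta_t)w_t(\bar a_t)$ directly from $\bar a_t\in\bar\Omega_t$ and the defining inequality of $\Omega_t$. For the second bracket I pass from $h$ to $f$ via the confidence estimate at $\bar a_t$, invoke the optimality $f(\xb_{t,\bar a_t};\btheta_{t-1})\le f(\xtat;\btheta_{t-1})$ of the played arm (recall $a_t=\argmax_i f$), and then pass back from $f$ to $h$ via the confidence estimate at $a_t$; this gives $2\epsilon(m)+(\alpha_t+\beta_t)(w_t(\bar a_t)+w_t(a_t))$. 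Summing yields, on $E_{t,1}\cap E_{t,2}$, the per-realization bound $h(\xtao)-h(\xtat)\le 4\epsilon(m)+(\alpha_t+\beta_t)\big(w_t(a_t)+2w_t(\bar a_t)\big)$.

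The key step is to trade the width $w_t(\bar a_t)$ of the auxiliary arm for the expected width of the \emph{actually played} arm, and this is where the perturbation-induced randomization does its work. Since $\bar a_t$ minimizes the width over $\bar\Omega_t$, on the event $\{a_t\in\bar\Omega_t\}$ we have $w_t(\bar a_t)\le w_t(a_t)$; using that $\bar a_t$, hence $w_t(\bar a_t)$, is $\cF_{t-1}$-measurable, I would write $\EE[w_t(a_t)\mid\cF_{t-1},E_{t,1}]\ge \EE[w_t(a_t)\mathbf{1}\{a_t\in\bar\Omega_t\}]\ge w_t(\bar a_t)\,\PP(a_t\in\bar\Omega_t)$, so that $w_t(\bar a_t)\le \EE[w_t(a_t)]/\PP(a_t\in\bar\Omega_t)$. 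Substituting into the per-realization bound and taking expectation over the perturbation collapses the two width terms into $(\alpha_t+\beta_t)\big(1+2/\PP(a_t\in\bar\Omega_t)\big)\EE[w_t(a_t)]$ (the width being understood in expectation as written), giving the claimed bound. I expect this division-by-probability step to be the main obstacle, since it hinges on carefully separating what is $\cF_{t-1}$-measurable from what is randomized by $\{\gamma_s^{t-1}\}$; if $\bar a_t$ were not measurable the factorization would fail.

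Finally, for the lower bound $\PP(a_t\in\bar\Omega_t)\ge\PP_t(E_{t,3})-\PP_t(\bar E_{t,2})$, I would establish the inclusion $E_{t,2}\cap E_{t,3}\subseteq\{a_t\in\bar\Omega_t\}$ (conditioned on $E_{t,1}$). On $E_{t,3}$ the best arm is optimistic, $f(\xtao;\btheta_{t-1})\ge h(\xtao)-\epsilon(m)$; chaining this with $f(\xtat;\btheta_{t-1})\ge f(\xtao;\btheta_{t-1})$ and the upper confidence estimate for $a_t$ on $E_{t,1}\cap E_{t,2}$ gives $h(\xtao)-h(\xtat)\le 2\epsilon(m)+(\alpha_t+\beta_t)w_t(a_t)$, which is precisely the negation of the saturation condition defining $\Omega_t$, hence $a_t\in\bar\Omega_t$. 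The stated bound then follows from $\PP(a_t\in\bar\Omega_t)\ge\PP(E_{t,2}\cap E_{t,3})\ge\PP(E_{t,3})-\PP(\bar E_{t,2})$ by inclusion--exclusion.
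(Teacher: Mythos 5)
Your proposal is correct and follows essentially the same route as the paper's proof: the same decomposition through the least-uncertain unsaturated arm ($\bar a_t$, the paper's $e_t$), the same handling of $\bar E_{t,2}$ via boundedness of $h$, the same $\cF_{t-1}$-measurability argument to trade $w_t(\bar a_t)$ for $\EE[w_t(a_t)]/\PP(a_t\in\bar\Omega_t)$, and the same optimism-based inclusion $E_{t,2}\cap E_{t,3}\subseteq\{a_t\in\bar\Omega_t\}$ (the paper phrases this as the optimal arm's estimate exceeding every saturated arm's estimate, which is the identical argument). The only cosmetic difference is that you show the played arm itself violates the saturation inequality, while the paper argues the argmax cannot lie in $\Omega_t$; both hinge on exactly the same confidence bounds.
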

\vspace{-1mm}
To bound the cumulative regret of \model{}, we also need the following lemma.
\begin{lemma}
\label{lemma:selfnormalize}
With $\tilde d$ as the effective dimension of the NTK matrix $\Hb$, we have
\begin{align}
    \sum\nolimits_{t=1}^T \min\bigg\{\|\gb(\xb_{t,a_t}; \btheta_{0})/\sqrt{m}\|_{\Ab_{t-1}^{-1}}^2,1\bigg\} \leq 2 (\tilde d\log(1+TK/\lambda) +1)
    \notag
\end{align}
\end{lemma}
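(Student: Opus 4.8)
The plan is to read this as the standard \emph{elliptical potential} (self-normalized) bound from linear bandit analysis (cf. \citet{abbasi2011improved}), and then translate the resulting log-determinant of the empirical gradient covariance into the effective dimension of the NTK matrix $\Hb$. Write $\gb_s = \gb(\xsas;\btheta_0)/\sqrt{m}$, so that by \eqref{eq_covar} the covariance obeys the rank-one update $\Ab_{t+1} = \Ab_t + \gb_t\gb_t^\top$ with $\Ab_1 = \lambda\Ib$. First I would apply the elementary inequality $\min\{x,1\}\le 2\log(1+x)$ for $x\ge 0$ to $x = \|\gb_t\|_{\Ab_t^{-1}}^2$, together with the rank-one determinant update $\det(\Ab_t+\gb_t\gb_t^\top) = \det(\Ab_t)\big(1+\|\gb_t\|_{\Ab_t^{-1}}^2\big)$. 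Summing over $t$ makes the product of the determinant ratios telescope, yielding
\[
\sum\nolimits_{t=1}^T \min\big\{\|\gb_t\|_{\Ab_t^{-1}}^2,\,1\big\} \le 2\big(\log\det \Ab_{T+1} - \log\det(\lambda\Ib)\big) = 2\log\det\big(\Ib + \tfrac{1}{\lambda}\sum\nolimits_{s=1}^{T}\gb_s\gb_s^\top\big).
\]
The mismatch between the $\Ab_{t-1}^{-1}$ in the statement and the $\Ab_t^{-1}$ appearing here is a one-step index shift, absorbed by the same rank-one update.

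Next I would convert the right-hand side into a quantity governed by $\Hb$. Collecting the (unnormalized) gradients $\gb(\xsas;\btheta_0)$ of the $T$ pulled arms into a matrix $\Gb \in \RR^{p\times T}$, we have $\sum_{s}\gb_s\gb_s^\top = \Gb\Gb^\top/m$, and Sylvester's determinant identity gives $\log\det(\Ib_p + \tfrac{1}{\lambda m}\Gb\Gb^\top) = \log\det(\Ib_T + \tfrac{1}{\lambda m}\Gb^\top\Gb)$, i.e. the log-determinant of the \emph{empirical} gradient Gram matrix with entries $\langle\gb(\xb_i;\btheta_0),\gb(\xb_j;\btheta_0)\rangle/m$. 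Since the pulled arms form a subset of the $TK$ context vectors encountered over the horizon and $\log\det(\Ib+\cdot)$ is monotone in the positive semidefinite order, I can upper bound it by the Gram matrix over all $TK$ contexts. Invoking the overparameterization/NTK concentration guaranteed by the width condition (the normalized gradient inner products converge entrywise to $\Hb$), I would replace this empirical Gram matrix by $\Hb$ up to a vanishing perturbation. Finally, the definition of the effective dimension, $\tilde d\,\log(1+TK/\lambda) = \log\det(\Ib+\Hb/\lambda)$, produces the leading term, and the residual approximation error is absorbed into the additive $+1$.

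The main obstacle is the NTK concentration step. Because the gradients $\gb(\xb;\btheta_0)$ are random through the initialization $\btheta_0$, their Gram matrix only approximates $\Hb$ entrywise, and I must propagate this entrywise closeness into a bound on the \emph{log-determinant}. This requires the width $m$ to be large enough (as in the conditions of Lemma~\ref{lemma:linear} and Lemma~\ref{lemma:pseudonoise}) plus a perturbation argument: control $\|\Gb^\top\Gb/(\lambda m) - \Hb/\lambda\|$ and then use eigenvalue/log-determinant stability so the deviation is small enough to be swallowed by the constant term. The determinant-telescoping and Sylvester steps are routine; essentially all of the difficulty and all of the width requirements live in this matrix-concentration estimate.
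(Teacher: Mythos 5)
Your proposal follows essentially the same route as the paper's proof: the elliptical potential argument (Lemma 11 of Abbasi-Yadkori et al., which you re-derive via $\min\{x,1\}\le 2\log(1+x)$ and determinant telescoping), enlarging the pulled-arm Gram matrix to the full Gram over all $K$ arms and $T$ rounds by PSD monotonicity, Sylvester's identity, NTK concentration of $\Gb^\top\Gb$ to $\Hb$, and the definition of $\tilde d$. The only difference is presentational: where you appeal to generic ``eigenvalue/log-determinant stability,'' the paper implements that perturbation step concretely via concavity of $\log\det(\cdot)$ (a first-order trace bound) plus a Frobenius-norm estimate from the NTK concentration lemma, absorbing the error into the additive $+1$ exactly as you describe.
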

The effective dimension is first proposed in~\citep{valko2013finite} to analyze the kernelized contextual bandit, which roughly measures the number of directions in the kernel space where the data mostly lies. The detailed definiton of $\tilde d$ can be found in Definition~B.3 in the appendix. Finally, with proper neural network setup, we provide an $m$-independent regret upper bound.

\begin{theorem}\label{thm:regret}
Let $\tilde d$ be the effective dimension of the NTK kernel matrix $\Hb$. There exist positive constants $C_1$ and $C_r$, such that for any $\delta \in(0, 1)$, when network structure and the step size for model update satisfy $m \geq \text{poly}(T, L, K, \lambda^{-1}, S^{-1}, \log(1/\delta))$, $\eta = C_1( mTL +  m \lambda)^{-1}$
and $\lambda \geq \max\{1, S^{-2}\}$, with probability at least $1-\delta$,
the cumulative regret of \model{} satisfies
\begin{align}
     R_T \leq \Gamma \Big(R\sqrt{\tilde{d}\log(1 + TK/\lambda) +2 - 2\log \delta} + \sqrt{\lambda}S\Big)\cdot\sqrt{2\tilde dT\log(1+TK/\lambda) +2} + 7 + K \label{corollary:regret-1}
 \end{align}
where 
$\Gamma = 44\eb \sqrt{\pi} (1 + \sqrt{(4\log T + 2\log K) / (1 - \lambda \lambda_K^{-1}(\Ab_K))})$.
\end{theorem}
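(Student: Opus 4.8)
The plan is to decompose the cumulative regret into the contribution of the initial exploration phase and the contribution of the rounds $t>K$, and then telescope the per-round regret bound of Lemma~\ref{lemma:oneregret} against the elliptical-potential sum of Lemma~\ref{lemma:selfnormalize}. First I would account for the first $K$ rounds, in which every arm is pulled once; since $0\le h(\xb)\le 1$, these rounds contribute at most $K$ to $R_T$, which is exactly the additive $K$ in the bound. For the remaining rounds I would invoke Lemma~\ref{lemma:oneregret} to write, for each $t>K$,
\begin{equation*}
\EE[h(\xtao)-h(\xtat)] \le \PP(\bar{E}_{t,2}) + 4\epsilon(m) + (\alpha_t+\beta_t)\Big(1 + 2/\PP(a_t\in\bar\Omega_t)\Big)\big\|\gb(\xtat;\btheta_0)/\sqrt m\big\|_{\Ab_t^{-1}}.
\end{equation*}

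The crucial step, and the one I expect to be the main obstacle, is turning the $1/\PP(a_t\in\bar\Omega_t)$ factor into a constant. Using the second assertion of Lemma~\ref{lemma:oneregret} together with the anti-concentration bound $\PP_t(E_{t,3})\ge(4e\sqrt\pi)^{-1}$ from Lemma~\ref{lemma:anticoncentration} and the high-probability control $\PP_t(\bar{E}_{t,2})\le t^{-2}$ from Lemma~\ref{lemma:pseudonoise}, I would show that for every $t$ past a small threshold $\PP(a_t\in\bar\Omega_t)\ge(4e\sqrt\pi)^{-1}-t^{-2}$ is bounded below by a universal constant of order $(e\sqrt\pi)^{-1}$. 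This is the precise point where reward perturbation replaces explicit optimism: the injected Gaussian variance must be large enough, hence the choice $\sigma=\alpha_t(1-\lambda\lambda_K^{-1}(\Ab_K))^{-1/2}$, to guarantee a constant probability of optimistic overestimation, while still small enough that $\beta_t$ does not blow up. It is this tradeoff that collapses $1+2/\PP(a_t\in\bar\Omega_t)$ into the leading numerical constant of $\Gamma$.

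With the probability factor handled, I would bound the confidence radii by their terminal values. Since $\alpha_t$ and $\beta_t$ are nondecreasing in $t$ and $\beta_t=\sigma\sqrt{4\log t+2\log K}=\alpha_t(1-\lambda\lambda_K^{-1}(\Ab_K))^{-1/2}\sqrt{4\log t+2\log K}$, I obtain $\alpha_t+\beta_t\le\alpha_T\big(1+\sqrt{(4\log T+2\log K)/(1-\lambda\lambda_K^{-1}(\Ab_K))}\big)$, which supplies the remaining logarithmic factor of $\Gamma$. I would then bound $\alpha_T$ via the log-determinant/effective-dimension estimate $\log\big(\det\Ab_T/\det(\lambda\Ib)\big)\le\tilde d\log(1+TK/\lambda)+\text{const}$ to get $\alpha_T\le R\sqrt{\tilde d\log(1+TK/\lambda)+2-2\log\delta}+\sqrt\lambda S$, matching the first parenthesized factor of the theorem.

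Finally I would sum over $t$. The tail terms are harmless: $\sum_{t>K}\PP(\bar{E}_{t,2})\le\sum_t t^{-2}<2$, and choosing $m=\text{poly}(T,L,K,\lambda^{-1},S^{-1},\log(1/\delta))$ large enough drives $4T\epsilon(m)$ below a constant, together accounting for the additive $7$. For the dominant sum $\sum_{t>K}\|\gb(\xtat;\btheta_0)/\sqrt m\|_{\Ab_t^{-1}}$, I would first use the monotonicity $\Ab_{t-1}\preceq\Ab_t$, hence $\|\cdot\|_{\Ab_t^{-1}}\le\|\cdot\|_{\Ab_{t-1}^{-1}}$, to align the covariance index with Lemma~\ref{lemma:selfnormalize}; this index shift is a small but necessary technicality. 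I would then truncate each term at $1$, which is legitimate because each one-step regret is at most $1$, and apply Cauchy--Schwarz followed by Lemma~\ref{lemma:selfnormalize},
\begin{equation*}
\sum_{t=1}^T\min\big\{\|\gb(\xtat;\btheta_0)/\sqrt m\|_{\Ab_{t-1}^{-1}},1\big\}\le\sqrt T\,\sqrt{2(\tilde d\log(1+TK/\lambda)+1)}.
\end{equation*}
Multiplying the collapsed probability constant, the terminal confidence radius, and this elliptical-potential bound yields the stated product form (up to the explicit absorbed constants), completing the argument.
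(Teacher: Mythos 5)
Your proposal is correct and follows essentially the same route as the paper's proof: split off the first $K$ rounds, telescope Lemma~\ref{lemma:oneregret}, collapse $1+2/\PP(a_t\in\bar\Omega_t)$ into the constant $44e\sqrt{\pi}$ via Lemmas~\ref{lemma:anticoncentration} and \ref{lemma:pseudonoise}, bound $\alpha_t+\beta_t$ by its terminal value, apply Cauchy--Schwarz with Lemma~\ref{lemma:selfnormalize}, and absorb $4T\epsilon(m)$ and the $\sum_t t^{-2}$ tail into the additive constant by the choice of $m$, $\eta$, and $J$. Your explicit handling of the index shift $\Ab_t\succeq\Ab_{t-1}$ and of the small-$t$ rounds where $t^{-2}$ is not yet negligible is in fact slightly more careful than the paper's write-up, but it is the same argument.
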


\begin{proof}
By Lemma \ref{lemma:linearconcentration}, $E_{t, 1}$ holds for all $t \in [T]$ with probability at least $1-\delta$. Therefore, with probability at least $1 - \delta$,  we have
\begin{align*}
    R_T =& \sum_{t=1}^T\EE[(h(\xtao) - h(\xtat))\mathds{1}\{E_{t, 1}\}]
    \leq  \sum_{t=K+1}^T\EE[(h(\xtao) - h(\xtat))\mathds{1}\{E_{t, 1}\}] + K \\
    \leq & 4T\epsilon(m) + \sum_{t=1}^T \PP_t(\bar{E}_{t , 2}) + (\beta_t + \alpha_t)(1  + \frac{2}{\PP_t(E_{t, 3}) - \PP_t(\bar{E}_{t, 2})})\|\gb(\xtat; \btheta_0)/\sqrt{m}\|_{\Ab_t^{-1}} + K.
\end{align*}
The second inequality is due to the analysis of one-step regret in Lemma~\ref{lemma:oneregret}. According to Lemma \ref{lemma:pseudonoise}, by choosing $\delta = \sigma c_t$, with $c_t = \sqrt{4\log t + 2 \log K}$, $\PP_t(\bar{E}_{t, 2}) \leq t^{-2}$. Hence, $\sum_{t=1}^T \PP_t(\bar{E}_{t, 2}) \leq \pi^2/6$. Based on Lemma \ref{lemma:anticoncentration}, with $\sigma = \alpha_t(1 - \lambda\lambda_{K}^{-1}(\Ab_K))^{-1/2}$, $\PP_t(E_{t, 3}) \geq \frac{1}{4\eb\sqrt{\pi}}$. Therefore, $1 + {2}/{(\PP_t(E_{t, 3}) - \PP_t(\bar{E}_{t, 2}))} \leq 44\eb\sqrt{\pi}$.
By chaining all the inequalities, we have the cumulative regret of \model{} upper bounded as,
\begin{align*}
    R_T \leq&  \frac{\pi^2}{3} + 4T\epsilon(m) + \sum\nolimits_{t=1}^T  (\beta_t + \alpha_t)(1  + \frac{2}{\PP_t(E_{t, 3}) - \PP_t(\bar{E}_{t, 2})})\|\gb(\xtat; \btheta_0)/\sqrt{m}\|_{\Ab_t^{-1}}  + K\notag\\ 
    \leq & \Gamma \Big(R\sqrt{\tilde{d}\log(1 + TK/\lambda) +2 - 2\log \delta} + \sqrt{\lambda}S\Big)\cdot\sqrt{2\tilde dT\log(1+TK/\lambda) +2} + \frac{\pi^2}{3} +K\\
    & +  C_{\epsilon, 1}m^{-1/6}T^{5/3}\lambda^{-2/3}L^3\sqrt{\log m} + C_{\epsilon, 2}(1 - \eta m\lambda)^JT\sqrt{TL/\lambda} \\
    & + C_{\epsilon, 3}m^{-1/6}T^{8/3}\lambda^{-5/3}L^4\sqrt{\log m}(1 + \sqrt{T/\lambda}) 
\end{align*}
where 
$\Gamma = 44\eb \sqrt{\pi} (1 + \sqrt{(4\log T + 2\log K) / (1 - \lambda \lambda_K^{-1}(\Ab_K))})$. The second inequality is due to Lemma~\ref{lemma:selfnormalize}, and the bounds of events $\bar E_{t, 2}$ and $E_{t, 3}$. By setting $\eta = c(m\lambda + mLT)^{-1}$, and $J = (1 +LT/\lambda)(\log(C_{\epsilon, 2}) + \log{T\sqrt{TL/\lambda}})/c$,  $C_{\epsilon, 2}(1 - \eta m\lambda)^JT\sqrt{TL/\lambda} \leq 1$. Then by choosing the satisfied $m$, $C_{\epsilon, 1}m^{-1/6}T^{5/3}\lambda^{-2/3}L^3\sqrt{\log m} + C_{\epsilon, 3}m^{-1/6}T^{8/3}\lambda^{-5/3}L^4\sqrt{\log m}(1 + \sqrt{T/\lambda})  \leq 2/3$. $R_T$ can be further bounded by
$R_T \leq \Gamma \Big(R\sqrt{\tilde{d}\log(1 + TK/\lambda) +2 - 2\log \delta} + \sqrt{\lambda}S\Big)\cdot\sqrt{2\tilde dT\log(1+TK/\lambda) +2} + 7 + K$

This completes the proof.
\end{proof}

\section{Experiments}
\label{sec:exp}
In this section, we empirically evaluate the proposed neural bandit algorithm \model{} against several state-of-the-art baselines, including: linear and neural UCB (LinUCB \citep{abbasi2011improved} and NeuralUCB \citep{zhou2020neural}), linear and neural Thompson Sampling (LinTS \citep{agrawal2013thompson} and NeuralTS \citep{zhang2020neural}), perturbed reward for linear bandits (LinFPL) \citep{kveton19perturbed}. To demonstrate the real impact of using diagonal approximation in confidence set construction, we also include NeuralUCB and NeuralTS with diagonal approximation as suggested in their original papers for comparisons.

We evaluated all the algorithms on 1) synthetic dataset via simulation, 2) six $K$-class classification datasets from UCI machine learning repository \citep{beygelzimer11contextual}, and 3) two real-world datasets extracted from the social bookmarking web service Delicious and music streaming service LastFM \citep{wu2016contextual}. We implemented all the algorithms in PyTorch and 
performed all the experiments on a server equipped with Intel Xeon Gold 6230 2.10GHz CPU, 128G RAM, four NVIDIA GeForce RTX 2080Ti graphical cards.

\subsection{Experiment on synthetic dataset}

\begin{figure*}[t]
	\centering
		\subfigure[$h_1(\xb) = 10^{-2} (\xb^\top\Sigma\Sigma^\top\xb)$ ]{\includegraphics[width=0.45\linewidth]{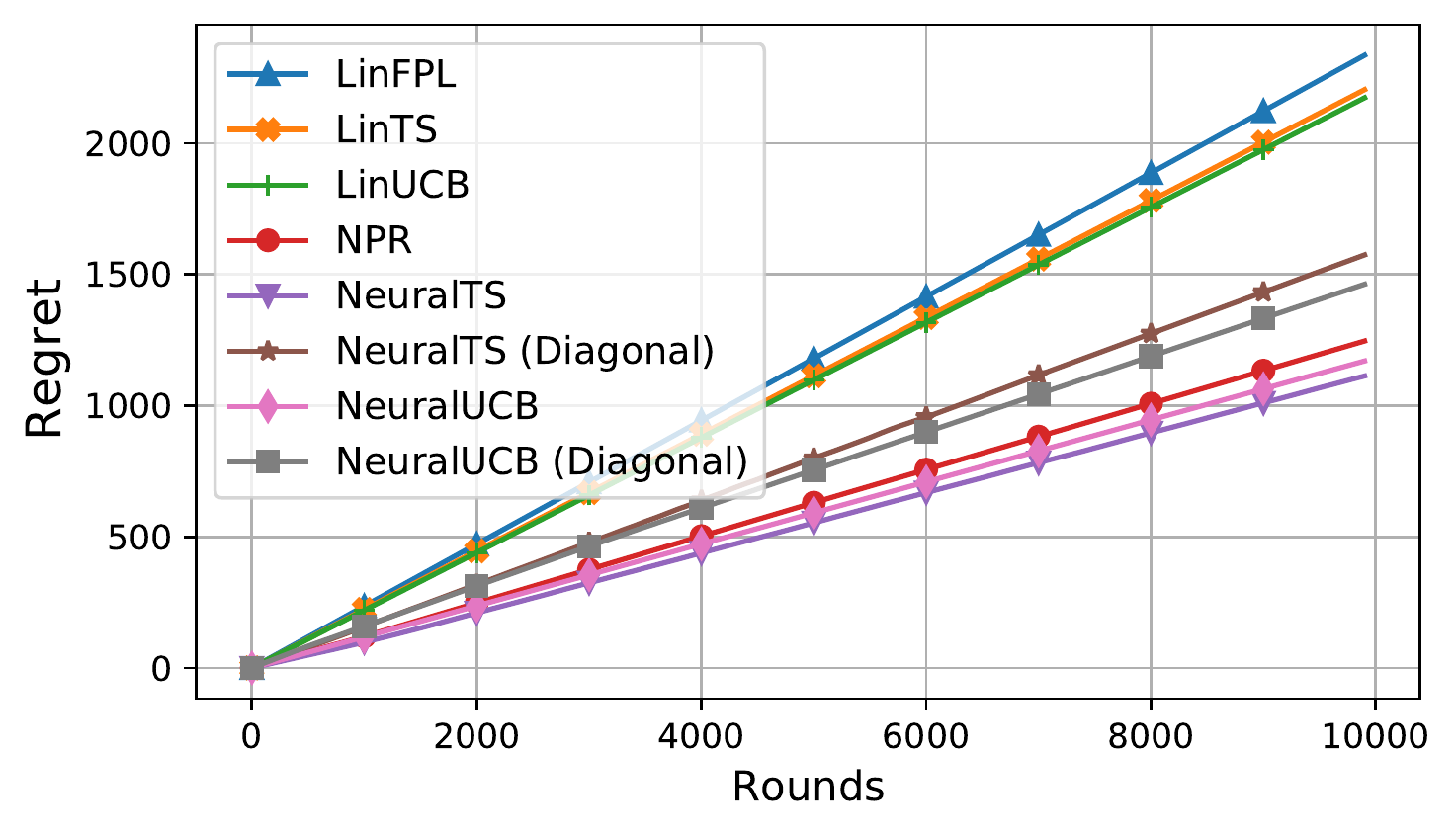}
        \label{fig:syn1}}	
		\subfigure[$h_2(\xb) = \exp(-10 (\xb^\top\btheta)^2)$]{\includegraphics[width=0.45\linewidth]{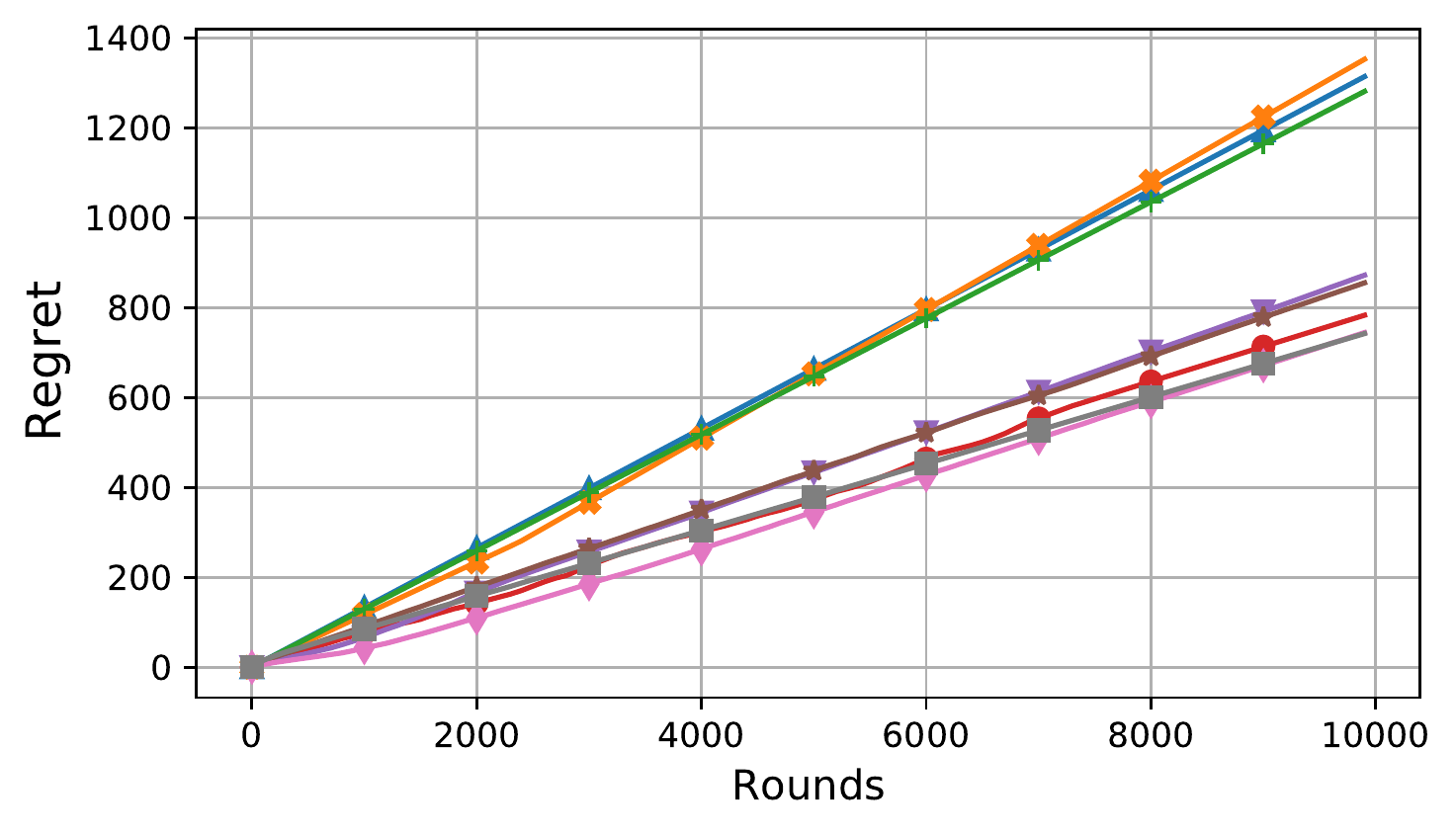}
		\label{fig:syn2}}
		\subfigure[Elapsed time]{\includegraphics[width=0.32\linewidth]{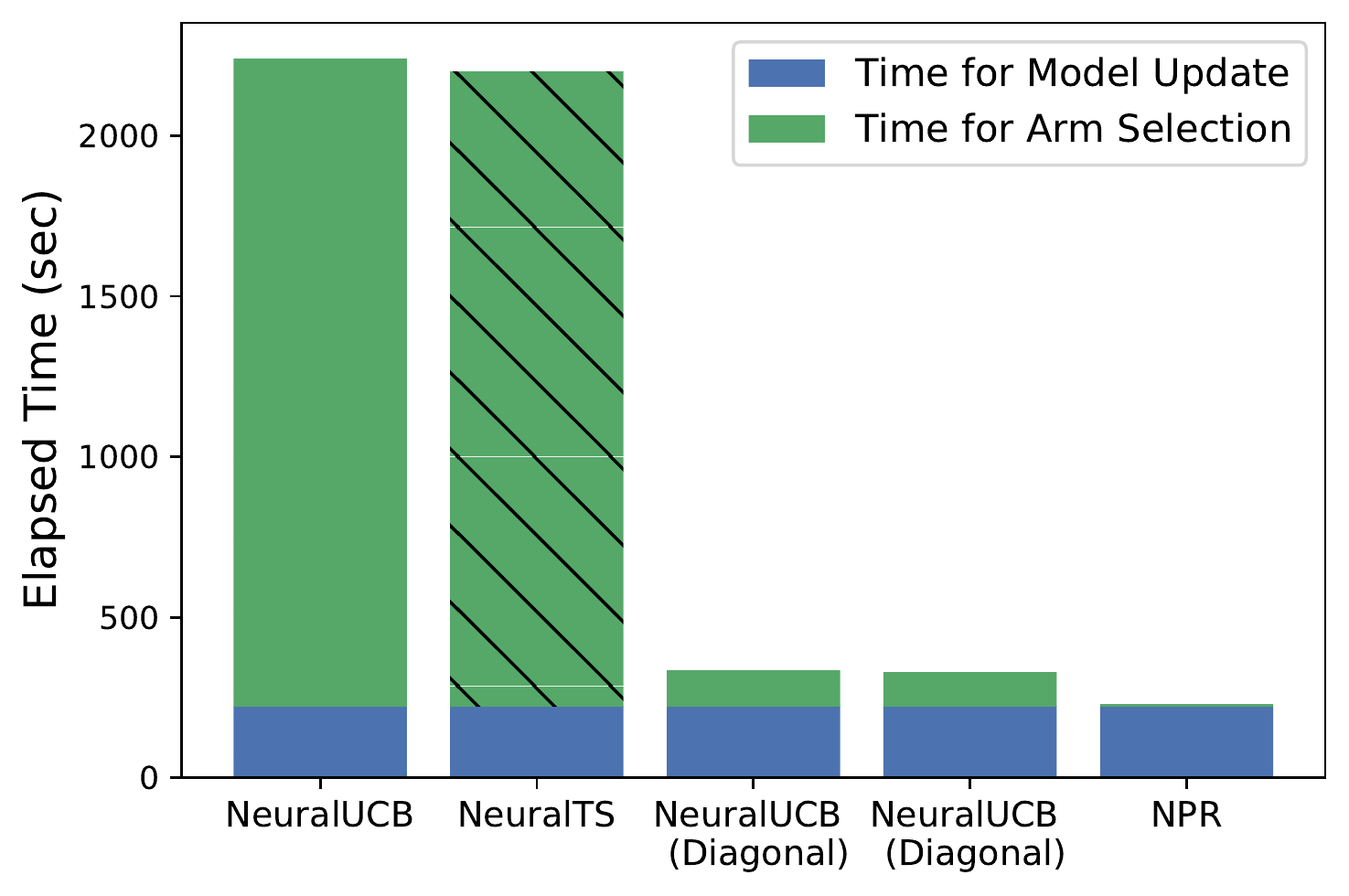}
		\label{fig:time}}	
		\subfigure[Effect of pool size
		]{\includegraphics[width=0.32\linewidth]{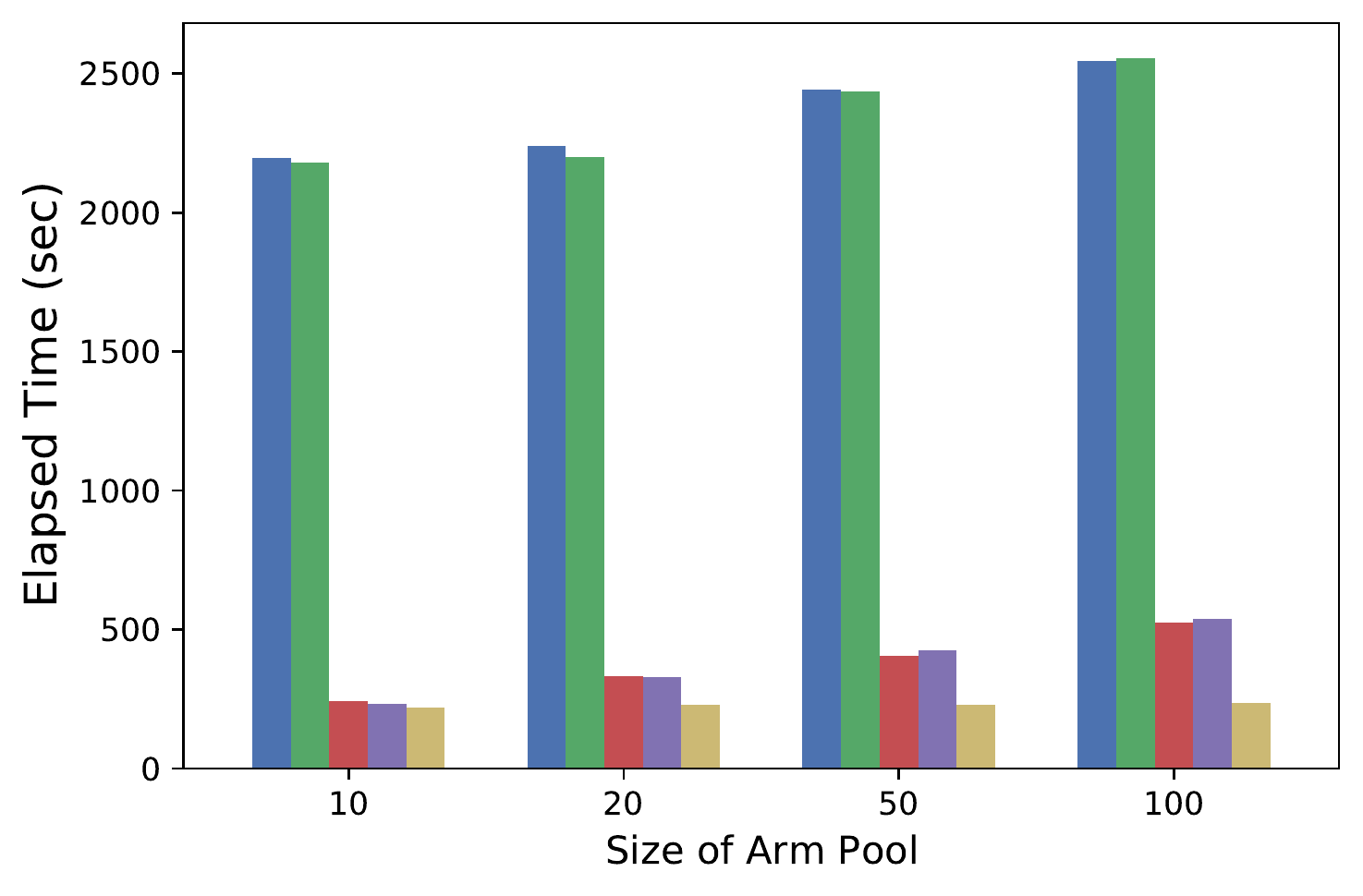}
		\label{fig:time_arm}}	
		\subfigure[Effect of network complexity]{\includegraphics[width=0.32\linewidth]{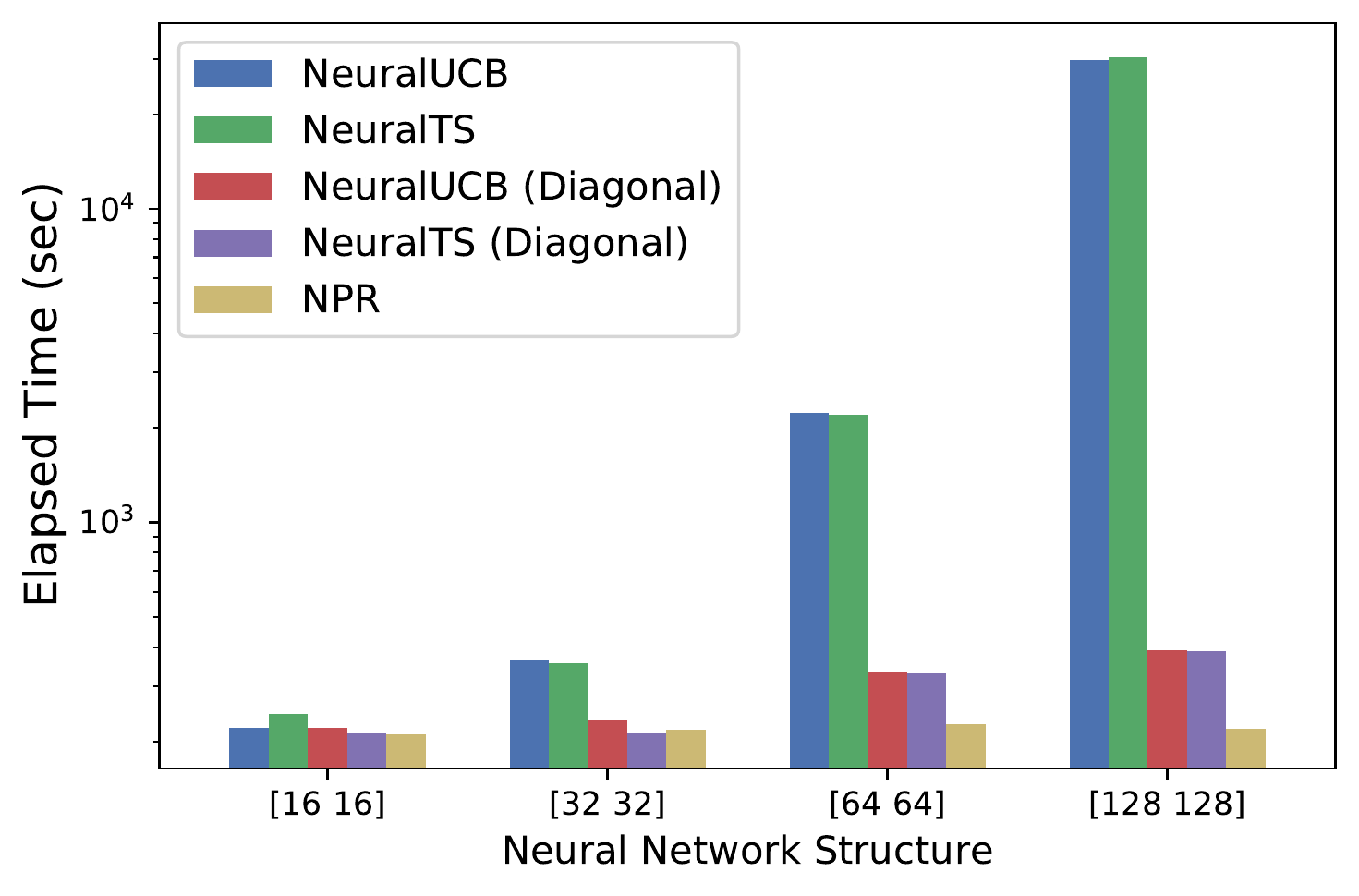}
		\label{fig:time_net}}
	\vspace{-1mm}
	\caption{Empirical results of regret and time consumption on synthetic dataset.}\label{fig:syn} 
	\vspace{-6mm}
\end{figure*}

In our simulation, we first generate a size-$K$ arm pool, in which each arm $i$ is associated with a $d$-dimensional feature vector $\xb_i$. The contextual vectors are sampled uniformly at random from a unit ball. We generate the rewards via the following two nonlinear functions:
\begin{align*}
h_1(\xb) = 10^{-2}(\xb^\top\Sigma\Sigma^\top\xb), \quad h_2(\xb) = \exp(-10 (\xb^\top\btheta)^2)\,,
\end{align*}
where each element of $\Sigma \in \RR^{d \times d}$ is randomly sampled from $\cN(0, 1)$, and $\btheta$ is randomly sampled from a unit ball. At each round $t$, only a subset of $k$ arms out of the total $K$ arms are sampled without replacement and disclosed to all algorithms for selection. 
The ground-truth reward $r_{a}$ is corrupted by  Gaussian noise $\eta = \cN(0, \xi^2)$ before feeding back to the bandit algorithms. 
We fixed the feature dimension $d = 50$ and the pool size $K = 100$ with $k = 20$. $\xi$ was set to 0.1 for both $h_1$ and $h_2$.

Cumulative regret is used to compare the performance of different algorithms. Here the best arm is defined as the one with the highest expected reward in the presented $k$ candidate arms. All the algorithms were executed up to 10000 rounds in simulation, and the averaged results over 10 runs are reported in Figure~\ref{fig:syn1} to \ref{fig:time_net}. For the neural bandit algorithms, we adopted a 3-layer neural network with $m = 64$ units in each hidden layer. We did a grid search on the first 1000 rounds for regularization parameter $\lambda$ over $\{10^{-i}\}^4_{i=0}$, and step size $\eta$ over $\{10^{-i}\}^3_{i=0}$. We also searched for the concentration parameter $\delta$ so that the exploration parameter $\nu$ is equivalently searched over $\{10^{-i}\}^4_{i=0}$. The best set of hyper-parameters for each algorithm are applied for the full simulation over 10000 rounds, i.e., only that trial will be continued for the rest of 9000 rounds. 

In Figure~\ref{fig:syn1} and \ref{fig:syn2}, we can observe that linear bandit algorithms clearly failed to learn non-linear mapping and suffered much higher regret compared to the neural algorithms. \model{} achieved similar performance as NeuralUCB and NeuralTS, which shows the effectiveness of our randomized exploration strategy in neural bandit learning. 

Figure~\ref{fig:time} shows the average running time for all the neural bandit models in this experiment. With the same network structure, the time spent for model update was similar across different neural bandit models. But we can clearly observe that NeuralUCB and NeuralTS took much more time in arm selection. When choosing an arm, NeuralUCB and NeuralTS, and their corresponding versions with diagonal approximation, need to repeatedly compute: 1) the inverse of the covariance matrix based on the pulled arms in history; and 2) the confidence set of their reward estimation on each candidate arm. The size of the covariance matrix depends on the number of parameters in the neural network (e.g., 7460-by-7460 in this experiment). With diagonal approximation, the computational cost is significantly reduced. But, as mentioned before, there is no theoretical guarantee about the approximation's impact on the model's performance. Intuitively, the impact depends on how the full covariance matrix concentrates on its diagonal, e.g., whether the random feature mappings of the network on the pulled arms are correlated among their dimensions. Unfortunately, this cannot be determined in advance. Figure~\ref{fig:syn1} and \ref{fig:syn2} show that the diagonal approximation leads to a significant performance drop. This confirms our concern of such an approximation. In contrast, as shown in Figure~\ref{fig:time}, there is no extra computation overhead in \model{}; in the meantime, it provides comparable performance to NeuralUCB and NeuralTS with a full covariance matrix.

To further investigate the efficiency advantage of \model{}, we reported the total running time for the neural bandit algorithms under different sizes of the arm pool and different structures of the neural network under the same simulation setting. With the same neural network structure, models with different sizes of candidate arms should share the same running time for model update. Hence, Figure~\ref{fig:time_arm} reports how the size of arm pool affects the time spent on arm selection. As we can find, with more candidate arms, NeuralUCB, NeuralTS, and their corresponding diagonal approximations spent more time on arm selection, or more specifically on constructing the confidence set of the reward estimation for each arm. With a fixed size arm pool, the complexity of the neural network will strongly affect the time for computing the inverse of the covariance matrix. In Figure~\ref{fig:time_net}, the x-axis shows the structure for the hidden layers. The running time for NeuralUCB and NeuralTS significantly increased with enlarged neural networks. The running time of \model{} stayed stable, across varying sizes of arm pools and the network complexity. This shows the strong advantage of \model{} in efficiency and effectiveness, especially for large-scale problems.
 
\subsection{Experiment on classification datasets}

\begin{figure*}[t]
	\centering
		\subfigure[Adult]{\includegraphics[width=0.32\linewidth]{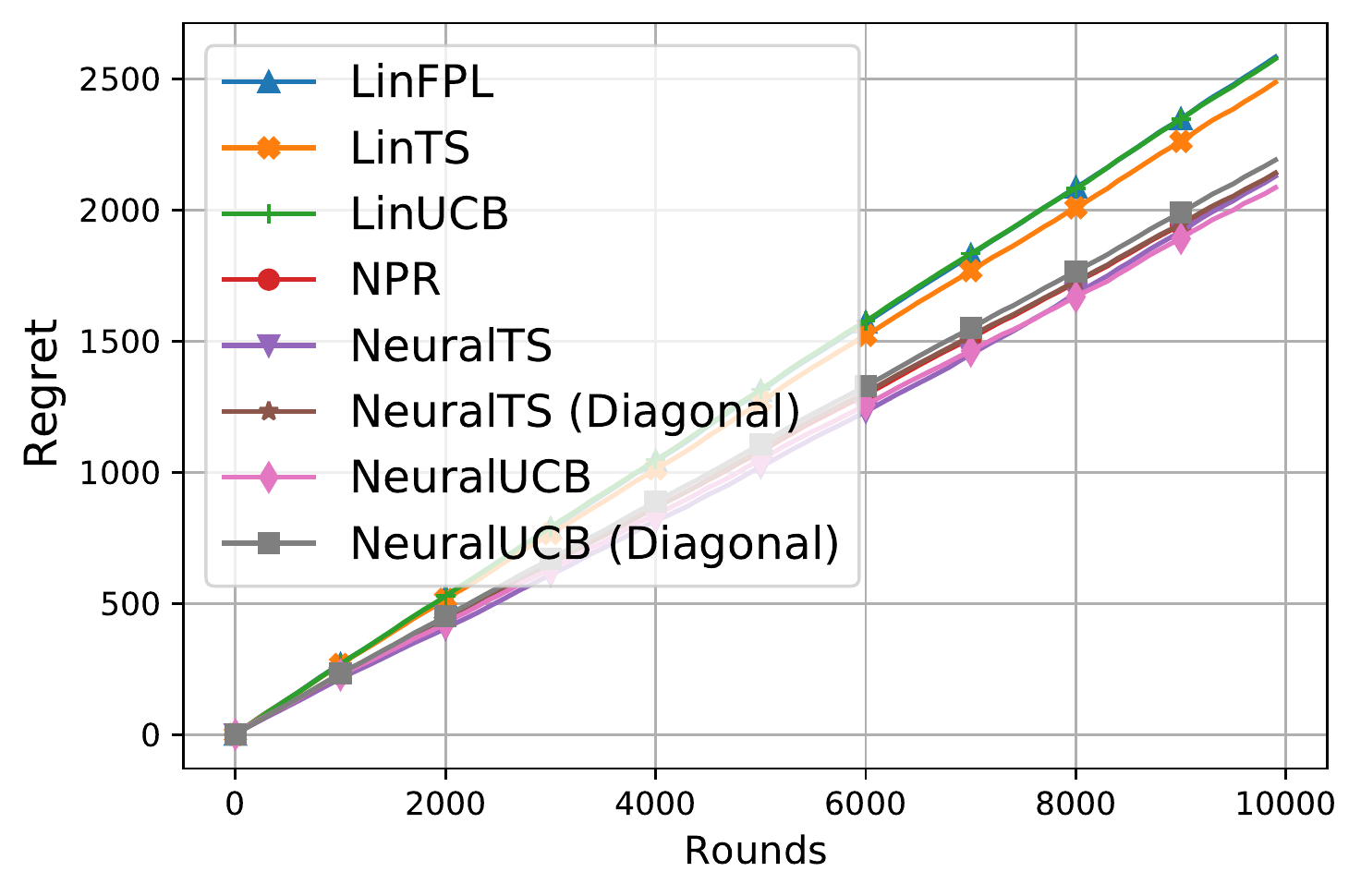}
        \label{fig:adult}}	
		\subfigure[Mushroom]{\includegraphics[width=0.32\linewidth]{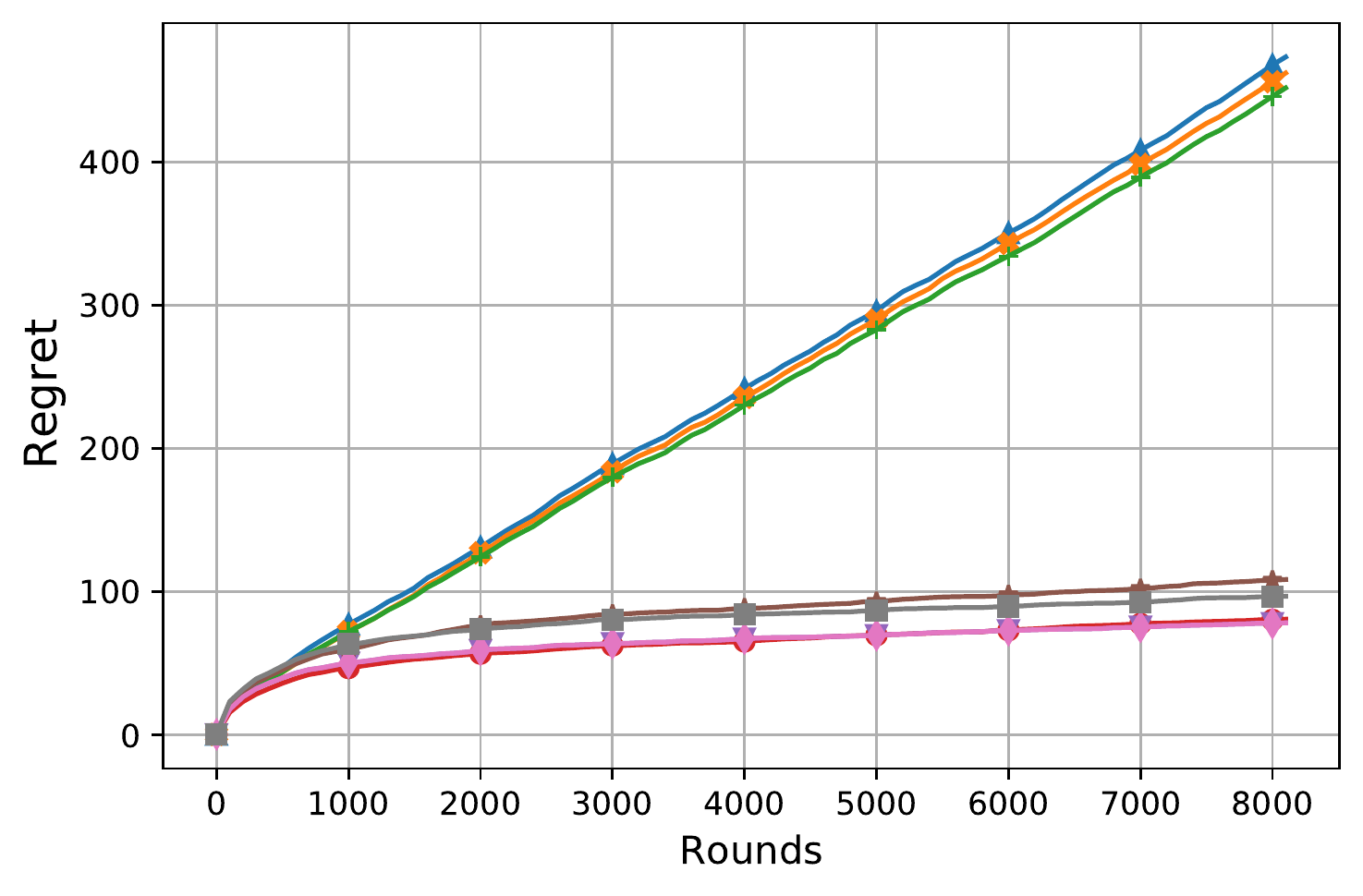}
		\label{fig:mushroom}}
		\subfigure[Shuttle]{\includegraphics[width=0.32\linewidth]{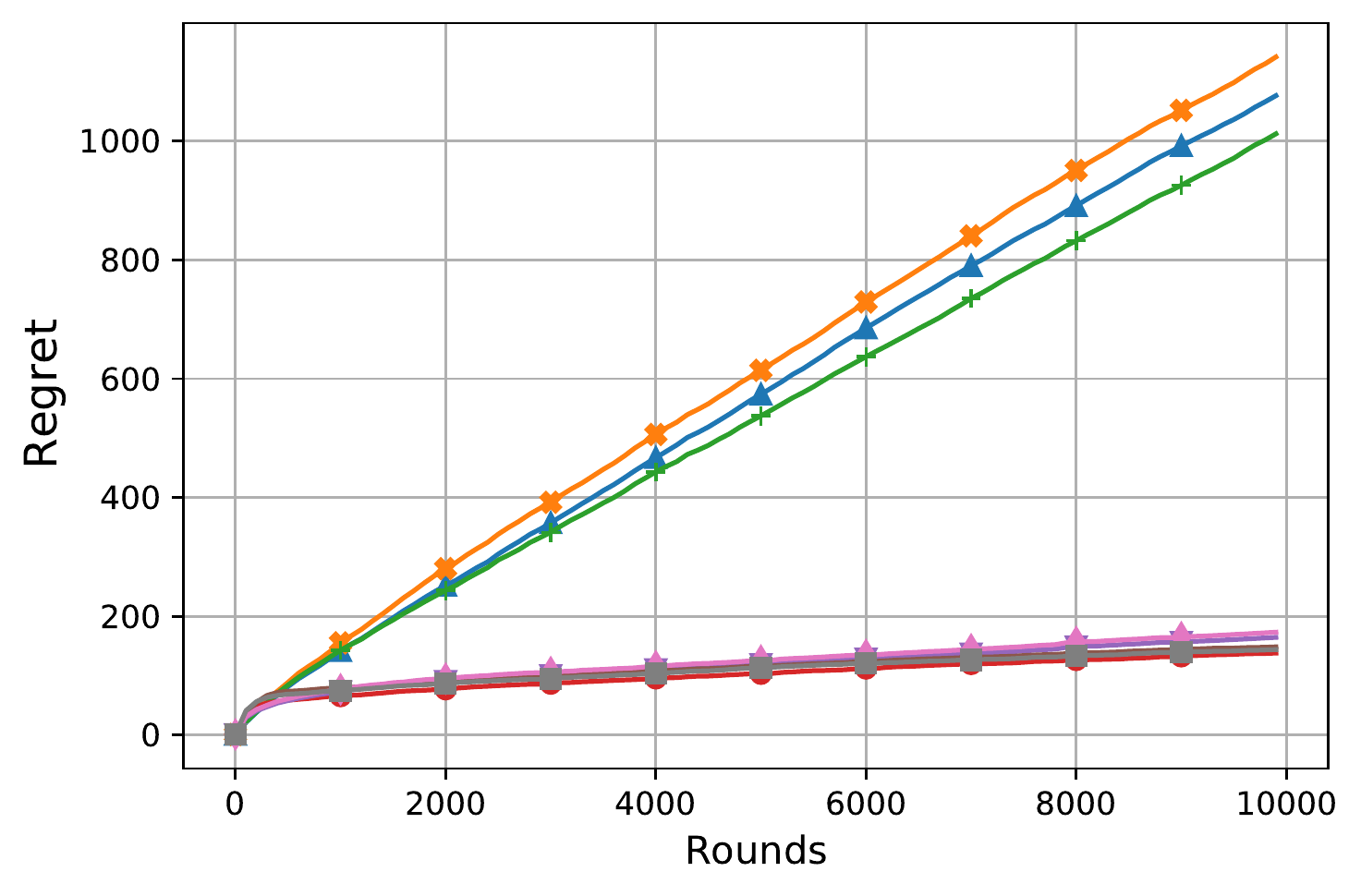}
		\label{fig:shuttle}}
	\vspace{-2mm}
	\caption{Comparison of \model{} and the baselines on the UCI datasets. }\label{fig:class} 
	\vspace{-4mm}
\end{figure*}

Following the setting in \citep{zhou2020neural}, we evaluated the bandit algorithms on $K$-class classification problem with six public benchmark datasets, including \texttt{adult}, \texttt{mushroom}, and \texttt{shuttle} from UCI Machine Learning Repository~\citep{Dua:2019}. Due to space limit, we report the results on these three datasets in this section, and leave the other three,  \texttt{letter}, \texttt{covertype}, and \texttt{MagicTelescope} in the appendix. We adopted the disjoint model \citep{li2010contextual} to build the context vectors for candidate arms: given an input instance with feature $\xb \in \RR^d$ of a $k$-class classification problem, we constructed the context features for $k$ candidate arms as: $\xb_1 = (\xb, \zero,\dots, \zero), \dots, \xb_k = (\zero,\dots, \zero, \xb) \in \RR^{d\times k}$. The model receives reward 1 if it selects the correct class by pulling the corresponding arm, otherwise 0. The cumulative regret measures the total mistakes made by the model over $T$ rounds. Similar to the experiment in synthetic datasets, we adopted a 3-layer neural network with $m = 64$ units in each hidden layer. We used the same grid search setting as in our experiments on synthetic datasets. Figure~\ref{fig:class} shows the averaged results across 10 runs for 10000 rounds, except for \texttt{mushroom} which only contains 8124 data instances.

There are several key observations in Figure~\ref{fig:class}. First, the improvement of applying a neural bandit model depends on the nonlinearity of the dataset. For example, the performance on \texttt{mushoom} and \texttt{shuttle} was highly boosted by the neural models, while the improvement on \texttt{adult} was limited. Second, the impact of the diagonal approximation for NeuralUCB and NeuralTS also depends on the dataset. It did not hurt their performance too much on \texttt{shuttle}, while using the full covariance matrix led to much better performance on \texttt{Mushroom}. Overall, \model{} showed consistently better or at least comparable performance to the best neural bandit baselines in all six datasets. Such robust performance and high efficiency make it more advantageous in practice.

\subsection{Experiment on LastFM \& Delicious}

\begin{figure*}[t]
	\centering
		\subfigure[LastFM dataset ]{\includegraphics[width=0.45\linewidth]{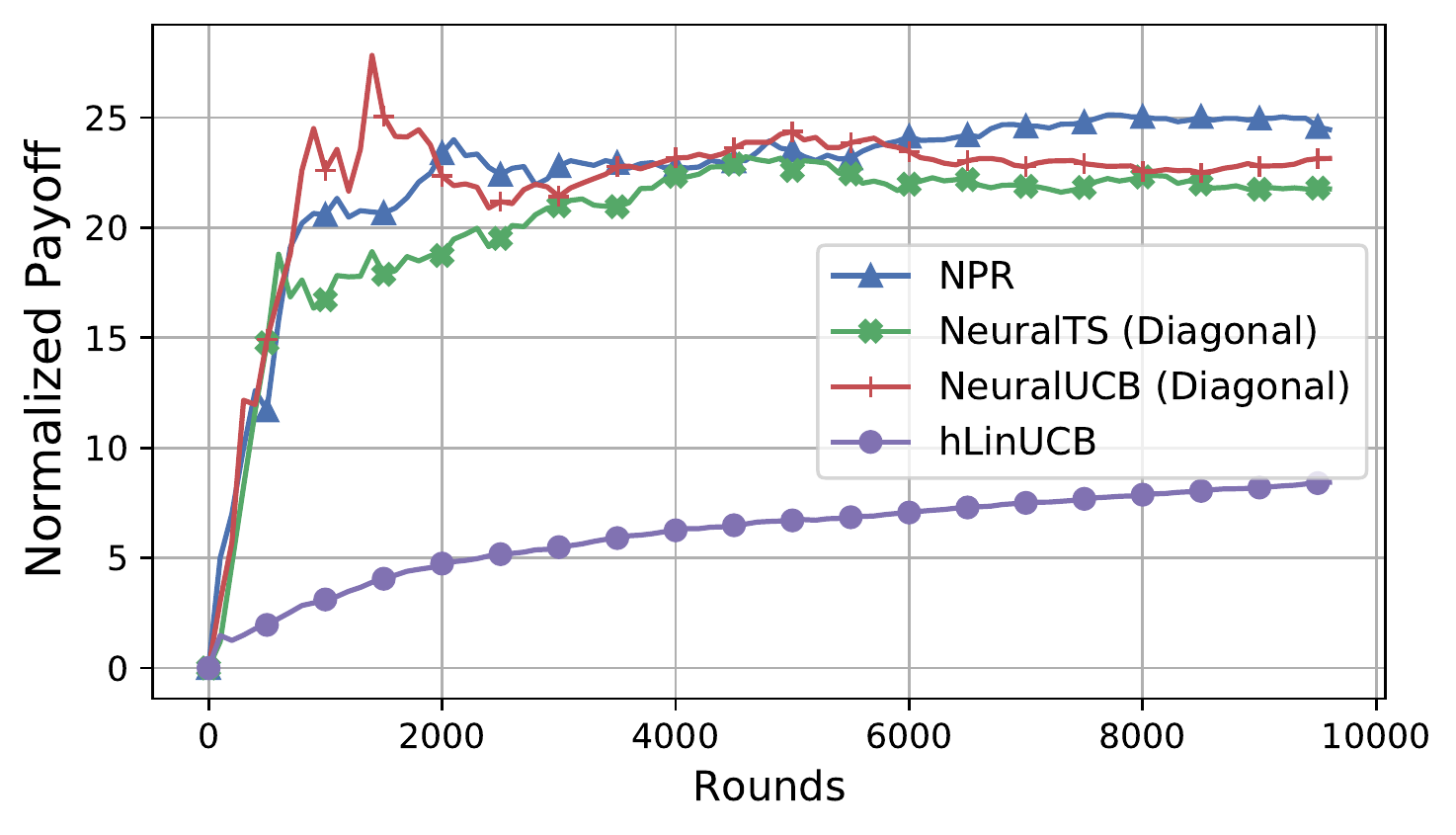}
        \label{fig:lastfm}}	
		\subfigure[Delicious dataset]{\includegraphics[width=0.45\linewidth]{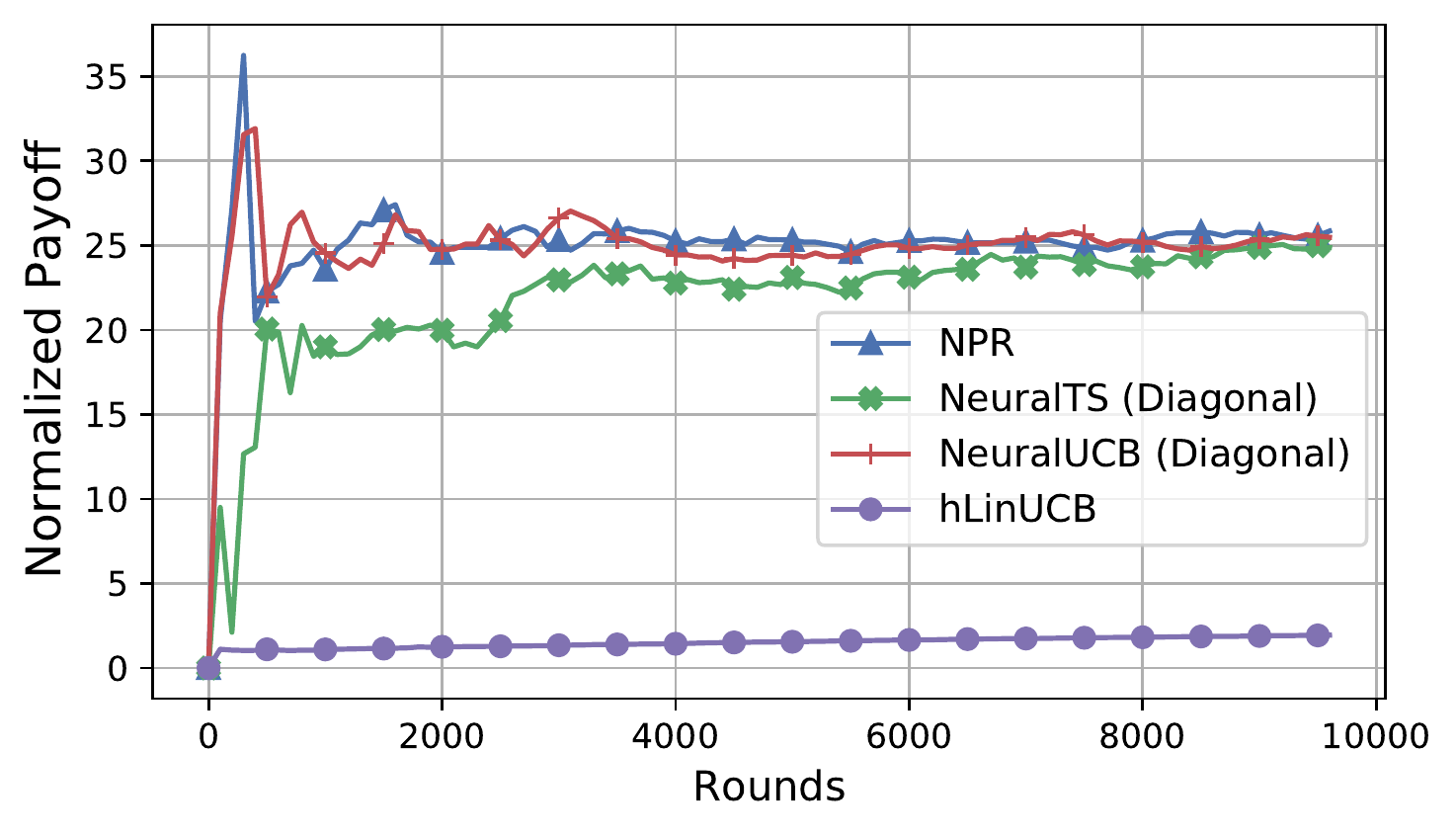}
		\label{fig:delicious}}
	\vspace{-2mm}
	\caption{Comparisons of normalized rewards on LastFM $\&$ Delicious datasets.}\label{fig:realdata} 
	\vspace{-4mm}
\end{figure*}

The LastFM dataset was extracted from the music streaming service website Last.fm, and the Delicious data set was extracted from the social bookmark sharing service website Delicious \citep{wu2016contextual}. 
In particular, the LastFM dataset contains 1,892 users and 17,632 items (artists). We treat the ``listened artists'' in each user as positive feedback. The Delicious dataset contains 1,861 users and 69,226 items (URLs). We treat the bookmarked URLs for each user as positive feedback. Following the standard setting~\citep{cesa2013gang}, we pre-processed these two datasets. For the item features, we used all the tags associated with each item to create its TF-IDF feature vector. Then PCA was applied to reduce the dimensionality of the features. In both datasets, we used the first 25 principle components to construct the item feature vectors. On the user side, we created user features by running node2vec~\citep{grover2016node2vec} on the user relation graph extracted from the social network provided by the datasets. Each of the users was represented with a 50-dimensional feature vector. We generated the candidate pool as follows: we fixed the size of candidate arm pool to $k=25$ for each round; for each user, we picked one item from those non-zero payoff items according to the complete observations in the dataset, and randomly picked the other 24 from those zero-payoff items. We constructed the context vectors by concatenating the user and item feature vectors. In our experiment, a 3-layer neural network with $m=128$ units in each hidden layer was applied to learn the bandit model. And the same gird search was performed to find the best set of hyper-parameters. We performed the experiment with 10000 rounds. Because of the high demand of GPU memory for the matrix inverse computation, we only compared \model{} to NeuralUCB and NeuralTS with the diagonal approximation.

We compared the cumulative reward from each algorithm in Figure~\ref{fig:realdata}. Besides the neural contextual bandit algorithms, we also include the best reported model on these two datasets, hLinUCB~\citep{wang2016learning} for comparison. To improve visibility, we normalized the cumulative reward by a random strategy's cumulative reward \citep{wu2016contextual}, where the arm was randomly sampled from the candidate pool. All the neural models showed the power to learn the potential non-linear reward mapping on these two real-world datasets. \model{} showed its advantage over the baselines (especially on LastFM). As \model{} has no additional requirement on the computation resource (e.g., no need for the expensive matrix operation as in the baselines), we also experienced much shorter running in \model{} than the other two neural bandit models.

\section{Conclusion}
\label{sec:conclusion}
Existing neural contextual bandit algorithms sacrifice computation efficiency for effective exploration in the entire neural network parameter space. To eliminate the computational overhead caused by such an expensive operation, we appealed to randomization-based exploration and obtained the same level of learning outcome. No extra computational cost or resources are needed besides the regular neural network update. We proved that the proposed solution obtains $\tilde{\Ob}(\tilde{d}\sqrt{T})$ regret for $T$ rounds of interactions between the system and user. Extensive empirical results on both synthetic and real-world datasets support our theoretical analysis, and demonstrate the strong advantage of our model in efficiency and effectiveness, which suggest the potential of deploying powerful neural bandit model online in practice.

Our empirical efficiency analysis shows that the key bottleneck of learning a neural model online is at the model update step. Our current regret analysis depends on (stochastic) gradient descent over the entire training set for model update in each round, which is prohibitively expensive. We would like to investigate the possibility of more efficient model update, e.g., online stochastic gradient descent or continual learning, and the corresponding effect on model convergence and regret analysis. Currently, our analysis focuses on the $K$-armed setting, and it is important to extend our solution and analysis to infinitely many arms in our future work.

\section*{Acknowledgements}

We thank the anonymous reviewers for their helpful comments. YJ and HW are partially supported by the National Science Foundation under grant IIS-2128019 and IIS-1553568. WZ, DZ and QG are partially supported by the National Science Foundation CAREER Award 1906169 and IIS-1904183. The views and conclusions contained in this paper are those of the authors and should not be interpreted as representing any funding agencies.

\bibliography{reference}
\bibliographystyle{iclr2022_conference}

\appendix

\section{Additional Experiments}

\subsection{Experiments on Synthetic Datasets}
\subsubsection{Reward prediction with more complicated neural networks} \label{sec_sublinear_result}
In the main paper, to compare NPR with the existing neural bandit baselines, we chose a relatively small neural network, e.g., $m = 64$, so that NeuralUCB/NeuralTS can be executed with a full covariance matrix. To better capture the non-linearity in reward generation, we increased the width of the neural network to $m = 128$, and report the results in Figure~\ref{fig:syn128}. All other experiment settings (e.g., underlying reward generation function) are the same as reported in the main paper. And to provide a more comprehensive picture, we also reported the variance of each algorithm's performance in this figure.

With the increased network width and limited by the GPU memory, NeuralUCB and NeuralTS can only be executed with the diagonal matrix of the covariance matrix when computing their required ``exploration bonus term''. In Figure~\ref{fig:syn128}, we can observe 1) all neural bandit algorithms obtained the expected sublinear regret; and 2) NPR performed better than NeuralUCB and NeuralTS, because of the diagonal approximation that we have to use on these two baselines. This again strongly suggested the advantage of NPR: the approximation due to the computational complexity in NeuralUCB and NeuralTS limits their practical effectiveness, though they have the same theoretical regret bound as NPR's. 3) FALCON+ shows better performance in the initial rounds, while with more iterations, its regret is higher than the neural bandit baselines' and our NPR model'. Compared to the neural bandit models, FALCON+ has much slower convergence, e.g, the slope of its regret curve did not decrease as fast as other models’ over the course of interactions. After a careful investigation, we found this is caused by its scheduled update: because less and less updates can be performed in the later rounds, FALCON+ still selected the suboptimal arms quite often. Another important observation is that FALCON+ has a very high variance in its performance: in some epochs, it could keep sampling the suboptimal arms. This is again caused by its lazy update: in some epochs of some trials, its estimation could be biased by large reward noise, which will directly cost it miss the right arm in almost the entire next epoch.

\begin{figure*}[h]
	\centering
		\subfigure[$h_1(\xb) = 10^{-2} (\xb^\top\Sigma\Sigma^\top\xb)$ ]{\includegraphics[width=0.45\linewidth]{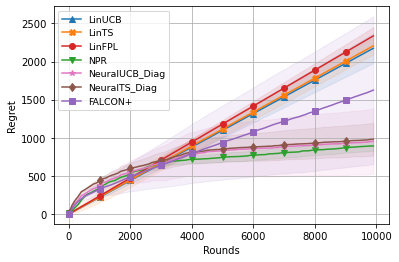}}	
		\subfigure[$h_2(\xb) = \exp(-10 (\xb^\top\btheta)^2)$]{\includegraphics[width=0.45\linewidth]{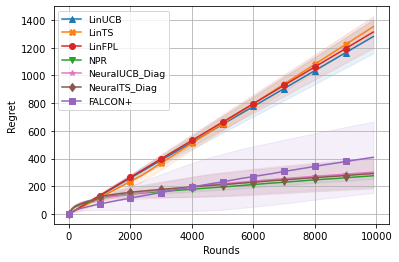}}
		\caption{Comparison of NPR and the baselines on the synthetic dataset with $m=128$}
		\label{fig:syn128}
\end{figure*}

\subsubsection{Comparisons under different values of $k$}
For the experiment in the main paper, at each round only $k=20$ arms are sampled from the $K = 100$ arm pool and disclosed to all the algorithms for selection. It is to test a more general setting in practice, e.g., different recommendation candidates appear across rounds. In the meantime, such setting also introduces more practical difficulties as only a subset of arms are revealed to the agent to learn from each time. In this experiment, we report the results of the neural bandit models with different value $k$, e.g., the number of arms disclosed each round. We followed the setting in Section \ref{sec_sublinear_result} to set the network width $m = 128$.

Figure~\ref{fig:syn_compare} demonstrates that for both datasets, revealing all the candidate arms, e.g., $k=100$, will considerably reduce the regret compared to only revealing a subset of the arms. But in both settings, all bandit algorithms, including NPR, can obtain desired performance (i.e., sublinear regret). And again, NPR still obtained more satisfactory performance as it is not subject to the added computational complexity in NeuralUCB and NeuralTS. The FALCON+ baseline performed much better when $k=20$ than $k=100$, as it has less chance to select the suboptimal arms in each epoch, which is expected based on its polynomial dependence on the number of arms $K$. On the contrary, NPR only has a logarithmic dependence on $K$, whose advantage was validated in this experiment. 

\begin{figure*}[t]
	\centering
		\subfigure[$h_1(\xb) = 10^{-2} (\xb^\top\Sigma\Sigma^\top\xb)$ ]{\includegraphics[width=0.45\linewidth]{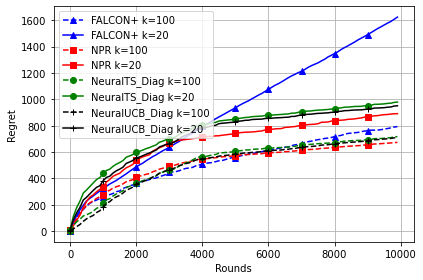}}
		\subfigure[$h_2(\xb) = \exp(-10 (\xb^\top\btheta)^2)$]{\includegraphics[width=0.45\linewidth]{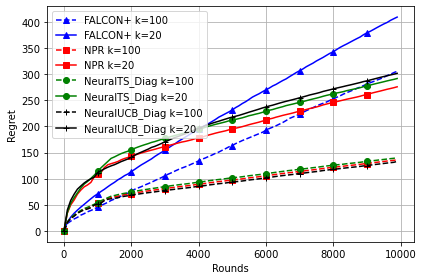}}
		\caption{Comparison of NPR and the baselines on the synthetic dataset with different $k$.}
		\label{fig:syn_compare}
\end{figure*}

\subsection{Experiments on Classification Datasets}
\begin{figure*}[h]
	\centering
		\subfigure[MagicTelescope]{\includegraphics[width=0.32\linewidth]{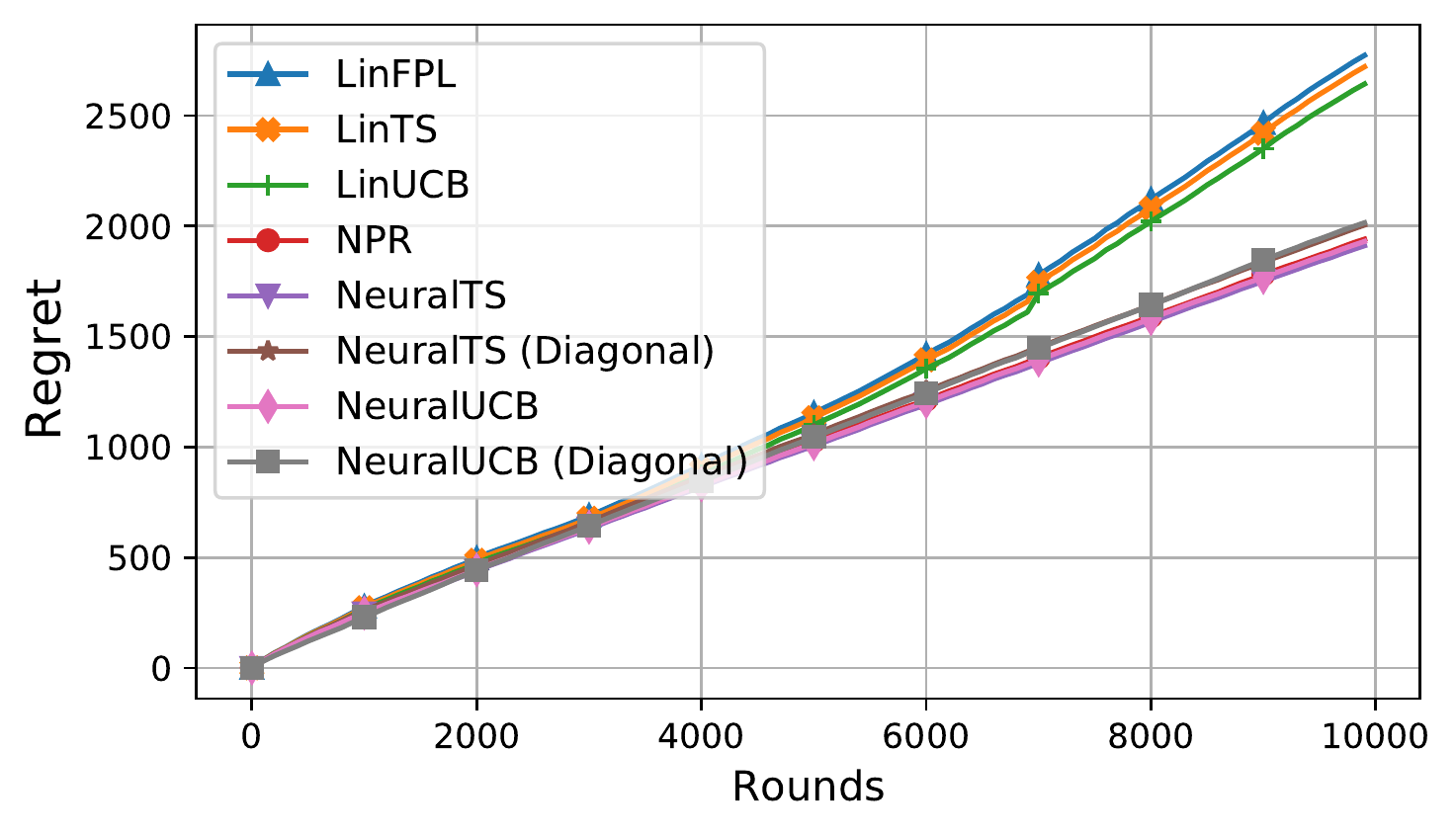}
        \label{fig:magic}}	
		\subfigure[Letter]{\includegraphics[width=0.32\linewidth]{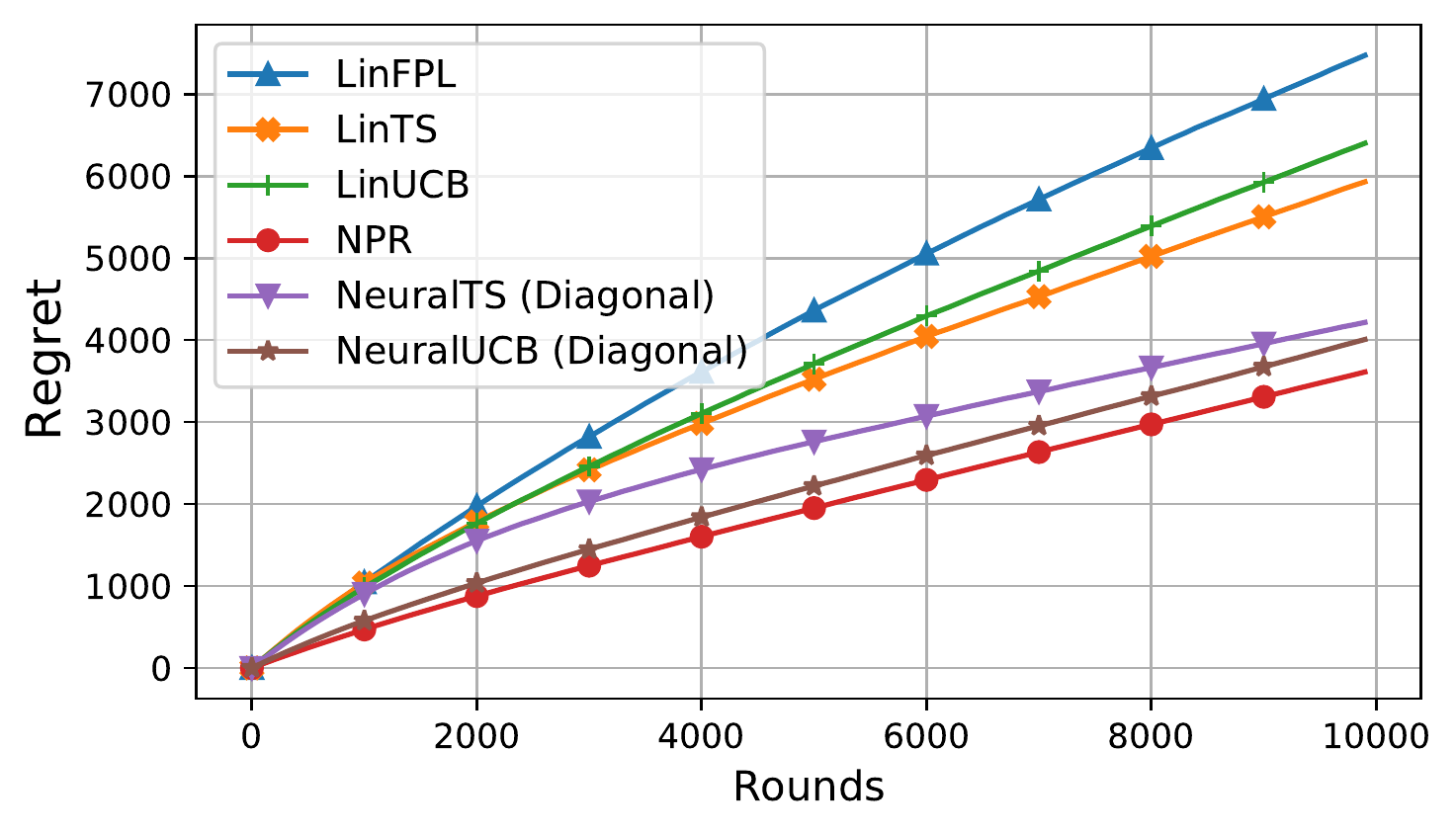}
		\label{fig:letter}}
		\subfigure[Covertype]{\includegraphics[width=0.32\linewidth]{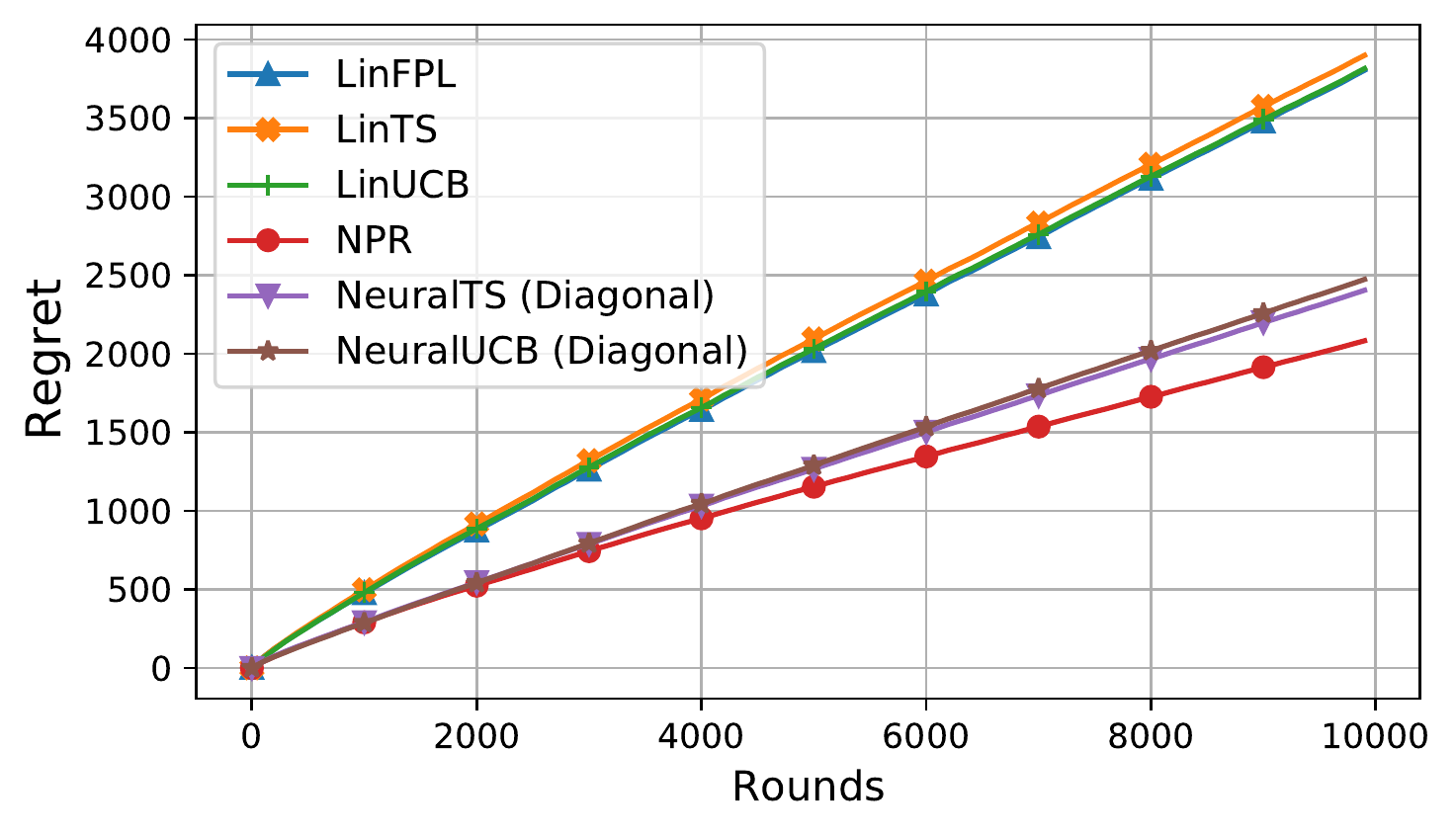}
		\label{fig:type}}
	\vspace{-1mm}
	\caption{Comparison of \model{} and the baselines on the UCI datasets. }\label{fig:classappendix} 
	\vspace{-2mm}
\end{figure*}

\begin{table}[h]
\caption{Dataset statistics}
\begin{sc}
\begin{tabular}{ccccccc}
\hline
Dataset & Adult         & Mushroom     & Shuttle     & Magic & Letter         & Covertype    \\ \hline
\begin{tabular}[c]{@{}c@{}}Input Dimension\end{tabular} & 2 $\times$ 15 & 2 $\times$23 & 7 $\times$ 9 & 2 $\times$ 12  & 16 $\times$ 26 & 54 $\times$ 7 \\ \hline
\end{tabular}
\end{sc}
\end{table}

We present the experiment results of the classification problem on UCI datasets, \texttt{MagicTelescope},  \texttt{letter} and \texttt{covertype}. The experiment setting and grid search setting for parameter tuning are the same as described in Section~\ref{sec:exp}. For datasets \texttt{letter} and \texttt{covertype}, only the results of NeuralUCB and NeuralTS with diagonal approximation are reported as the required GPU memory for the calculation of matrix inverse, together with the neural model update, exceeds the supported memory of the NVIDIA GEFORCE RTX 2080 graphical card. Similar observations are obtained for these three datasets. Compared to the linear contextual models, the neural contextual bandit algorithms significantly improve the performance with much lower regret (mistakes on classification). \model{} showed consistently better performance to the best neural bandit baselines. And due to its perturbation-based exploration strategy, no extra computational resources are needed, which makes it more practical in real-world problems, especially large-scale tasks. 


\section{Background for Theoretical Analysis}

We denote the set of all the possible $K$ arms by their context vectors as $\{\xb_i\}_{i=1}^K$. Our analysis is built upon the existing work in neural tangent kernel techniques.
\begin{definition}[\citet{jacot2018neural,cao2019generalization2}]\label{def:ntk}
Let $\{\xb_i\}_{i=1}^{K}$ be a set of contexts.  Define
\begin{align*}
    \tilde \Hb_{i,j}^{(1)} &= \bSigma_{i,j}^{(1)} = \la \xb_i, \xb_j\ra, ~~~~~~~~  \Bb_{i,j}^{(l)} = 
    \begin{pmatrix}
    \bSigma_{i,i}^{(l)} & \bSigma_{i,j}^{(l)} \\
    \bSigma_{i,j}^{(l)} & \bSigma_{j,j}^{(l)} 
    \end{pmatrix},\notag \\
    \bSigma_{i,j}^{(l+1)} &= 2\EE_{(u, v)\sim N(\zero, \Bb_{i,j}^{(l)})} \left[\phi(u)\phi(v)\right],\notag \\
    \tilde \Hb_{i,j}^{(l+1)} &= 2\tilde \Hb_{i,j}^{(l)}\EE_{(u, v)\sim N(\zero, \Bb_{i,j}^{(l)})} \left[\phi'(u)\phi'(v)\right] + \bSigma_{i,j}^{(l+1)}.\notag
\end{align*}
Then, $\Hb = (\tilde \Hb^{(L)} + \bSigma^{(L)})/2$ is called the \emph{neural tangent kernel (NTK)} matrix on the context set. 
\end{definition}

With Definition \ref{def:ntk}, we also have the following assumption on the contexts: $\{\xb_i\}_{i=1}^{K}$. 
\begin{assumption}\label{assumption:input}
$\Hb \succeq \lambda_0\Ib$; and moreover, for any $1 \leq i \leq K$, $\|\xb_i\|_2 = 1$ and $[\xb_i]_j =[\xb_i]_{j+d/2}$.
\end{assumption}
By assuming $\Hb \succeq \lambda_0\Ib$, the neural tangent kernel matrix is non-singular~\citep{du2018gradientdeep,arora2019exact, cao2019generalization2}.  
It can be easily satisfied when \emph{no} two context vectors in the context set are in parallel. 
The second assumption is just for convenience in analysis and can be easily satisfied by: for any context vector $\xb, \|\xb\|_2 = 1$, we can construct a new context $\xb' = [\xb^\top, \xb^\top]^\top/\sqrt{2}$. It can be verified that if $\btheta_0$ is initialized as in Algorithm~\ref{alg:phe}, then $f(\xb_i; \btheta_0) = 0$ for any $i \in [K]$.

The NTK technique builds a connection between the analysis of DNN and kernel methods. With the following definition of effective dimension, we are able to analyze the neural network by some complexity measures previously used for kernel methods.

\begin{definition}\label{def:effective_dimension}
The effective dimension $\tilde d$ of the neural tangent kernel matrix on contexts $\{\xb_i\}_{i=1}^{K}$ is defined as
\begin{align*}
    \tilde d = \frac{\log \det (\Ib + T\Hb/\lambda)}{\log (1+TK/\lambda)}.
\end{align*}
\end{definition}

The notion of effective dimension is first introduced in \cite{valko2013finite} for analyzing kernel contextual bandits, which describes the actual underlying dimension in the set of observed contexts $\{\xb_i\}_{i=1}^K$.


And we also need the following concentration bound on Gaussian distribution.

\begin{lemma}
\label{lemma:gaussianconcen}
Consider a normally distributed random variable $X \sim \cN(\mu, \sigma^2)$ and $\beta \geq 0$. The probability that $X$ is within a radius of $\beta\sigma$ from its mean can be written as,
\begin{align*}
    \PP(|X - \mu| \leq \beta\sigma) \geq 1 - \exp(-\beta^2 / 2).
\end{align*}
\end{lemma}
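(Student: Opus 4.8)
The plan is to reduce the statement to a sharp two-sided tail bound for the standard normal and then prove that bound directly. First I would standardize: writing $Z = (X - \mu)/\sigma$, we have $Z \sim \cN(0, 1)$, and the event $\{|X - \mu| \leq \beta\sigma\}$ coincides with $\{|Z| \leq \beta\}$. Hence the claim is equivalent to the tail estimate $\PP(|Z| > \beta) \leq \exp(-\beta^2/2)$ for every $\beta \geq 0$. The one point that genuinely requires care is the constant: a naive Chernoff bound on a single tail, $\PP(Z > \beta) \leq \exp(-t\beta + t^2/2)$ optimized at $t = \beta$, already yields $\PP(Z > \beta) \leq \exp(-\beta^2/2)$, but summing the two symmetric tails would only give the weaker bound $2\exp(-\beta^2/2)$. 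So the real task is to shave this spurious factor of two.

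By symmetry of the standard normal density, $\PP(|Z| > \beta) = 2\,\PP(Z > \beta)$, so it suffices to establish the sharper one-sided bound $\PP(Z > \beta) \leq \tfrac12 \exp(-\beta^2/2)$. I would prove this with the classical two-dimensional integral trick. Writing $Q(\beta) = \int_\beta^\infty e^{-x^2/2}\,dx$, squaring gives $Q(\beta)^2 = \iint_{[\beta,\infty)^2} e^{-(x^2+y^2)/2}\,dx\,dy$. Since every point with $x \geq \beta$ and $y \geq \beta$ satisfies $x, y \geq 0$ and $x^2 + y^2 \geq 2\beta^2$, the quarter-square $[\beta,\infty)^2$ is contained in the first-quadrant region lying outside the circle of radius $\sqrt{2}\,\beta$. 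Bounding the integral by that larger region and passing to polar coordinates gives $\int_0^{\pi/2}\!\int_{\sqrt{2}\beta}^\infty e^{-r^2/2}\, r\,dr\,d\theta = \tfrac{\pi}{2} e^{-\beta^2}$, so $Q(\beta) \leq \sqrt{\pi/2}\,e^{-\beta^2/2}$. Dividing by $\sqrt{2\pi}$ turns this into exactly $\PP(Z > \beta) \leq \tfrac12 e^{-\beta^2/2}$, and multiplying by two recovers $\PP(|Z| > \beta) \leq e^{-\beta^2/2}$, i.e. the claim.

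The only real obstacle is thus the sharp constant, and the containment $[\beta,\infty)^2 \subseteq \{x, y \geq 0,\ x^2 + y^2 \geq 2\beta^2\}$ is what supplies it cleanly without giving back the factor of two. As an alternative to the polar-coordinate route I could verify the one-sided bound by monotonicity: set $h(\beta) = \tfrac12 e^{-\beta^2/2} - \PP(Z > \beta)$, note $h(0) = 0$, and observe that $h'(\beta) = e^{-\beta^2/2}\bigl(\tfrac{1}{\sqrt{2\pi}} - \tfrac{\beta}{2}\bigr)$ is positive then negative while $h(\beta) \to 0$ as $\beta \to \infty$, which forces $h \geq 0$ on $[0,\infty)$. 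Both arguments are elementary, and I would present the integral version since it is self-contained and delivers the constant $\tfrac12$ without invoking any special properties of the Gaussian CDF.
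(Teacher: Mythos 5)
Your proof is correct, but there is nothing to compare it against: the paper states this lemma as a background fact without any proof, treating it as standard Gaussian concentration. You correctly identify the one nontrivial point, namely that the naive two-sided Chernoff bound only gives $2\exp(-\beta^2/2)$, so the sharp constant requires the one-sided estimate $\PP(Z > \beta) \leq \tfrac{1}{2}\exp(-\beta^2/2)$ (with equality at $\beta = 0$). Your squaring argument delivers it cleanly: the containment $[\beta,\infty)^2 \subseteq \{x,y \geq 0,\ x^2+y^2 \geq 2\beta^2\}$ is valid because $\beta \geq 0$, the polar integral $\int_0^{\pi/2}\int_{\sqrt{2}\beta}^{\infty} e^{-r^2/2}\, r\, dr\, d\theta = \tfrac{\pi}{2}e^{-\beta^2}$ is computed correctly, and dividing $Q(\beta) \leq \sqrt{\pi/2}\,e^{-\beta^2/2}$ by $\sqrt{2\pi}$ gives exactly the factor $\tfrac{1}{2}$; the symmetry step $\PP(|Z| > \beta) = 2\,\PP(Z > \beta)$ is unproblematic for a continuous distribution. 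Your alternative monotonicity argument also checks out: $h(0) = 0$, $h'(\beta) = e^{-\beta^2/2}\bigl(\tfrac{1}{\sqrt{2\pi}} - \tfrac{\beta}{2}\bigr)$ changes sign exactly once from positive to negative, and $h(\beta) \to 0$ at infinity, which together force $h \geq 0$. Either version would serve as a self-contained proof of a fact the authors simply cite; the only thing your write-up buys beyond the paper is completeness, at no cost in length or generality.
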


\section{Proof of Theorems and Lemmas in Section~\ref{sec:regret}}
\label{app:proof}

\subsection{Proof of Lemma~\ref{lemma:linearconcentration}}
\begin{proof}[Proof of Lemma \ref{lemma:linearconcentration}]
By Lemma~\ref{lemma:linear}, with a satisfied $m$, with probability at least $1 - \delta$, we have for any $i \in [K]$, 
\begin{align*}
    h(\xb_i) = \la \gb(\xb_i; \btheta_0)/\sqrt{m}, \sqrt{m}(\btheta^* - \btheta_0)\ra, \text{ and }\sqrt{m}\|\btheta^* - \btheta_0\|_2 \leq \sqrt{2\hb^\top\Hb^{-1}\hb} \leq S. 
\end{align*}
Hence, based on the definition of $\Ab_t$ and $\bar\bbb_t$, we know that $\Ab_t^{-1}\bar\bbb_t$ is the least square solution on top of the observation noise. Therefore, by Theorem 1 in \cite{abbasi2011improved}, with probability at least $1-\delta$, for any $1 \leq t \leq T$, $\btheta^*$ satisfies that
\begin{align}
    \|\sqrt{m}(\btheta^* - \btheta_0) - \Ab_t^{-1}\bar\bbb_t\|_{\Ab_t} \leq \sqrt{R^2\log\frac{\det (\Ab_t)}{\delta^2\det (\lambda \Ib)}} + \sqrt{\lambda}S,\label{eq:newinset_0}
\end{align}
where $R$ is the sub-Gaussian variable for the observation noise $\eta$.
Then we have,
\begin{align*}
    &|\la\gb(\xb_{t, i};\btheta_0), \Ab_t^{-1}\bar{\bbb}_t/\sqrt{m}\ra - h(\xb_{t, i})| \\
    = & |\la\gb(\xb_{t, i};\btheta_0), \Ab_t^{-1}\bar{\bbb}_t/\sqrt{m}\ra - \la \gb(\xb_{t, i}; \btheta_0), \btheta^* - \btheta_0\ra| \\
    \leq & \|\sqrt{m}(\btheta^* - \btheta_0) - \Ab_t^{-1}\bar\bbb_t\|_{\Ab_t} \|\gb(\xb_{t, i}; \btheta_0)/\sqrt{m}\|_{\Ab_t^{-1}} 
    \leq \alpha_t \|\gb(\xb_{t, i}; \btheta_0)/\sqrt{m}\|_{\Ab_t^{-1}}.
\end{align*}
with $\alpha_t = \sqrt{R^2\log\frac{\det (\Ab_t)}{\delta^2\det (\lambda \Ib)}} + \sqrt{\lambda}S$. This completes the proof.
\end{proof}

\subsection{Proof of Lemma~\ref{lemma:pseudonoise}}
The proof starts with three lemmas that bound the error terms of the function value and gradient of a neural network. 

\begin{lemma}[Lemma B.2, \citet{zhou2020neural}]\label{lemma:newboundreference}
There exist constants $\{\bar C_i\}_{i=1}^5>0$ such that for any $\delta > 0$, with $J$ as the number of steps for gradient descent in neural network learning, if for all $t \in [T]$, $\eta$ and $m$ satisfy
\begin{align}
    &2\sqrt{t/(m\lambda)} \geq \bar C_1m^{-3/2}L^{-3/2}[\log(KL^2/\delta)]^{3/2},\notag \\
    &2\sqrt{t/(m\lambda)} \leq \bar C_2\min\big\{ L^{-6}[\log m]^{-3/2},\big(m(\lambda\eta)^2L^{-6}t^{-1}(\log m)^{-1}\big)^{3/8} \big\},\notag \\
    &\eta \leq \bar C_3(m\lambda + tmL)^{-1},\notag \\
    &m^{1/6}\geq \bar C_4\sqrt{\log m}L^{7/2}t^{7/6}\lambda^{-7/6}(1+\sqrt{t/\lambda}),\notag
\end{align}
then with probability at least $1-\delta$,
we have that $\|\btheta_{t-1} -\btheta_0\|_2 \leq  2\sqrt{t/(m\lambda )}$ and
\begin{align}
  \|\btheta_{t-1} - \btheta_0 - \Ab_t^{-1}\bar\bbb_t/\sqrt{m}\|_2  \leq (1- \eta m \lambda)^{J/2} \sqrt{t/(m\lambda)} + \bar C_5m^{-2/3}\sqrt{\log m}L^{7/2}t^{5/3}\lambda^{-5/3}(1+\sqrt{t/\lambda}).\notag
\end{align}
\end{lemma}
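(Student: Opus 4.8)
This statement is the convergence-and-linearization guarantee for gradient descent on the overparameterized ReLU network, imported essentially verbatim as Lemma~B.2 of \citet{zhou2020neural}, so my plan follows the standard neural tangent kernel (NTK) route. The governing idea is that once $m$ is large enough, every gradient-descent iterate stays inside a small ball $\|\btheta-\btheta_0\|_2\le\omega$ with radius $\omega = O(\sqrt{t/(m\lambda)})$, and inside this ball the network $f(\xb;\btheta)$ is faithfully described by its first-order expansion $f(\xb;\btheta_0)+\la\gb(\xb;\btheta_0),\btheta-\btheta_0\ra$. First I would import the local perturbation estimates for deep ReLU networks (as in \citep{allen2018convergence}, \citep{cao2019generalization2}): for $\btheta$ in this ball one has $\|\gb(\xb;\btheta)-\gb(\xb;\btheta_0)\|_2\le O(\omega^{1/3}L^3\sqrt{\log m})\,\|\gb(\xb;\btheta_0)\|_2$ together with a matching bound on the deviation of the function value from its linearization. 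These are exactly the estimates that force the four width/step-size conditions in the statement: substituting $\omega=O(\sqrt{t/(m\lambda)})$ turns each residual into a controlled power of $m^{-1}$.

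Second, I would analyze the regularized least-squares objective restricted to the linearized model. Because the regularizer $m\lambda\|\btheta-\btheta_0\|_2^2/2$ injects curvature $m\lambda$, the linearized loss is $m\lambda$-strongly convex, while its smoothness is bounded by the top eigenvalue of the empirical gradient covariance plus the regularizer, i.e. $O(m\lambda+tmL)$; its unique minimizer is precisely $\tilde\btheta:=\btheta_0+\Ab_t^{-1}\bar\bbb_t/\sqrt{m}$, the ridge solution appearing in the statement. Choosing the step size $\eta\le\bar C_3(m\lambda+tmL)^{-1}$ below the inverse smoothness, the textbook strongly-convex GD bound gives the geometric contraction $\|\btheta_{t-1}-\tilde\btheta\|_2\le(1-\eta m\lambda)^{J/2}\|\btheta_0-\tilde\btheta\|_2$, and $\|\btheta_0-\tilde\btheta\|_2\le\sqrt{t/(m\lambda)}$ since comparing objective values at the minimizer and at $\btheta_0$ yields $m\lambda\|\tilde\btheta-\btheta_0\|_2^2/2\le \cL(\tilde\btheta)\le\cL(\btheta_0)=O(t)$. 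This produces the first, optimization, term $(1-\eta m\lambda)^{J/2}\sqrt{t/(m\lambda)}$.

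Third, I would transfer this contraction from the linearized surrogate back to the true nonconvex objective $\cL$. Running GD on $\cL$ rather than on its linearization introduces, at each step, an error governed by the function-value and gradient perturbation bounds of the first paragraph; accumulating these along the strongly-convex trajectory and multiplying by the inverse-curvature factor $1/(m\lambda)$ yields the second, approximation, term $\bar C_5 m^{-2/3}\sqrt{\log m}\,L^{7/2}t^{5/3}\lambda^{-5/3}(1+\sqrt{t/\lambda})$. An inductive ``stays-in-the-ball'' argument then closes the loop: assuming $\|\btheta-\btheta_0\|_2\le\omega$ at one iterate, the bounds above force the next iterate to remain in the ball, and monotone descent of $\cL$ combined with strong convexity delivers the clean radius $\|\btheta_{t-1}-\btheta_0\|_2\le 2\sqrt{t/(m\lambda)}$.

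The main obstacle is the first step: obtaining sharp depth-$L$ perturbation bounds for the ReLU network's value and gradient that are low-order in the radius $\omega$ and degrade only polynomially in $m^{-1}$. The nonsmoothness of ReLU means these must be controlled through the stability of activation patterns across all $L$ layers simultaneously, and it is the interplay of these bounds with $\omega=O(\sqrt{t/(m\lambda)})$ that dictates the precise exponents ($m^{-2/3}$, $L^{7/2}$, $t^{5/3}$, and the $(1+\sqrt{t/\lambda})$ factor) as well as the width conditions in the hypothesis. Coupling this with the requirement that the entire GD trajectory remain in the ball, so that the linearization is uniformly valid, is the delicate part; the remaining optimization argument is routine strongly-convex analysis.
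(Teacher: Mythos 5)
This lemma is imported verbatim from \citet{zhou2020neural} and the paper offers no proof of its own (it is stated as a cited auxiliary result immediately before the proof of Lemma~\ref{lemma:pseudonoise}), so the only meaningful comparison is against the source's argument---which your sketch reconstructs faithfully: containment of the GD trajectory in a ball of radius $O(\sqrt{t/(m\lambda)})$ around $\btheta_0$, ReLU function-value and gradient perturbation bounds inside that ball, identification of the linearized ridge objective's minimizer as $\btheta_0 + \Ab_t^{-1}\bar\bbb_t/\sqrt{m}$ with strong convexity $m\lambda$ and smoothness $O(m\lambda + tmL)$ (matching the step-size condition), geometric contraction giving the $(1-\eta m\lambda)^{J/2}\sqrt{t/(m\lambda)}$ term, and accumulated linearization error giving the $m^{-2/3}$ term. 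Your proposal is correct in structure and takes essentially the same route as the cited proof, so there is nothing to flag beyond noting that the delicate ReLU perturbation estimates you identify as the main obstacle are themselves imported (they are the paper's Lemmas~\ref{lemma:cao_functionvalue} and~\ref{lemma:cao_boundgradient}).
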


\begin{lemma}[Lemma B.3 \citet{zhou2020neural}]\label{lemma:cao_functionvalue}
There exist constants $\{\bar C_i^u\}_{i=1}^3 >0$ such that for any $\delta > 0$, if $\tau$ satisfies
\begin{align}
    \bar C_1^um^{-3/2}L^{-3/2}[\log(KL^2/\delta)]^{3/2}\leq\tau \leq \bar C_2^u L^{-6}[\log m]^{-3/2},\notag
\end{align}
then with probability at least $1-\delta$,
for any $\tilde\btheta$ and $\hat\btheta$ satisfying $ \|\tilde\btheta - \btheta_0\|_2 \leq \tau, \|\hat\btheta - \btheta_0\|_2 \leq \tau$ and $j \in [K]$ we have
\begin{align}
    \Big|f(\xb_j; \tilde\btheta) - f(\xb_j;  \hat\btheta) - \la \gb(\xb_j; 
    \hat\btheta),\tilde\btheta - \hat\btheta\ra\Big| \leq \bar C_3^u\tau^{4/3}L^3\sqrt{m \log m}.\notag
\end{align}
\end{lemma}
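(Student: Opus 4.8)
The plan is to reduce the claim to a uniform \emph{gradient-stability} estimate and then integrate along the segment joining the two parameter points. Write $\btheta_s = \hat\btheta + s(\tilde\btheta - \hat\btheta)$ for $s \in [0,1]$, whose endpoints are $\hat\btheta$ (at $s=0$) and $\tilde\btheta$ (at $s=1$), and note every $\btheta_s$ stays in the radius-$\tau$ ball around $\btheta_0$ by convexity. Since $s \mapsto f(\xb_j; \btheta_s)$ is a continuous, piecewise-polynomial (hence locally Lipschitz, absolutely continuous) function of $s$ whose derivative equals $\la \gb(\xb_j; \btheta_s), \tilde\btheta - \hat\btheta\ra$ wherever $f$ is differentiable, the fundamental theorem of calculus gives
\begin{align*}
f(\xb_j; \tilde\btheta) - f(\xb_j; \hat\btheta) = \int_0^1 \la \gb(\xb_j; \btheta_s), \tilde\btheta - \hat\btheta\ra\, ds.
\end{align*}
Subtracting the linear term $\la \gb(\xb_j; \hat\btheta), \tilde\btheta - \hat\btheta\ra$ and applying Cauchy--Schwarz inside the integral, the quantity to bound becomes
\begin{align*}
\Big| \int_0^1 \la \gb(\xb_j; \btheta_s) - \gb(\xb_j; \hat\btheta), \tilde\btheta - \hat\btheta\ra\, ds \Big| \leq \|\tilde\btheta - \hat\btheta\|_2 \sup_{s\in[0,1]} \|\gb(\xb_j; \btheta_s) - \gb(\xb_j; \hat\btheta)\|_2.
\end{align*}
Since $\|\tilde\btheta - \hat\btheta\|_2 \leq 2\tau$, it suffices to prove a gradient-perturbation bound of the form $\|\gb(\xb_j; \btheta) - \gb(\xb_j; \btheta')\|_2 \leq \bar C\, \tau^{1/3} L^3 \sqrt{m \log m}$ uniformly over $\btheta, \btheta'$ in the radius-$\tau$ ball; multiplying by $2\tau$ then yields the claimed $\tau^{4/3}L^3\sqrt{m\log m}$ rate with $\bar C_3^u = 2\bar C$.

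The core technical step is therefore this uniform gradient-perturbation bound near initialization, which I would establish through the standard forward/backward analysis of deep ReLU networks in the lazy-training regime. First, control the random initialization: with the Gaussian $\btheta_0$ of Algorithm~\ref{alg:phe} and $m$ large, concentration shows each hidden representation and each backward sensitivity vector has norm of a controlled order, and the products of consecutive weight matrices restricted to active units have $O(1)$-type spectral norms. Second, prove \emph{activation-pattern stability}: if $\|\btheta - \btheta_0\|_2 \le \tau$ then, layer by layer, the diagonal sign matrices of the ReLU activations at $\btheta$ differ from those at $\btheta_0$ in at most $\tilde{O}(m\tau^{2/3}L)$ coordinates, because each pre-activation moves by at most an amount proportional to $\tau$ and only units whose initial pre-activation lies within that margin can flip. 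Third, since $\gb(\xb_j; \btheta)$ is, layerwise, a product of the perturbed weight matrices and the perturbed activation-indicator matrices times the hidden representations, telescope the difference $\gb(\xb_j;\btheta) - \gb(\xb_j;\btheta')$ into a sum of $L$ terms, each isolating the change in one factor; bounding each factor's change with the norm and sign-flip estimates above and summing over the $L$ layers produces the $\tau^{1/3}L^3\sqrt{m\log m}$ scaling. The admissible window $\bar C_1^u m^{-3/2}L^{-3/2}[\log(KL^2/\delta)]^{3/2} \le \tau \le \bar C_2^u L^{-6}[\log m]^{-3/2}$ is exactly what keeps the number of sign flips and representation drifts small enough for these propagations to close.

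The main obstacle I anticipate is the gradient-perturbation bound itself, specifically controlling the accumulation of errors across the $L$ layers without an exponential-in-$L$ blowup. Each layer contributes both a weight-perturbation error and an activation-flip error, and a naive telescoping multiplies perturbation factors layer by layer; obtaining only polynomial dependence $L^3$ requires carefully exploiting that, near initialization, the relevant matrix products concentrate and that the fraction of flipped activations is small (controlled by $\tau^{2/3}$), so that the per-layer errors add rather than compound. The union bound over all $K$ contexts $\xb_j$, and the requirement that the high-probability events for the norm, spectral, and sign-flip estimates hold simultaneously, is routine once these per-context bounds are in hand and is absorbed into the failure probability $\delta$ and the lower bound on $m$.
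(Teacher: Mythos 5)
Your proposal is correct and takes essentially the same route as the actual proof: the paper never proves this lemma itself but imports it verbatim from Lemma B.3 of \citet{zhou2020neural} (originally due to \citet{cao2019generalization2}), where it is established exactly by your decomposition---writing the linearization error as $\int_0^1 \la \gb(\xb_j;\btheta_s)-\gb(\xb_j;\hat\btheta), \tilde\btheta-\hat\btheta\ra\,ds$ along the segment and reducing to a uniform gradient-perturbation bound $\|\gb(\xb_j;\btheta)-\gb(\xb_j;\btheta')\|_2 \leq O\big(\tau^{1/3}L^3\sqrt{m\log m}\big)$ proved via activation-pattern stability near initialization in the style of \citet{allen2018convergence}. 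The single step you leave at sketch level (the gradient-perturbation bound with only additive, not multiplicative, error accumulation across the $L$ layers, driven by the $\tilde{O}(m\tau^{2/3}L)$ sign-flip count) is precisely the content of the companion lemma in those cited works, so your outline matches the source proof in both structure and exponents.
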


\begin{lemma}[Lemma B.4, \citet{zhou2020neural}]\label{lemma:cao_boundgradient}
There exist constants $\{\bar C_i^v\}_{i=1}^3>0$ such that for any $\delta > 0$, if $\tau$ satisfies that
\begin{align}
    \bar C_1^vm^{-3/2}L^{-3/2}[\log(KL^2/\delta)]^{3/2}\leq\tau \leq \bar C^v_2 L^{-6}[\log m]^{-3/2},\notag
\end{align}
then with probability at least $1-\delta$,
for any $\|\btheta - \btheta_0\|_2 \leq \tau$ and $j \in[K]$ 
we have $\|\gb(\xb_j; \btheta)\|_F\leq \bar C^v_3\sqrt{mL}$.
\end{lemma}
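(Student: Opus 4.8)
The plan is to prove the gradient-norm bound in two stages: first establish it at the random initialization $\btheta_0$, then extend it to the whole ball $\{\btheta : \|\btheta - \btheta_0\|_2 \leq \tau\}$ by a perturbation argument. Throughout, I would decompose the full Jacobian by layers, $\|\gb(\xb_j;\btheta)\|_F^2 = \sum_{l=1}^L \|\nabla_{\Wb_l} f(\xb_j;\btheta)\|_F^2$, and use the backpropagation identity $\nabla_{\Wb_l} f = \mathbf{d}_l\,\mathbf{a}_{l-1}^\top$, where $\mathbf{a}_{l-1}$ is the post-activation output of layer $l-1$ (the forward signal) and $\mathbf{d}_l$ is the backpropagated signal into the pre-activation of layer $l$. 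This reduces the claim to controlling the forward signals $\|\mathbf{a}_{l-1}\|_2$ and the backward signals $\|\mathbf{d}_l\|_2$ for every layer.

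First I would bound the forward and backward passes at initialization. Because Algorithm~\ref{alg:phe} samples the hidden weights with variance $4/m$ and the output weights with variance $2/m$, the factor-of-two inflation exactly compensates for the ReLU killing half the mass in expectation, so each $\|\mathbf{a}_l\|_2$ concentrates around a constant. A standard Gaussian concentration / epsilon-net argument over the $m\times m$ Gaussian matrices gives, for a fixed input and with probability $1-\delta/(KL)$, that $\|\mathbf{a}_l\|_2 = \Theta(1)$ for all $l$, and simultaneously that the operator norms satisfy $\|\Wb_l\|_2 = O(1)$; the lower bound on $\tau$ in the hypothesis is precisely what makes these high-probability events available after a union bound over the $K$ inputs and $L$ layers, producing the $\log(KL^2/\delta)$ dependence. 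Propagating the output gradient backward, $\mathbf{d}_{L-1} = \sqrt{m}\,\mathrm{diag}(\phi')\,\Wb_L^\top$ has norm $\sqrt{m}\cdot O(1) = O(\sqrt m)$ since $\|\Wb_L\|_F = O(1)$, and each further backward step multiplies by a ReLU-masked $\Wb_l^\top$ of operator norm $O(1)$, so $\|\mathbf{d}_l\|_2 = O(\sqrt m)$ for every $l$. Combining, each layer contributes $\|\nabla_{\Wb_l} f\|_F \leq \|\mathbf{d}_l\|_2\,\|\mathbf{a}_{l-1}\|_2 = O(\sqrt m)$, and summing the $L$ squared terms yields $\|\gb(\xb_j;\btheta_0)\|_F = O(\sqrt{mL})$.

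The second stage extends this to arbitrary $\btheta$ in the $\tau$-ball. Here I would show that moving the weights by at most $\tau$ perturbs the forward activations, the ReLU sign patterns, and hence the backward signals only by a lower-order amount. Concretely, I would reuse the perturbation estimates already available in the overparameterized-network toolbox underlying Lemmas~\ref{lemma:cao_functionvalue} and \ref{lemma:newboundreference}: within radius $\tau$, the number of hidden units that flip their activation and the deviation of each layer's output are both controlled by a polynomial in $\tau$, $L$, and $m$, and the upper bound $\tau \leq \bar C_2^v L^{-6}[\log m]^{-3/2}$ guarantees the resulting change in the Jacobian is $o(\sqrt{mL})$. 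Thus $\|\gb(\xb_j;\btheta)\|_F \leq \|\gb(\xb_j;\btheta_0)\|_F + o(\sqrt{mL}) \leq \bar C_3^v\sqrt{mL}$ with a suitably enlarged constant.

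I expect the main obstacle to be this second, perturbation stage, specifically handling the non-smoothness of the ReLU: because the activation pattern changes discretely as $\btheta$ varies, one cannot simply differentiate, and must instead bound the \emph{number} of sign flips over the $\tau$-ball and argue that the flipped coordinates contribute negligibly to both forward and backward norms. The delicate balance is that the admissible range for $\tau$ (sandwiched between the two stated bounds) is exactly what keeps the flip count small while still permitting a union bound over all $K$ inputs; verifying that these two constraints interact correctly with the depth-$L$ accumulation of errors is the technical crux.
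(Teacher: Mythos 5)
The paper itself offers no proof of this lemma: it is imported verbatim (by citation) from Lemma~B.4 of \citet{zhou2020neural}, whose proof in turn rests on the over-parameterized ReLU toolbox of \citet{allen2018convergence} and \citet{cao2019generalization2}. Your two-stage skeleton --- per-layer decomposition $\nabla_{\Wb_l} f = \mathbf{d}_l\,\mathbf{a}_{l-1}^\top$, high-probability control at $\btheta_0$, then a perturbation/sign-flip argument over the $\tau$-ball --- is exactly the structure of that underlying proof, and you correctly read off the roles of the two $\tau$-constraints: the lower bound converts failure probabilities of the form $KL^2\exp(-c\,m\tau^{2/3}L)$ into $\delta$ (hence the $[\log(KL^2/\delta)]^{3/2}$ scaling), and the upper bound keeps the activation-flip count small enough that the Jacobian moves by a lower-order amount.

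There is, however, one step in your sketch that would fail as written. You bound the backward signal by arguing that ``each further backward step multiplies by a ReLU-masked $\Wb_l^\top$ of operator norm $O(1)$, so $\|\mathbf{d}_l\|_2 = O(\sqrt m)$.'' Each masked Gaussian matrix here has operator norm bounded by a constant $c$ that is strictly larger than $1$ (for entries $\cN(0,4/m)$ the unmasked norm concentrates near $4$, and masking does not bring it below $1$), so chaining per-layer operator norms yields $\|\mathbf{d}_l\|_2 \le c^{\,L-l}\sqrt m$ --- exponential in depth --- whereas the lemma asserts $\bar C_3^v\sqrt{mL}$ with an absolute constant. The non-compounding bound is a genuinely nontrivial random-matrix fact: products of ReLU-masked Gaussian layers satisfy $\|\Wb_b\bSigma_{b-1}\Wb_{b-1}\cdots\bSigma_a\Wb_a\|_2 = O(\sqrt L)$, and with the top row $\sqrt m\,\Wb_L$ prepended the whole backward product is $O(\sqrt m)$, with high probability \emph{jointly} over all $1\le a\le b\le L$ (Lemmas~7.2--7.4 of \citet{allen2018convergence}); the cancellation comes from the masks being tied to the forward signals, not from submultiplicativity. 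The same issue recurs in your second stage: the sign-flip errors must be propagated through these joint product bounds rather than layer-by-layer, since otherwise the perturbation error also picks up a $c^L$ factor. So the technical crux is not only the flip-counting you flag, but equally this depth-uniform control of masked-matrix products; with that lemma invoked explicitly, your outline matches the cited proof.
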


With above lemmas, we prove Lemma~\ref{lemma:pseudonoise} as follows.

\begin{proof}[Proof of Lemma \ref{lemma:pseudonoise}]
According to Lemma~\ref{lemma:newboundreference}, with satisfied neural network and the learning rate $\eta$, we have $\|\btheta_{t-1} - \btheta_0\|_2 \leq 2\sqrt{t/m\lambda}$.  Further, by the choice of $m$, Lemma~\ref{lemma:cao_functionvalue} and \ref{lemma:cao_boundgradient} hold. 

Therefore, $|f(\xb_{t, i}; \btheta_{t-1}) - \la\gb(\xb_{t, i};\btheta_0), \Ab_t^{-1}\bar{\bbb}_t/\sqrt{m}\ra|$ can be first upper bounded by:
\begin{align}
    & |f(\xb_{t, i}; \btheta_{t-1}) - \la\gb(\xb_{t, i};\btheta_0), \Ab_t^{-1}\bar{\bbb}_t/\sqrt{m}\ra| \notag \\
    \leq & |f(\xb_{t, i}; \btheta_{t-1}) - f(\xb_{t, i}; \btheta_0) - \la\gb(\xb_{t, i}; 
    \btheta_0), \btheta_{t-1} - \btheta_0\ra| + | \la\gb(\xb_{t, i};\btheta_0), \btheta_{t-1} - \btheta_0 - \Ab_t^{-1}\bar{\bbb}_t/\sqrt{m}\ra| \notag \\
    \leq & C_{\epsilon, 1}m^{-1/6}T^{2/3}\lambda^{-2/3}L^3\sqrt{\log m} + | \la\gb(\xb_{t, i};\btheta_0), \btheta_{t-1} - \btheta_0 - \Ab_t^{-1}\bar{\bbb}_t/\sqrt{m}\ra| \label{eqn:1}
\end{align}
where the first inequality holds due to triangle inequality,  $f(\xb_{t, i}; \btheta_0) = 0$ by the random initialization of $\btheta_0$, and the second inequality holds due to Lemma~\ref{lemma:cao_functionvalue} with the fact $\|\btheta_{t-1} - \btheta_0\|_2 \leq 2\sqrt{t/m\lambda}$.

Next, we bound the second term of Eq~\eqref{eqn:1},
\begin{align}
  &| \la\gb(\xb_{t, i};\btheta_0), \btheta_{t-1} - \btheta_0 - \Ab_t^{-1}\bar{\bbb}_t/\sqrt{m}\ra| \notag \\ 
    \leq &|\la\gb(\xb_{t, i};\btheta_0), \btheta_{t-1} - \btheta_0 - \Ab_t^{-1}\bbb_t/\sqrt{m}\ra| + | \la\gb(\xb_{t, i};\btheta_0),  \Ab_t^{-1}\bbb_t/\sqrt{m} - \Ab_t^{-1}\bar{\bbb}_t/\sqrt{m}\ra| \notag \\
    \leq & \|\gb(\xb_{t, i};\btheta_0)\|_2 \|\btheta_{t-1} - \btheta_0 - \Ab_t^{-1}\bbb_t/\sqrt{m}\|_2 + | \la\gb(\xb_{t, i};\btheta_0),  \Ab_t^{-1}\bbb_t/\sqrt{m} - \Ab_t^{-1}\bar{\bbb}_t/\sqrt{m}\ra| \notag \\
    \leq& C_{\epsilon, 2}(1 - \eta m\lambda)^J\sqrt{TL/\lambda} + C_{\epsilon, 3}m^{-1/6}T^{5/3}\lambda^{-5/3}L^4\sqrt{\log m}(1 + \sqrt{T/\lambda}) \notag \\
    & + | \la\gb(\xb_{t, i};\btheta_0),  \Ab_t^{-1}\bbb_t/\sqrt{m} - \Ab_t^{-1}\bar{\bbb}_t/\sqrt{m}\ra| \label{eqn:2}
\end{align}
where the first inequality holds due to the triangle inequality, the second inequality holds according to Cauchy–Schwarz inequality, and the third inequality holds due to Lemma~\ref{lemma:newboundreference}, \ref{lemma:cao_boundgradient},  with the fact that $\|\btheta_{t-1} - \btheta_0\|_2 \leq 2\sqrt{t/m\lambda}$ and $t \leq T$.

Now, we provide the upper bound of the last term in Eq~\eqref{eqn:2}. Based on the definition of $\bbb_t$ and $\bar \bbb_t$, we have
\begin{align*}
    | \la\gb(\xb_{t, i};\btheta_0),  \Ab_t^{-1}\bbb_t/\sqrt{m} - \Ab_t^{-1}\bar{\bbb}_t/\sqrt{m}\ra| = |\la \gb(\xb_{t, i};\btheta_0)/\sqrt{m},  \Ab_t^{-1}\sum\nolimits_{s=1}^{t-1}\gamma^{t-1}_s\gb(\xsas; \btheta_0)\ra|
\end{align*}
According to the definition of the added random noise $\gamma^t_s \sim \cN(0, \sigma^2)$, and Lemma~\ref{lemma:gaussianconcen}, with $\beta_t \geq 0$, we have the following inequality,
\begin{align*}
    &\PP_t\left(|\la \gb(\xb_{t, i};\btheta_0)/\sqrt{m},  \Ab_t^{-1}\sum\nolimits_{s=1}^{t-1}\gamma^{t-1}_s\gb(\xsas; \btheta_0)/\sqrt{m}\ra| \geq \beta_t \|\gb(\xb_{t, i};\btheta_0)/\sqrt{m}\|_{\Ab_t^{-1}}\right) \\
    \leq & 2\exp\left(-\frac{\beta_t^2 \|\gb(\xb_{t, i};\btheta_0)/\sqrt{m}\|_{\Ab_t^{-1}}^2}{2\sigma^2\gb(\xb_{t, i};\btheta_0)^\top\Ab_t^{-1}(\sum\nolimits_{s=1}^{t-1}\gb(\xsas; \btheta_0)\gb(\xsas; \btheta_0)^\top/m)\Ab_t^{-1}\gb(\xb_{t, i};\btheta_0)/m}\right) \\
    \leq & 2\exp\left(- \frac{\beta_t^2}{2\sigma^2}\right),
\end{align*}
where the last inequality stands as,
\begin{align*}
    &\gb(\xb_{t, i};\btheta_0)^\top\Ab_t^{-1}(\sum\nolimits_{s=1}^{t-1}\gb(\xsas; \btheta_0)\gb(\xsas; \btheta_0)^\top/m)\Ab_t^{-1}\gb(\xb_{t, i};\btheta_0)/m \\
    \leq & \gb(\xb_{t, i};\btheta_0)^\top\Ab_t^{-1}(\sum\nolimits_{s=1}^{t-1}\gb(\xsas; \btheta_0)\gb(\xsas; \btheta_0)^\top/m + \lambda \Ib)\Ab_t^{-1}\gb(\xb_{t, i};\btheta_0)/m \\
    = & \|\gb(\xb_{t, i};\btheta_0)/\sqrt{m}\|_{\Ab_t^{-1}}^2. 
\end{align*}
Therefore, in round $t$, for the given arm $i$, 
\begin{align*}
    \PP_t\left(|\la \gb(\xb_{t, i};\btheta_0),  \Ab_t^{-1}\bbb_t/\sqrt{m} - \Ab_t^{-1}\bar\bbb_t/\sqrt{m}\ra| \leq \beta_t \|\gb(\xb_{t, i};\btheta_0)/\sqrt{m}\|_{\Ab_t^{-1}}\right) \geq 1 - \exp(-\beta_t^2/(2\sigma^2)).
\end{align*}
Taking the union bound over $K$ arms, we have that for any $t$:
\begin{align*}
    \PP_t\left(\forall i \in [K], |\la \gb(\xb_{t, i};\btheta_0),  \Ab_t^{-1}\bbb_t/\sqrt{m} - \Ab_t^{-1}\bar\bbb_t/\sqrt{m}\ra| \leq \beta_t \|\gb(\xb_{t, i};\btheta_0)/\sqrt{m}\|_{\Ab_t^{-1}}\right) \geq 1 - K\exp(-\beta_t^2/(2\sigma^2)).
\end{align*}
By choosing $\beta_t = \sigma\sqrt{4\log t + 2 \log K}$, we have:
\begin{align*}
     \PP_t\left(\forall i \in [K], |\la \gb(\xb_{t, i};\btheta_0),  \Ab_t^{-1}\bbb_t/\sqrt{m} - \Ab_t^{-1}\bar\bbb_t/\sqrt{m}\ra| \leq \beta_t \|\gb(\xb_{t, i};\btheta_0)/\sqrt{m}\|_{\Ab_t^{-1}}\right) \geq 1 - \frac{1}{t^2}.
\end{align*}
This completes the proof.
\end{proof}

\subsection{Proof of Lemma~\ref{lemma:anticoncentration}}
The proof requires the anti-concentration bound of Gaussian distribution.
\begin{lemma}\label{lemma:gaussiananti}
For a Gaussian random variable $X \sim N(\mu, \sigma^2)$, for any $\beta > 0$,
\begin{align*}
    \PP\left(\frac{X - \mu}{\sigma} > \beta\right) \geq \frac{\exp(-\beta^2)}{4\sqrt{\pi}\beta}.
\end{align*}
\end{lemma}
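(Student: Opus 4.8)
The plan is to reduce the statement to a bound on the standard normal tail and then invoke the classical (two-sided) Mills-ratio estimate. First I would standardize: setting $Z = (X-\mu)/\sigma$ makes $Z \sim \cN(0,1)$, so both $\mu$ and $\sigma$ drop out and the claim becomes the purely numerical lower bound $\PP(Z > \beta) \geq e^{-\beta^2}/(4\sqrt{\pi}\beta)$. Writing this probability as the Gaussian integral $\PP(Z > \beta) = (2\pi)^{-1/2}\int_\beta^\infty e^{-t^2/2}\,dt$, the whole problem becomes lower bounding this tail integral.

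The crux is the anti-concentration (lower) direction, which is genuinely more delicate than the matching concentration (upper) bound $\int_\beta^\infty e^{-t^2/2}\,dt \leq \beta^{-1}e^{-\beta^2/2}$. The key identity I would use comes from one integration by parts, equivalently from $\frac{d}{dt}(t^{-1}e^{-t^2/2}) = -(1+t^{-2})e^{-t^2/2}$, namely $\int_\beta^\infty (1+t^{-2})e^{-t^2/2}\,dt = \beta^{-1}e^{-\beta^2/2}$. Bounding the correction term by $\int_\beta^\infty t^{-2}e^{-t^2/2}\,dt \leq \beta^{-2}\int_\beta^\infty e^{-t^2/2}\,dt$ and solving the resulting self-referential inequality for the tail integral yields the classical lower bound $\PP(Z>\beta) \geq \frac{\beta}{\sqrt{2\pi}(\beta^2+1)}\,e^{-\beta^2/2}$.

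Finally I would weaken this to the stated form. Since $e^{-\beta^2/2} \geq e^{-\beta^2}$ for $\beta \geq 0$, it suffices to verify the constant comparison $\frac{\beta}{\sqrt{2\pi}(\beta^2+1)}\,e^{-\beta^2/2} \geq \frac{1}{4\sqrt{\pi}\beta}\,e^{-\beta^2}$, which after cancellation is equivalent to $2\sqrt{2}\,\frac{\beta^2}{\beta^2+1}\,e^{\beta^2/2} \geq 1$. This holds comfortably in the regime in which the lemma is invoked (e.g. $\beta \geq 1$, where $\frac{\beta^2}{\beta^2+1} \geq 1/2$ and $e^{\beta^2/2} \geq e^{1/2}$ already force the product above one); the extra Gaussian factor $e^{\beta^2/2}$ picked up in the weakening is exactly what provides the needed slack.

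I expect the main obstacle to be the lower-bound step: pinning the tail from below requires the self-referential estimate above and care that the $t^{-2}$ correction term does not swamp the leading term, whereas the final constant-chasing and the reduction to the standard normal are routine.
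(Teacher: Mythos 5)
The paper never proves this lemma: it is stated in the appendix as a known Gaussian anti-concentration fact (it is the standard estimate from Abramowitz--Stegun 7.1.13, as used in the Thompson-sampling literature, e.g.\ \citet{agrawal2013thompson}), so there is no internal proof to compare against. Your route is the natural one, and its core steps check out: standardizing to $Z\sim\cN(0,1)$, the identity $\int_\beta^\infty(1+t^{-2})e^{-t^2/2}\,dt=\beta^{-1}e^{-\beta^2/2}$ from $\frac{d}{dt}\big(t^{-1}e^{-t^2/2}\big)=-(1+t^{-2})e^{-t^2/2}$, the self-referential bound yielding $\PP(Z>\beta)\geq \frac{\beta}{\sqrt{2\pi}(\beta^2+1)}e^{-\beta^2/2}$, and the reduction of the final comparison to $2\sqrt{2}\,\frac{\beta^2}{\beta^2+1}\,e^{\beta^2/2}\geq 1$ are all correct.

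The one substantive point concerns the quantifier. The lemma claims the bound for \emph{any} $\beta>0$, and your final step only delivers it for $\beta$ bounded away from zero (your chain closes for $\beta\gtrsim 0.63$, and your explicit verification covers $\beta\geq 1$). You frame this as a limitation of your argument, but it is in fact a defect of the statement itself: for small $\beta$ the right-hand side $e^{-\beta^2}/(4\sqrt{\pi}\beta)$ exceeds $1$ (at $\beta=0.1$ it is about $1.4$), while the left-hand side is at most $1/2$, so the inequality as stated is simply false for $\beta$ below roughly $0.34$ and no proof can rescue it; the standard reference carries the hypothesis $\beta\geq 1$. Since the paper invokes the lemma only with $\beta=1$ (in the proof of Lemma~\ref{lemma:anticoncentration}, producing the constant $1/(4e\sqrt{\pi})$, where your bound gives $\frac{1}{2\sqrt{2\pi}}e^{-1/2}\approx 0.121$ versus the required $\approx 0.052$, with ample slack), your proof fully covers every use made of the lemma. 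A clean write-up should simply restate it with the hypothesis $\beta\geq 1$, at which point your argument is complete, including the monotonicity observation that both $\frac{\beta^2}{\beta^2+1}$ and $e^{\beta^2/2}$ are increasing, so verifying the comparison at $\beta=1$ suffices.
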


\begin{proof}[Proof of Lemma \ref{lemma:anticoncentration}]
Under event $E_{t, 2}$, according to Lemma~\ref{lemma:linearconcentration}, (\ref{eqn:1}) and (\ref{eqn:2}), we have, for all $i \in [K]$,
\begin{align*}
& \PP_t(f(\xb_{t, i};\btheta_t) > h(\xtao) - \epsilon(m)) \\
\geq &\PP_t(\la\gb(\xb_{t, i}; \btheta_0), \Ab_t^{-1}\bbb_t/\sqrt{m}\ra > \la\gb(\xb_{t, i}; \btheta_0), \Ab_t^{-1}\bar{\bbb}_t/\sqrt{m}\ra + \alpha_t\|\gb(\xb_{t, i}; \btheta_0)/\sqrt{m}\|_{\Ab_t^{-1}})
\end{align*}

According to the definition of $\Ab_t$, $\bar{\bbb}_t$ and $\bbb_t$, we have, 
\begin{align*}
    \la\gb(\xb_{t, i}; \btheta_0), \Ab_t^{-1}\bbb_t/\sqrt{m} - \Ab_t^{-1}\bar{\bbb}_t/\sqrt{m}\ra 
    = \sum_{s=1}^t \gamma_s^t \cdot \frac{1}{m}\gb(\xb_{t, i}; \btheta_0)^\top\Ab_t^{-1}\gb(\xsas; \btheta_0) = U_t
\end{align*}
which follows a Gaussian distribution with mean $\EE[U_t] = 0$ and variance $\Var[U_t]$ as:
\begin{align*}
    \Var[U_t] =& \sigma^2 \cdot(\sum\nolimits_{s=1}^t (\frac{1}{m}\gb(\xb_{t, i}; \btheta_0)^\top\Ab_t^{-1}\gb(\xsas; \btheta_0))^2) \\
    =& \sigma^2\|\gb(\xb_{t, i}; \btheta_0)/\sqrt{m}\|_{\Ab_t^{-1}}^2 - \lambda\sigma^2\|\gb(\xb_{t, i}; \btheta_0)/\sqrt{m}\|_{\Ab_t^{-2}}^2 \\
    \geq&
    \sigma^2(1 - {\lambda}{\lambda_{\min}^{-1}(\Ab_t)})\|\gb(\xb_{t, i}; \btheta_0)/\sqrt{m}\|_{\Ab_t^{-1}}^2 \\
    \geq& \sigma^2(1 - {\lambda}{\lambda_{K}^{-1}(\Ab_K)})\|\gb(\xb_{t, i}; \btheta_0)/\sqrt{m}\|_{\Ab_t^{-1}}^2,
\end{align*}
where the second equality holds according to the definition of $\Ab_t$, and the first inequality holds as for any positive semi-definite matrix $\Mb \in \RR^{d\times d}$,
\begin{align*}
    \xb^\top \Mb^2 \xb = \lambda_{\max}^2(\Mb)\xb^\top(\lambda_{\max}^{-2}(\Mb)\Mb^2)\xb \leq \lambda_{\max}(\Mb)\|\xb\|_{\Mb}^2,
\end{align*}
and $\lambda_{\min}(\Ab_t) \geq \lambda_{K}(\Ab_K)$.
Therefore, based on Lemma~\ref{lemma:gaussiananti}, by choosing $\sigma = \alpha_t / \sqrt{1 - \lambda\lambda_{K}^{-1}(\Ab_K)}$, the target probability could be lower bounded by,
\begin{align*}
    & \PP_t(f(\xb_{t, j};\btheta_t) > h(\xtao) - \epsilon(m)) \geq  \PP_t(U_t > \alpha_t\|\gb(\xb_{t, i}; \btheta_0)/\sqrt{m}\|_{\Ab_t^{-1}}) \\
    = & \PP_t(\frac{U_t}{\Var[U_t]} > \frac{\alpha_t\|\gb(\xb_{t, i}; \btheta_0)/\sqrt{m}\|_{\Ab_t^{-1}}}{\Var[U_t]}) \geq  \PP_t(\frac{U_t}{\Var[U_t]} > 1) \geq \frac{1}{4e\sqrt{\pi}}
\end{align*}
This completes the proof.
\end{proof}

\subsection{Proof of Lemma~\ref{lemma:oneregret}}
\begin{proof}[Proof of Lemma \ref{lemma:oneregret}]
With the defined sufficiently sampled arms in round $t$, $\Omega_t$, we have the set of undersampled arms $\bar{\Omega}_t = [K]\setminus \Omega_t$. We also have the least uncertain and undersampled arm $e_t$ in round $t$ defined as,
\begin{align*}
    e_t = \argmin_{j \in \bar{\Omega}_t} \|\gb(\xb_{j, t}; \btheta_0) / \sqrt{m}\|_{\Ab_{t}^{-1}}
\end{align*}
In round $t$, it is easy to verify that  $\EE[h(\xtao) - h(\xtat)]
    \leq \EE[(h(\xtao) - h(\xtat))\mathds{1}\{E_{t, 2}\}] + \PP_t(\bar{E}_{t, 2})$. 

Under event $E_{t, 1}$ and $E_{t,2}$, we have,
\begin{align*}
    & h(\xtao) - h(\xtat)\\
    =& h(\xtao) - h(\xb_{t, e_t}) + h(\xb_{t, e_t}) -  h(\xtat) \\
    \leq & \Delta_{e_t} + f(\xb_{t, e_t}, \btheta_t) - f(\xtat, \btheta_t) + 2\epsilon(m) + (\beta_t + \alpha_t) (\|\gb(\xb_{t, e_t}; \btheta_0)/\sqrt{m}\|_{\Ab_t^{-1}} + \|\gb(\xtat; \btheta_0)/\sqrt{m}\|_{\Ab_t^{-1}}) \\
    \leq & 4\epsilon(m)  + (\beta_t + \alpha_t) (2\|\gb(\xb_{t, e_t}; \btheta_0)/\sqrt{m}\|_{\Ab_t^{-1}} + \|\gb(\xtat; \btheta_0)/\sqrt{m}\|_{\Ab_t^{-1}})
\end{align*}
Next, we will bound $\EE[\|\gb(\xb_{t, e_t}; \btheta_0)/\sqrt{m}\|_{\Ab_t^{-1}}]$. It is easy to see that,
\begin{align*}
    &\EE[\|\gb(\xtat; \btheta_0)/\sqrt{m}\|_{\Ab_t^{-1}}] \\
    =&\EE[\|\gb(\xtat; \btheta_0)/\sqrt{m}\|_{\Ab_t^{-1}}\mathds{1}\{a_t \in \bar{\Omega}_t\}] + \EE[\|\gb(\xtat; \btheta_0)/\sqrt{m}\|_{\Ab_t^{-1}}\mathds{1}\{a_t \in \Omega_t\}] \\
    \geq & \|\gb(\xb_{t, e_t}; \btheta_0)/\sqrt{m}\|_{\Ab_t^{-1}}\PP_t(a_t \in \bar{\Omega}_t).
\end{align*}
It can be arranged as $\|\gb(\xb_{t, e_t}; \btheta_0)/\sqrt{m}\|_{\Ab_t^{-1}}\leq \EE[\|\gb(\xtat; \btheta_0)/\sqrt{m}\|_{\Ab_t^{-1}}]  / \PP_t(a_t \in \bar{\Omega}_t)$. Hence, the one step regret can be bounded by the following inequality with probability at least $1 - \delta$, 
\begin{align*}
    &\EE[h(\xtao) - h(\xtat)] \\
    \leq & \PP(\bar{E}_{t, 2}) + 4\epsilon(m) + (\beta_t + \alpha_t)\big(1  + 2 /\PP(a_t \in \bar \Omega_t)\big)\|\gb(\xtat; \btheta_0)/\sqrt{m}\|_{\Ab_t^{-1}}.
\end{align*}

For the probability $\PP_t(a_t \in \bar{\Omega}_t)$, we have:
\begin{align*}
    \PP_t(a_t \in \bar{\Omega}_t) 
    \geq &\PP_t(\exists a_i\in \bar{\Omega}_t: f(\xb_{t, a_i}; \btheta_t) > \max_{a_j \in \Omega_t} f(\xb_{t, a_j}; \btheta_t)) \\
    \geq & \PP_t(f(\xtao; \btheta_t) > \max_{a_j \in \Omega_t} f(\xb_{t, a_j}; \btheta_t)) \\
    \geq & \PP_t(f(\xtao; \btheta_t) > \max_{a_j \in \Omega_t} f(\xb_{t, a_j}; \btheta_t), E_{t, 2} \text{ occurs}) \\
    \geq & \PP_t(f(\xtao; \btheta_t) > h(\xtao) -\epsilon(m)) - \PP_t(\bar{E}_{t, 2})\\
    \geq & \PP_t(E_{t, 3}) - \PP_t(\bar{E}_{t, 2})
\end{align*}
where the fourth inequality holds as for arm $a_j \in \Omega_t$, under event $E_{t, 1}$ and $E_{t, 2}$:
\begin{align*}
    f(\xb_{t, j};\btheta_t) &\leq h(\xb_{t, j}) + \epsilon(m) + (\beta_t + \alpha_t ) \|\gb(\xb_{t, j}; \btheta_0)/\sqrt{m}\|_{\Ab_t^{-1}} \leq h(\xtao) - \epsilon(m).
\end{align*}
This completes the proof.
\end{proof}

\subsection{Proof of Lemma~\ref{lemma:selfnormalize}}

We first need the following lemma from \citet{abbasi2011improved}.
\begin{lemma}[Lemma 11, \citet{abbasi2011improved}]\label{lemma:oriself}
We have the following inequality:
\begin{align}
    \sum\nolimits_{t=1}^T \min\bigg\{\|\gb(\xb_{t,a_t}; \btheta_{0})/\sqrt{m}\|_{\Ab_{t-1}^{-1}}^2,1\bigg\} \leq 2 \log \frac{\det \Ab_T}{\det \lambda \Ib}.\notag
\end{align}
\end{lemma}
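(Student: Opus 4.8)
The statement is the elliptical-potential (determinant-ratio) lemma of \citet{abbasi2011improved}, specialized to the rescaled initial gradients $\gb(\xtat;\btheta_0)/\sqrt m$, and my plan is to rewrite each clipped summand as a one-step increment of $\log\det\Ab_t$ and then telescope. The structural fact I would use is that, because the feature played at round $t$ enters the covariance as a single rank-one term, appending it to $\Ab_{t-1}$ yields $\Ab_t = \Ab_{t-1} + \gb(\xtat;\btheta_0)\gb(\xtat;\btheta_0)^\top/m$, with the empty initial covariance equal to $\lambda\Ib$. This is the indexing convention under which the summand's matrix $\Ab_{t-1}$ is precisely the covariance preceding the round-$t$ insertion; see the final paragraph.

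The first ingredient is the rank-one determinant identity: for any positive definite $\Mb$ and vector $\wb$, $\det(\Mb + \wb\wb^\top) = \det(\Mb)\big(1 + \wb^\top\Mb^{-1}\wb\big)$, which I would derive by factoring $\Mb + \wb\wb^\top = \Mb^{1/2}\big(\Ib + (\Mb^{-1/2}\wb)(\Mb^{-1/2}\wb)^\top\big)\Mb^{1/2}$ and using $\det\big(\Ib + (\Mb^{-1/2}\wb)(\Mb^{-1/2}\wb)^\top\big) = 1 + \|\Mb^{-1/2}\wb\|_2^2$. Taking $\Mb = \Ab_{t-1}$ and $\wb = \gb(\xtat;\btheta_0)/\sqrt m$ gives $\det\Ab_t = \det\Ab_{t-1}\big(1 + \|\gb(\xtat;\btheta_0)/\sqrt m\|_{\Ab_{t-1}^{-1}}^2\big)$, so that $\log\big(1 + \|\gb(\xtat;\btheta_0)/\sqrt m\|_{\Ab_{t-1}^{-1}}^2\big) = \log\det\Ab_t - \log\det\Ab_{t-1}$, and summing over $t = 1,\dots,T$ telescopes to $\log\big(\det\Ab_T/\det(\lambda\Ib)\big)$.

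The second ingredient is the elementary scalar bound $\min\{x,1\} \le 2\log(1+x)$ for every $x \ge 0$: when $x > 1$ the left side equals $1$ while $2\log(1+x) > 2\log 2 = \log 4 > 1$, and when $x \in [0,1]$ the function $2\log(1+x) - x$ vanishes at $x = 0$ and has nonnegative derivative $(1-x)/(1+x)$. Applying this with $x = \|\gb(\xtat;\btheta_0)/\sqrt m\|_{\Ab_{t-1}^{-1}}^2$ term by term and invoking the telescoped identity yields $\sum_{t=1}^T \min\big\{\|\gb(\xtat;\btheta_0)/\sqrt m\|_{\Ab_{t-1}^{-1}}^2,\,1\big\} \le 2\log\big(\det\Ab_T/\det(\lambda\Ib)\big)$, which is the claim.

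No step here is genuinely hard; the two points requiring care are the clipping and the indexing. The clipping to $1$ is what keeps the bound valid even where the quadratic form exceeds $1$, and the constant $2$ in the scalar inequality is chosen exactly so that the clipped summand is still dominated by $2\log(1+x)$ in that regime. The only real bookkeeping issue is aligning indices: one must pair the round-$t$ feature with the covariance $\Ab_{t-1}$ immediately preceding its insertion so that each clipped term corresponds to exactly one consecutive ratio $\det\Ab_t/\det\Ab_{t-1}$; reconciling this with the ``sum up to $t-1$'' convention in Eq~\eqref{eq_covar} is a routine off-by-one check, with the initial covariance $\lambda\Ib$ supplying the boundary term $\log 1 = 0$.
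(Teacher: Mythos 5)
Your proof is correct and is essentially the standard argument for this lemma, which the paper does not reprove but imports directly as Lemma 11 of \citet{abbasi2011improved}: the rank-one determinant identity $\det(\Mb+\wb\wb^\top)=\det(\Mb)(1+\wb^\top\Mb^{-1}\wb)$, the telescoping of $\log\det\Ab_t$, and the scalar bound $\min\{x,1\}\le 2\log(1+x)$ are exactly the ingredients of the original proof. Your closing remark on indexing is also well taken: under the paper's convention in Eq~\eqref{eq_covar} (where $\Ab_t$ contains only rounds $1,\dots,t-1$) the statement as written is off by one, and your adopted convention $\Ab_0=\lambda\Ib$, $\Ab_t=\Ab_{t-1}+\gb(\xtat;\btheta_0)\gb(\xtat;\btheta_0)^\top/m$ is precisely the one under which each clipped summand pairs with the covariance immediately preceding its insertion and the telescoped sum equals $\log\bigl(\det\Ab_T/\det(\lambda\Ib)\bigr)$.
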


\begin{lemma}[Lemma B.1, \citet{zhou2020neural}]\label{lemma:ntkconverge}
Let $\Gb = [\gb(\xb^1; \btheta_0),\ldots,\gb(\xb^{K}; \btheta_0)]/\sqrt{m} \in \RR^{p \times K}$. Let $\Hb$ be the NTK matrix as defined in Definition \ref{def:ntk}. For any $\delta \in(0,1)$, if
\begin{align}
    m  = \Omega\bigg(\frac{L^6\log (K^2L/\delta)}{\epsilon^4}\bigg),\notag
\end{align}
then with probability at least $1-\delta$,
we have 
\begin{align}
    \|\Gb^\top\Gb - \Hb\|_F \leq K\epsilon.\notag
\end{align}
\end{lemma}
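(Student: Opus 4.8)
The plan is to reduce the Frobenius-norm bound to a uniform entrywise bound and then establish concentration of each entry of the empirical Gram matrix to its NTK counterpart. Since $\Gb^\top\Gb - \Hb$ is a $K \times K$ matrix, every Frobenius norm is controlled by the largest entry: $\|\Gb^\top\Gb - \Hb\|_F \le K \max_{i,j \in [K]} \bigl|[\Gb^\top\Gb]_{i,j} - \Hb_{i,j}\bigr|$, because summing $K^2$ squared entries each bounded by $\epsilon^2$ gives at most $K^2\epsilon^2$. Writing $[\Gb^\top\Gb]_{i,j} = \la \gb(\xb^i;\btheta_0), \gb(\xb^j;\btheta_0)\ra/m$, it therefore suffices to prove that for a fixed pair $(i,j)$, with probability at least $1 - \delta/K^2$, this normalized gradient inner product lies within $\epsilon$ of $\Hb_{i,j}$, and then take a union bound over the at most $K^2$ pairs.

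For the per-entry concentration I would exploit the product structure of the backpropagated gradient. Decomposing layer by layer, $\tfrac{1}{m}\la \gb(\xb^i;\btheta_0), \gb(\xb^j;\btheta_0)\ra = \tfrac{1}{m}\sum_{l=1}^{L}\la \gb_l(\xb^i), \gb_l(\xb^j)\ra$, where $\gb_l(\xb) = \nabla_{\text{vec}(\Wb_l)} f(\xb;\btheta_0)$ is itself an outer product of a backward sensitivity vector and the forward activation feeding layer $l$; consequently each summand factors as a forward inner product times a backward inner product. I would then argue by induction on $l$ that the normalized forward Gram entries $\tfrac{1}{m}\la \phi^{(l)}(\xb^i), \phi^{(l)}(\xb^j)\ra$ concentrate around $\bSigma^{(l)}_{i,j}$ and that the normalized backward inner products concentrate around the recursive derivative factors $\EE[\phi'(u)\phi'(v)]$ appearing in Definition~\ref{def:ntk}. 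Each such inner product is a normalized sum of $m$ (conditionally) i.i.d.\ bounded contributions under the Gaussian initialization, so a Hoeffding/sub-exponential concentration gives a deviation of order $\mathrm{poly}(L)/\sqrt{m}$ at each layer; combining the concentrated forward and backward factors via the recursion of Definition~\ref{def:ntk} reproduces $\Hb_{i,j} = (\tilde\Hb^{(L)}_{i,j} + \bSigma^{(L)}_{i,j})/2$.

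The main obstacle is the layer-by-layer error propagation. Because each NTK entry is \emph{bilinear} in the forward covariances and the backward sensitivities, I must control products of two separately-concentrated random quantities, and the per-layer deviations compound multiplicatively across the $L$ layers. Tracking this requires that each layer's inner product be accurate to order $\epsilon^2$ rather than $\epsilon$ (since two such factors multiply), and since a single inner product achieves accuracy $\sim \sqrt{\log(1/\delta')/m}$ up to a polynomial-in-$L$ prefactor, demanding accuracy $\epsilon^2$ forces $m = \Omega\bigl(L^6 \log(1/\delta')/\epsilon^4\bigr)$; the union bound over layers and over the $K^2$ pairs with $\delta' = \delta/K^2$ then yields the stated $m = \Omega\bigl(L^6 \log(K^2 L/\delta)/\epsilon^4\bigr)$. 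A secondary subtlety worth verifying is that the doubling/antisymmetrization initialization used in Algorithm~\ref{alg:phe} (with $\Wb_l = (\Wb, 0; 0, \Wb)$ and $\Wb_L = (\wb^\top, -\wb^\top)$, together with the assumption $[\xb_i]_j = [\xb_i]_{j+d/2}$) leaves the limiting gradient Gram matrix consistent with the kernel of Definition~\ref{def:ntk}, so that the concentration target is indeed $\Hb$ and not a rescaled variant.
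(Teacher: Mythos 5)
The paper contains no proof of this statement: it is imported verbatim as Lemma~B.1 of \citet{zhou2020neural}, whose proof in turn invokes the Gram-matrix convergence theorem of \citet{arora2019exact} (their Theorem~3.1) entrywise. Your skeleton --- bound $\|\Gb^\top\Gb - \Hb\|_F \le K\max_{i,j}|[\Gb^\top\Gb]_{i,j} - \Hb_{i,j}|$, prove per-entry concentration at confidence level $\delta/K^2$, union bound over the $K^2$ pairs (and over layers, which is where the $L$ inside the logarithm comes from), and obtain the per-entry statement by a layerwise induction coupling forward Gram entries to $\bSigma^{(l)}$ and backward sensitivities to the $\EE[\phi'(u)\phi'(v)]$ factors of Definition~\ref{def:ntk} --- is exactly the architecture of that cited proof, so structurally you have reconstructed the intended argument. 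Your closing remark about the doubled initialization is also on target: the symmetrization forces $f(\xb_i;\btheta_0)=0$ while the factor $1/2$ in $\Hb = (\tilde\Hb^{(L)} + \bSigma^{(L)})/2$ is precisely what makes the limiting gradient Gram matrix match this initialization scale.

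Two quantitative steps in your sketch, however, would not compile into the stated width as written. First, your explanation of the $\epsilon^{-4}$ requirement is wrong: bilinearity is harmless, since a product of two bounded factors each accurate to $\epsilon$ is accurate to $O(\epsilon)$, and that reasoning alone would yield $m = \Omega(\epsilon^{-2})$ rather than $\Omega(\epsilon^{-4})$. The true source of the fourth power is the non-Lipschitzness of the ReLU derivative: $\phi'$ is an indicator, so the map from the covariance $\Bb^{(l)}_{i,j}$ to $\EE_{(u,v)\sim N(\zero, \Bb^{(l)}_{i,j})}[\phi'(u)\phi'(v)]$ is only $1/2$-H\"older --- a covariance perturbation of size $\epsilon^2$ can move this expectation by order $\epsilon$, because the Gaussian mass near the discontinuity scales like the square root of the perturbation. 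Hence the forward Gram entries must be tracked to accuracy $\epsilon^2$ to get accuracy $\epsilon$ in the backward factors, and it is this, not multiplicative compounding, that forces $m \gtrsim L^6\log(K^2L/\delta)/\epsilon^4$. Second, your appeal to ``$m$ (conditionally) i.i.d.\ bounded contributions'' plus Hoeffding is only literally valid at the first layer; for $l \ge 2$ the hidden units are neither exactly Gaussian with covariance $\Bb^{(l-1)}_{i,j}$ nor independent of the deviations accumulated below, and the cited proof handles this with a conditioning / fresh-randomness decomposition of each weight matrix together with an induction hypothesis that carries the accumulated covariance error forward, rather than by re-invoking i.i.d.\ concentration per layer. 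Neither repair changes your skeleton, but both are needed for the argument to actually produce the claimed $m = \Omega\bigl(L^6\log(K^2L/\delta)/\epsilon^4\bigr)$.
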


\begin{proof}[Proof of Lemma \ref{lemma:selfnormalize}]

Denote $\Gb = [\gb(\xb^1; \btheta_0)/\sqrt{m},\dots,\gb(\xb^{K}; \btheta_0)/\sqrt{m}] \in \RR^{p \times K}$, then we have
\begin{align}
    \log \frac{\det \Ab_T}{\det \lambda \Ib} 
    & = \log \det \bigg(\Ib + \sum\nolimits_{t=1}^T \gb(\xb_{t,a_t}; \btheta_0)\gb(\xb_{t,a_t}; \btheta_0)^\top/(m\lambda) \bigg)\notag \\
    & \leq \log \det \bigg(\Ib + \sum\nolimits_{t=1}^{T}
    \sum\nolimits_{i=1}^{K}\gb(\xb^i; \btheta_0)\gb(\xb^i; \btheta_0)^\top/(m\lambda) \bigg)\notag \\
    & = \log \det \bigg(\Ib + T\Gb\Gb^\top/\lambda \bigg) = \log \det \bigg(\Ib + T\Gb^\top\Gb/\lambda \bigg),\label{selfnormal:2}
\end{align}
where the inequality holds naively, the third equality holds since for any matrix $\Ab \in \RR^{p \times K}$, we have $\det (\Ib + \Ab\Ab^\top) = \det (\Ib + \Ab^\top\Ab)$.  We can further bound Eq \eqref{selfnormal:2} as follows:
\begin{align}
    \log \det \bigg(\Ib + T\Gb^\top\Gb/\lambda \bigg) &= \log \det \bigg(\Ib +T\Hb/\lambda + T(\Gb^\top\Gb - \Hb)/\lambda \bigg)\notag \\
    & \leq \log \det \bigg(\Ib +T\Hb/\lambda\bigg) + \la (\Ib + T\Hb/\lambda)^{-1}, T(\Gb^\top\Gb - \Hb)/\lambda \ra \notag \\
    & \leq \log \det \bigg(\Ib +T\Hb/\lambda\bigg) + \|(\Ib + T\Hb/\lambda)^{-1}\|_F\|\Gb^\top\Gb - \Hb\|_F \cdot T/\lambda\notag \\
    & \leq \log \det \bigg(\Ib +T\Hb/\lambda\bigg) + T\sqrt{K}\|\Gb^\top\Gb - \Hb\|_F\notag \\
    & \leq \log \det \bigg(\Ib +T\Hb/\lambda\bigg) +1\notag = \tilde d\log(1+TK/\lambda) +1,\label{selfnormal:3}
\end{align}
where the first inequality holds due to the concavity of $\log \det (\cdot)$, the second inequality holds due to the fact that $\la \Ab, \Bb\ra \leq \|\Ab\|_F \|\Bb\|_F$, the third inequality holds due to the facts that $\Ib + \Hb/\lambda \succeq \Ib$, $\lambda \geq 1$ and $\|\Ab\|_F \leq \sqrt{K}\|\Ab\|_2$ for any $\Ab \in \RR^{K \times K}$, the fourth inequality holds by Lemma \ref{lemma:ntkconverge} with the required choice of $m$, 
the fifth inequality holds by the definition of effective dimension in Definition \ref{def:effective_dimension}. 
This completes the proof.
\end{proof}

\end{document}